\newtheorem{theorem}{Theorem}
\newtheorem{lemma}{Lemma}
\newtheorem{proposition}{Proposition}
\begin{document}

\title{Transformation Decoupling Strategy based on Screw Theory for Deterministic Point Cloud Registration with Gravity Prior}
%
%
%
%

\author{Xinyi~Li$^{\ast}$,
        Zijian~Ma$^{\ast}$,
        Yinlong~Liu$^{\sharp}$,
        Walter~Zimmer,
        Hu~Cao,
        Feihu~Zhang,
        and~Alois~Knoll,~\IEEEmembership{Fellow,~IEEE}
\IEEEcompsocitemizethanks{
\IEEEcompsocthanksitem Xinyi Li, Walter Zimmer, Hu Cao, and Alois Knoll are with Chair of Robotics, Artificial Intelligence and Real-time Systems, TUM School of Computation, Information and Technology, Technical University of Munich, Munich 85748, Germany.\\
E-mail: super.xinyi@tum.de; walter.zimmer@tum.de; hu.cao@tum.de; knoll@in.tum.de\\
\IEEEcompsocthanksitem Zijian Ma is with TUM School of Engineering and Design, Technical University of Munich, Munich 85748, Germany.\\
E-mail: zijian.ma@tum.de\\
\IEEEcompsocthanksitem Yinlong Liu$^{\sharp}$ (corresponding author) is with State Key Laboratory of Internet of Things for Smart City (SKL-IOTSC), University of Macau, Macau 999078, China.\\
E-mail: YinlongLiu@um.edu.mo\\
\IEEEcompsocthanksitem Feihu Zhang is with School of Marine Science and Technology, Northwestern Polytechnical University, Xi’an 710072, China.\\
E-mail: feihu.zhang@nwpu.edu.cn\\
\IEEEcompsocthanksitem $^{\ast}$These authors contributed equally to this work.\\
}
\thanks{This work has been submitted to the IEEE for possible publication. Copyright may be transferred without notice, after which this version may no longer be accessible.}}

\markboth{Journal of \LaTeX\ Class Files,~Vol.~14, No.~8, August~2015}%
{Shell \MakeLowercase{\textit{\textit{et al.}}}: A Novel Transformation Decoupling Strategy for Robust Point Cloud Registration with Gravity Prior }
%



\IEEEtitleabstractindextext{%
\begin{abstract}
Point cloud registration is challenging in the presence of heavy outlier correspondences. This paper focuses on addressing the robust correspondence-based registration problem with gravity prior that often arises in practice. The gravity directions are typically obtained by inertial measurement units (IMUs) and can reduce the degree of freedom (DOF) of rotation from 3 to 1. We propose a novel transformation decoupling strategy by leveraging screw theory. This strategy decomposes the original 4-DOF problem into three sub-problems with 1-DOF, 2-DOF, and 1-DOF, respectively, thereby enhancing the computation efficiency. Specifically, the first 1-DOF represents the translation along the rotation axis and we propose an interval stabbing-based method to solve it. The second 2-DOF represents the pole which is an auxiliary variable in screw theory and we utilize a branch-and-bound method to solve it. The last 1-DOF represents the rotation angle and we propose a global voting method for its estimation. The proposed method sequentially solves three consensus maximization sub-problems, leading to efficient and deterministic registration. In particular, it can even handle the correspondence-free registration problem due to its significant robustness. Extensive experiments on both synthetic and real-world datasets demonstrate that our method is more efficient and robust than state-of-the-art methods, even when dealing with outlier rates exceeding 99\%. 
\end{abstract}

\begin{IEEEkeywords}
Rigid point cloud registration, screw theory, gravity direction, robust estimation, consensus maximization, branch-and-bound, interval stabbing, correspondence-based registration, correspondence-free registration.
\end{IEEEkeywords}}

\maketitle

\IEEEdisplaynontitleabstractindextext

%
\IEEEpeerreviewmaketitle

\IEEEraisesectionheading{\section{Introduction}\label{introduction}}
\IEEEPARstart{G}{iven} 3D source and target point clouds, rigid point cloud registration is estimating the best transformation in $\mathbb{SE}(3)$ that aligns two point clouds. It is also known as surface matching, an essential problem of computer vision and robotics. Point cloud registration has extensive applications in 3D reconstruction\cite{blais1995registering,guo2020deep}, pose estimation\cite{10070382}, object recognition\cite{guo20143d,xia2021soe}, and robot localization\cite{10091912,zhuang2022biologically}, etc.

Currently, researchers leverage prior information and reasonable assumptions to assist point cloud registration, such as planar motion, axis-fixed rotation, prior known gravity directions, etc\cite{cai2019practical,dong2020registration,lim2022quatro,jiao2021deterministic}. These pieces of information serve to reduce the dimensionality of the registration problem, particularly in terms of the parameters to be optimized, thereby enhancing algorithm efficiency. Among these prior pieces of information, the \textit{gravity direction} (a.k.a. \textit{vertical direction}) is extensively studied and serves as a shared direction to facilitate relative pose estimation\cite{ding2020homography,ding2021globally,liu2021globally}, absolute pose estimation\cite{horanyi2017multiview,lecrosnier2019camera,liu2020globally,liu2023absolute}, SLAM\cite{svarm2016city,ornhag2022trust,9981916}, and panoramic stitching\cite{ding2021minimal,barath2021image}. With the aid of gravity direction, the relative rotation is reduced to 1 degree of freedom (DOF), thus the 6-DOF transformation in point cloud registration is reduced to only 4-DOF. In practice, modern autonomous systems, e.g., self-driving systems\cite{xu2022fast}, commonly are equipped with inertial measurement units (IMUs), which can provide high precision gravity directions. Alternatively, gravity directions can be obtained by vanishing point detection techniques from some structural scenarios\cite{almansa2003vanishing,liu2020globallypami,10124374}. In this paper, we focus on the general 4-DOF point cloud registration problem in which gravity directions are known priors. In particular, the registration for terrestrial LiDAR scans is a typical 4-DOF problem, since the built-in tilt compensator keeps the rotation axis fixed\cite{cai2019practical,dong2020registration}.

Using a set of putative point-to-point correspondences is a popular paradigm for point cloud registration, which is also known as correspondence-based registration\cite{9373914}. Nonetheless, due to the limited performance of current 3D matching methods and the challenges posed by partial overlap, noisy data, and structural duplication within real-world point clouds, the putative correspondences commonly contain a significant proportion of outliers (often $>95\%$)\cite{bustos2017guaranteed,10091912}. Random sample consensus (RANSAC)\cite{fischler1981random} is the most popular consensus maximization technique to suppress the outlier correspondences. Nonetheless, because of its randomized nature, RANSAC is inherently non-deterministic and only yields satisfactory results with a certain probability\cite{le2019deterministic}. Moreover, the running time of RANSAC grows exponentially with the outlier rate\cite{bustos2017guaranteed}. To achieve highly outlier-robust registration, an increasing number of globally optimal and deterministic algorithms have been proposed. Most of the global methods \cite{olsson2008branch,hartley2009global,bazin2012globally,7368945,7381673,campbell2016gogma,cai2019practical} use branch-and-bound (BnB) to systematically search the entire solution domain. However, the time complexity of BnB is exponential to the dimensionality of the optimization problem in worst-case. Consequently, global methods are best appropriate for problems with a small number of correspondences and/or low dimensionality. More recently, a popular strategy for enhancing algorithm efficiency is \textit{transformation decoupling}\cite{straub2017efficient,liu2018efficient,li2018fast,yang2020teaser,9485090,jiao2021deterministic,chen2022deterministic,li2023fast,liu2023absolute}, a technique that allows addressing the registration problem within lower-dimensional parameter spaces. In contrast, most existing methods address the general 6-DOF registration problem, with relatively fewer studies concentrating on the 4-DOF registration problem. For instance, established methods commonly decouple the original 6-DOF problem into two separate 3-DOF sub-problems\cite{straub2017efficient,liu2018efficient,li2018fast,yang2020teaser,9485090,chen2022deterministic}. Therefore, considering the dimensionality of the registration problem, the algorithm efficiency still has potential room for improvement.

In this paper, we reformulate the registration problem by \textit{screw theory}\cite{ball1876theory} and propose a novel transformation decoupling strategy accordingly. This strategy decouples the original 4-DOF registration problem into three sub-problems with 1-DOF, 2-DOF, and 1-DOF respectively. Concretely, the first 1-DOF refers to the translation along the rotation axis, and we propose a polynomial-time method to solve it based on interval stabbing. The second 2-DOF represents the pole which is an auxiliary variable in screw theory. We reformulate it as a linear model fitting problem by exploring geometric constraints and utilize a BnB-based algorithm to search for the globally optimal solution. The last 1-DOF refers to the rotation angle and we propose an efficient global voting method for its estimation. The proposed three-stage search method sequentially solves three sub-problems in low-dimensional spaces to achieve efficient and deterministic registration. It fills the gap between efficient but non-deterministic heuristics (e.g., RANSAC) and deterministic but exhaustive global algorithms (e.g., BnB). We explore the scalability of the proposed method through extreme synthetic experiments. For instance, the proposed method accurately solves the registration problem with $95\%$ outlier rate and $10^5$ correspondences in $0.1$ seconds (see Table~\ref{high_corr}), while one of the state-of-the-art (SOTA) deterministic global methods, FMP+BnB\cite{cai2019practical}, takes about $648$ seconds. In another more extreme case, the proposed method solves the registration problem with $95\%$ outlier rate and $10^6$ correspondences in $2$ seconds, while another SOTA method GROR\cite{yan2022new} takes at least $1800$ seconds.

Additionally, if the point correspondences are unknown in some practical scenarios, we have to address the more challenging problem of simultaneous pose and correspondence registration (SPCR) (a.k.a. correspondence-free registration). Iterative closest point (ICP)\cite{icp} and its variants\cite{zhang2021fast,li2022robust} are the most widely used methods for SPCR. However, ICP only converges to the local minimum and highly relies on initial parameters. Therefore, in order to obtain robust solutions, we extend our proposed deterministic method to solve the SPCR problem. Different from existing methods\cite{fredriksson2016optimal,yang2020teaser,liu2023absolute}, we solve the SPCR problem without the all-to-all correspondence assumption (i.e., associating each point in the source point cloud with every point in the target point cloud), avoiding an extreme number of outliers and thus improving efficiency.

In conclusion, our main contributions are as follows:
\begin{itemize}
\item By reformulating the point cloud registration problem from the perspective of screw theory, we accordingly propose a novel transformation decoupling strategy with the aid of known gravity directions. It decouples the 4-DOF registration problem into three sub-problems with 1-DOF, 2-DOF, and 1-DOF respectively, thus significantly improving the algorithm efficiency.

\item To achieve highly robust registration, we propose an efficient and deterministic three-stage search strategy for the decoupled sub-problems, which contains interval stabbing, BnB, and global voting techniques. The proposed method has a good robustness-efficiency trade-off.

\item We extend the proposed method to solve the challenging SPCR problem without the all-to-all correspondence assumption. It avoids the potential hard combinatorial problem when input point clouds are of large size and thus is efficient.

\end{itemize}

\section{Related Work}
Commonly, existing studies for rigid registration problem can be divide into two groups based on whether the correspondences are known or not, i.e. correspondence-based registration and correspondence-free registration. 

\subsection{Correspondence-Based Registration}
\textbf{Feature matching.} Correspondence-based registration is the 3D extension of image matching, which comprises two major steps: 1) matching correspondences between point clouds by 3D keypoints and feature descriptors, and 2) estimating the transformation from candidate correspondences. In the matching process, 3D handcrafted keypoints\cite{zhong2009intrinsic,sipiran2011harris} or learning-based keypoints\cite{suwajanakorn2018discovery,yew20183dfeat,lu2020rskdd} are first extracted from raw point clouds. Then, the keypoints are encoded into high-dimensional feature vectors by 3D feature descriptors\cite{guo2016comprehensive}. Compared to hand-crafted 3D descriptors\cite{rusu2009fast,tombari2010unique}, learning-based descriptors\cite{gojcic2019perfect,choy2019fully,huang2021predator,wang2022you,ao2023buffer} exhibit outstanding precision and have achieved more attention in recent years. Finally, the putative correspondences between point clouds are established by computing pairwise similarity, such as utilizing the nearest neighbor matcher\cite{lowe2004distinctive}. Recently, learning-based matchers\cite{yu2021cofinet,qin2022geometric,wang2023roreg} are also investigated to improve correspondence quality, which exclude keypoint detection. Despite these methods achieving notable enhancements in performance, they encounter challenges in obtaining a totally outlier-free correspondence set, primarily due to the existence of repetitive structures, noisy data, and point density variations in real applications. Therefore, the outlier-robust estimation technique is highly desirable.

\textbf{Outlier-robust estimation.} In practical scenarios, several well-established paradigms have been extensively studied to suppress the outlier correspondences and achieve robust estimation, such as M-estimation\cite{zhou2016fast,yang2020teaser}, outlier removal\cite{albarelli2010game,bustos2017guaranteed,cai2019practical,9373914,yan2022new,10091912}, and consensus maximization\cite{fischler1981random,chum2003locally,le2019deterministic,barath2021graph,barath2021marginalizing,chen2023sc,zhang20233d}. Zhou \textit{et al.}\cite{zhou2016fast} proposed a fast global registration (FGR) algorithm, which is a representative M-estimation method. It formulates the registration problem as minimization of the Geman-McClure objective function and adapts graduated non-convexity (GNC) to solve the optimization problem. Though efficient, it is easy to generate incorrect solutions at a high outlier rate. The pioneering work of outlier removal is~\cite{albarelli2010game}, which proposed an inlier selection method based on the game-theoretic framework. However, this method is stochastic and has no global optimality guarantees. Parra and Chin\cite{bustos2017guaranteed} first proposed a guaranteed outlier removal (GORE) method, which leverages geometrical bounds to prune outliers and guarantees that eliminated correspondences are not inliers. In order to improve the efficiency, Cai \textit{et al.}\cite{cai2019practical} presented an extension of GORE for the 4-DOF terrestrial LiDAR registration problem, where the assumption of axis-fixed rotation is used. Motivated by GORE, Li \textit{et al.}\cite{9373914,10091912} proposed different variants by seeking new bounds, which could also reduce the computational cost. More recently, Yan \textit{et al.}\cite{yan2022new} presented an outlier removal strategy based on the reliability of the correspondence graph to increase efficiency. 

RANSAC and its variants\cite{fischler1981random,chum2003locally,lebeda2012fixing,barath2021graph,barath2021marginalizing} are widely used consensus maximization methods for robust estimation. Nonetheless, RANSAC-based methods are non-deterministic and generate a correct solution only with a certain probability due to the essence of random sampling\cite{le2019deterministic}. Therefore, many deterministic methods have been proposed, most of which are based on the globally optimal BnB framework\cite{bazin2012globally,bustos2017guaranteed,cai2019practical,chen2022deterministic,li2023fast}. BnB\cite{morrison2016branch} is a powerful global optimization algorithm framework for solving non-convex and NP-hard problems. However, the computational complexity of BnB is exponential to the dimension of the parameter space in worst-case. Existing methods\cite{bazin2012globally,bustos2017guaranteed,cai2019practical} commonly are computationally demanding since they jointly search the globally optimal solution in a high-dimensional parameter space. Hence, there is potential room for efficiency enhancement by considering the dimensionality of the registration problem.

\textbf{Transformation decoupling.} At present, there has been an increasing trend to apply transformation decoupling technique\cite{horn1987closed,arun1987least,yang2020teaser,chen2022deterministic,li2023fast} for accelerating correspondence-based registration. It can divide the original registration problem into multiple sub-problems, effectively reducing its dimensionality. In earlier research, transformation decoupling was integrated into closed-form solvers\cite{horn1987closed,arun1987least} to separately estimate rotation and translation. Recently, Yang \textit{et al.}\cite{yang2020teaser} proposed translation invariant measurements (TIM) to decouple the initial 6-DOF transformation into a 3-DOF rotation and 3-DOF translation. They presented an adaptive voting to address the translation estimation and employed GNC to solve the rotation sub-problem. Chen \textit{et al.}\cite{chen2022deterministic} introduced a decomposition strategy that also decouples the 6-DOF problem into two 3-DOF sub-problems. However, in the first sub-problem, it tackles the 2-DOF rotation and the 1-DOF translation, with the remaining DOFs determined in the second sub-problem. These two 3-DOF sub-problems are sequentially addressed by BnB. In our previous work\cite{li2023fast}, we presented a decoupling strategy for the registration problem with gravity prior. Leveraging known gravity directions, we decouple the joint transformation into a separate 3-DOF translation and 1-DOF rotation. The translation estimation is solved through BnB, while rotation estimation is accomplished via a global voting method. Decoupling-based methods are generally faster than those without decoupling. Motivated by this observation, we aim to develop a novel transformation decoupling scheme that further reduces the parameter space, particularly from the perspective of screw theory. 

In addition to the aforementioned geometric-based paradigms, recent studies have also utilized deep learning techniques for correspondence-based registration, such as 3DRegNet\cite{pais20203dregnet}, DGR\cite{choy2020deep}, PointDSC\cite{bai2021pointdsc}, DetarNet\cite{chen2022detarnet}, and VBReg\cite{jiang2023robust}. Nonetheless, a large amount of training data is necessary for them, which may not always be readily accessible in real-world applications. Therefore, this paper places its emphasis on geometric-based methods.

\subsection{Simultaneous Pose and Correspondence Registration}
Unlike correspondence-based registration, correspondence-free registration utilizes raw point cloud data as input instead of features and jointly estimates both the correspondences and transformation.

\textbf{Local registration.} ICP \cite{icp} is the most widely used SPCR method, which iteratively alternates between identifying correspondences and refining the transformation. Many variants of ICP have been proposed to enlarge the convergence basin and improve the robustness to outliers, such as EM-ICP\cite{granger2002multi}, LM-ICP\cite{fitzgibbon2003robust}, and trimmed-ICP\cite{chetverikov2005robust}. However, ICP and its variants are highly sensitive to the initialization and are prone to converge to local minima. By representing the point cloud with Gaussian mixture models (GMMs)\cite{jian2005robust}, some other approaches\cite{jian,CPD} convert the registration problem to a probability distribution alignment problem. Although improved performance has been achieved by GMM-based methods, they still converge only to local minima.

\textbf{Global registration.} To avoid local minima, an increasing number of global methods have been investigated, most of which are based on BnB, such as\cite{li20073d,7368945,7381673,campbell2016gogma,straub2017efficient,liu2018efficient,campbell2018globally,li2018fast,9485090}. Within these approaches, Go-ICP~\cite{7368945} can provide the globally optimal solution to the 6-DOF SPCR problem by nested BnB. A trimming strategy in GO-ICP was introduced to alleviate the effects of occlusion and partial overlap. Parra \textit{et al.}~\cite{7381673} presented a tighter bound function based on stereographic projections for fast rotation search and extended the proposed method to 6-DOF by nested BnB. Campbell \textit{et al.}~\cite{campbell2016gogma} introduced a GMM-based robust objective function for SPCR and employed BnB-based approach to find the globally optimal solution. Nonetheless, these methods are computationally expensive since they search for the optimal solution in the 6-dimensional solution domain. 

\textbf{Transformation decoupling.} To enhance efficiency, many studies also introduce the transformation decoupling strategy\cite{straub2017efficient,li2018fast,liu2018efficient,9485090} into the correspondence-free registration problem. For instance, Straub \textit{et al.}\cite{straub2017efficient} utilized the translation-invariant surface normal distributions for transformation decoupling, introducing a two-stage BnB algorithm for the sequential estimation of rotation and translation. Similarly, Li \textit{et al.}\cite{li2018fast} utilized translation invariant vectors (TIV) to decompose the 6-DOF transformation search into two sub-problems, first searching for rotation and then for translation. Liu \textit{et al.}\cite{liu2018efficient} introduced rotation invariant features (RIF) to decouple the joint 6-DOF search into two sequential search sub-problems for translation and rotation. However, the use of these pairwise features leads to a quadratic increase in input data, constraining efficiency gains. In contrast, our study introduces a novel transformation decoupling strategy without inflating the input data size. 

Instead of combining feature matching with robust estimation, some learning-based end-to-end registration methods also have been investigated, such as PointNetLK\cite{aoki2019pointnetlk}, DCP\cite{wang2019deep}, RPM-Net\cite{yew2020rpm}, DeepGMR\cite{yuan2020deepgmr}, and RegTR\cite{yew2022regtr}. Notably, learning-based methods require additional training procedures and their generalization capabilities are not always reliable\cite{yang2023mutual}.

\section{Problem Formulation}\label{Problem Formulation}
\subsection{Inlier Set Maximization with Screw Theory}
Typically, outlier-robust rigid point cloud registration can be formulated as a consensus maximization (a.k.a. \textit{inlier set maximization}) problem, where the cardinality of the inlier set is to be maximized. We assume that the set of putative correspondences $\mathcal{C}=\big\{(\bm{p}_i,\bm{q}_i)\big\}_{i=1}^N$ is extracted between the source point cloud \(\mathcal{P}\) and the target point cloud \(\mathcal{Q}\), where \(\bm{p}_i,\bm{q}_i\in\mathbb{R}^3\). Considering the existence of noise, we introduce a threshold $\epsilon$ for the identification of inliers. Thus the inlier set maximization problem for rigid registration is commonly organized as below,
\begin{equation}
    \begin{aligned}
    \underset {\bm{R},\bm{t},\mathcal{I}\subseteq\mathcal{H}} {\max} \ &\left|\mathcal{I}\right| \\
    \text{s.t.}\quad &\left\|\bm{Rp}_i+\bm{t}-\bm{q}_i\right\|\leq\epsilon,\ \forall i\in\mathcal{I}, \label{ori}
    \end{aligned}
\end{equation}
where \(\bm{R}\in \mathbb{SO}(3)\) is the rotation matrix, \(\bm{t}\in\mathbb{R}^3\) is the translation vector, $\mathcal{H}=\{1,\dots,N\}$ is the set of indices for $\mathcal{C}$, $\mathcal{I}$ represents the inlier set, $\left|\cdot\right|$ is the cardinality of a set, and $\left\|\cdot\right\|$ is the $L_2$-norm. This optimization problem aims to find the optimal $\bm{R}^*$ and $\bm{t}^*$ to maximize the cardinality of the inlier set. This problem is inherently a 6-DOF optimization problem since both $\bm{R}$ and $\bm{t}$ minimally require three parameters to be defined.

\textit{Screw theory}\cite{ball1876theory} is a widely used important tool in robot mechanics\cite{lynch2017modern} and computational geometry\cite{de1997computational}. It includes a fundamental theorem, which is known as \textit{Chasles' Theorem}\cite{hunt1978kinematic,bottema1990theoretical}, as shown in Theorem~\ref{theorem1}. In this paper, we will reformulate the rigid point cloud registration problem from the perspective of screw theory.
\begin{figure}
\centering
\includegraphics[width=0.7\columnwidth]{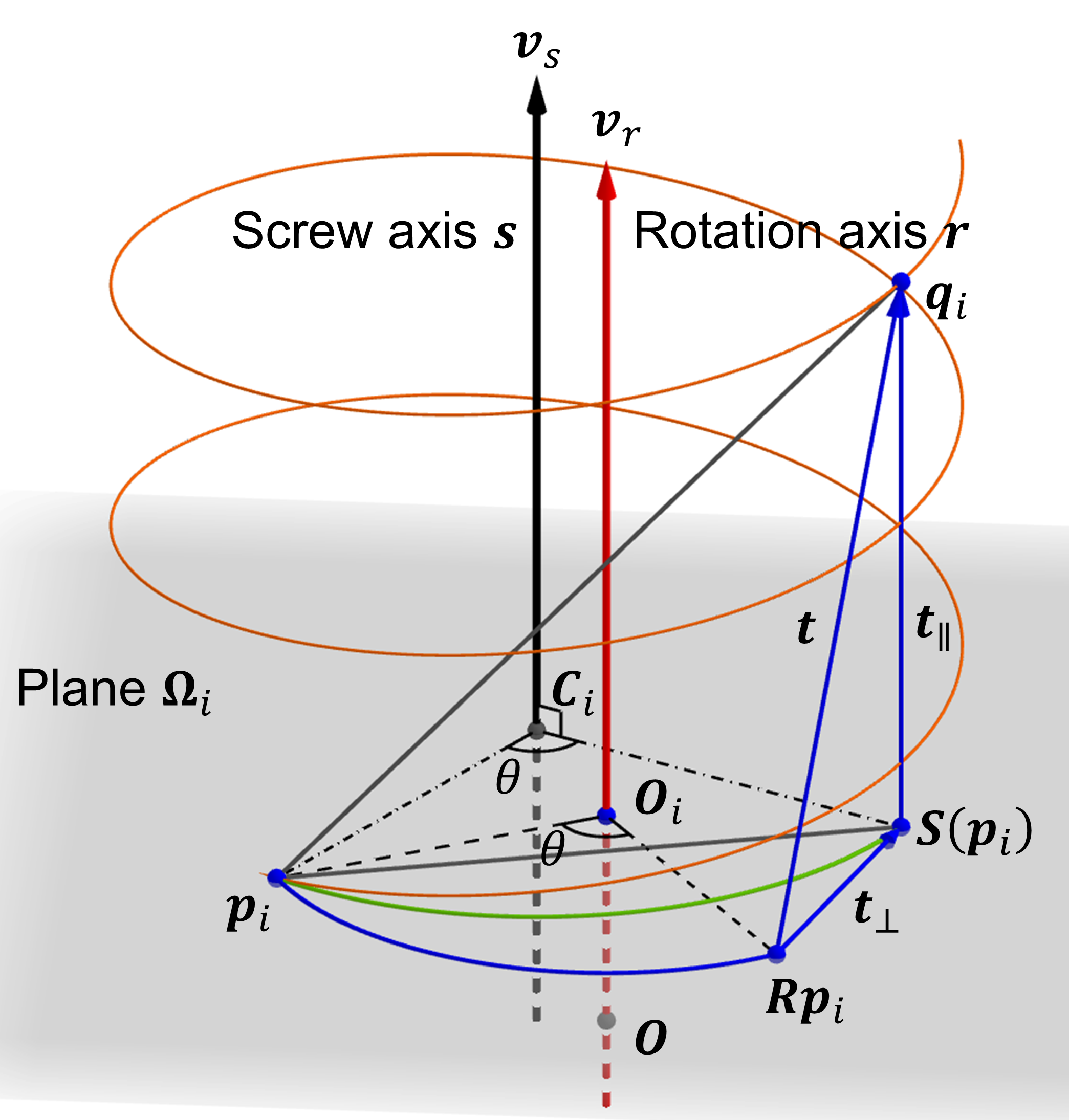}
\caption{Schematic of Chasles' Theorem and screw transformation. Specifically, the Euclidean transformation of point $\bm{p}_i$, i.e., $\bm{Rp}_i+\bm{t}$ can be represented by a screw rotation $\bm{S}(\bm{p}_i)$ and a screw translation $\bm{t}_{\parallel}$. The 
screw rotation $\bm{S}(\bm{p}_i)$ is to rotate $\bm{p}_i$ about a screw axis $\bm{s}$, which is through the rotation center point $\bm{C}_i$ and is parallel to the rotation axis $\bm{r}$. Notably, $\bm{p}_i$, $\bm{S}(\bm{p}_i)$ and $\bm{C}_i$ are in the same plane $\bm{\Omega}_i$. }
\label{fig_1}
\end{figure}

\noindent\fbox{\parbox{0.98\linewidth}{
\begin{theorem}[Chasles' Theorem\cite{hunt1978kinematic,bottema1990theoretical}]\label{theorem1}
Each Euclidean transformation in three-dimensional space has a screw axis, and the transformation can be decomposed into a rotation about and a translation along this screw axis.
\end{theorem}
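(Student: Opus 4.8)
The plan is to prove Chasles' Theorem constructively, exhibiting the screw axis explicitly and then verifying that the given Euclidean transformation $\bm{x}\mapsto\bm{R}\bm{x}+\bm{t}$ factors as a rotation about that axis composed with a translation along it. First I would dispose of the degenerate case $\bm{R}=\bm{I}$: then the transformation is a pure translation and any line parallel to $\bm{t}$ serves as screw axis (with zero rotation). Assuming $\bm{R}\neq\bm{I}$, let $\bm{r}$ be the unit eigenvector of $\bm{R}$ with eigenvalue $1$ (this exists since every element of $\mathbb{SO}(3)$ is a rotation about some axis — I would cite the standard axis--angle/Euler theorem, or prove it quickly from $\det(\bm{R}-\bm{I})=0$ for $\bm{R}\in\mathbb{SO}(3)$ with $\bm{R}\neq\bm{I}$). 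The idea is to split $\bm{t}$ into the component $\bm{t}_{\parallel}=(\bm{t}\cdot\bm{r})\bm{r}$ along $\bm{r}$ and the perpendicular component $\bm{t}_{\perp}=\bm{t}-\bm{t}_{\parallel}$, and to show that the perpendicular part can be absorbed by choosing the axis to pass through a suitable point rather than the origin.

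The key computational step is then to find a point $\bm{c}$ such that conjugating by the translation that moves $\bm{c}$ to the origin turns $\bm{x}\mapsto\bm{R}\bm{x}+\bm{t}$ into $\bm{x}\mapsto\bm{R}\bm{x}+\bm{t}_{\parallel}$. Concretely, I would require $\bm{R}(\bm{x}-\bm{c})+\bm{c}+\bm{t}_{\parallel}=\bm{R}\bm{x}+\bm{t}$ for all $\bm{x}$, which reduces to the linear equation $(\bm{I}-\bm{R})\bm{c}=\bm{t}-\bm{t}_{\parallel}=\bm{t}_{\perp}$. The crux is solvability: $\bm{I}-\bm{R}$ is singular (its kernel is exactly $\mathrm{span}(\bm{r})$), but I would show its range is precisely $\bm{r}^{\perp}$, the plane orthogonal to $\bm{r}$. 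This follows because $\bm{R}$ restricted to $\bm{r}^{\perp}$ is a planar rotation by the angle $\theta\neq 0$, so $\bm{I}-\bm{R}$ is invertible on that two-dimensional subspace; equivalently, $\mathrm{range}(\bm{I}-\bm{R})=\ker(\bm{I}-\bm{R}^{\top})^{\perp}=\mathrm{span}(\bm{r})^{\perp}$. Since $\bm{t}_{\perp}\in\bm{r}^{\perp}$ by construction, a solution $\bm{c}$ exists (unique up to adding multiples of $\bm{r}$, which just reparametrizes points along the axis). The screw axis is then the line $\{\bm{c}+\lambda\bm{r}:\lambda\in\mathbb{R}\}$, and along it the transformation acts as $\bm{x}\mapsto\bm{R}(\bm{x}-\bm{c})+\bm{c}+\bm{t}_{\parallel}$: a rotation by $\theta$ about the axis followed by the translation $\bm{t}_{\parallel}$ parallel to the axis. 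I would close by noting the two operations commute (the rotation fixes $\bm{r}$, $\bm{t}_{\parallel}\parallel\bm{r}$), so the order is immaterial, and by observing that this recovers the geometric picture of Fig.~\ref{fig_1} with $\bm{C}_i$ the foot of the perpendicular from $\bm{p}_i$ onto the axis.

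I expect the main obstacle to be the range characterization of $\bm{I}-\bm{R}$ — i.e., making rigorous that the perpendicular translation component always lies in the image of $\bm{I}-\bm{R}$ so that the axis-locating equation is solvable. Everything else is either a standard structural fact about $\mathbb{SO}(3)$ (existence of the rotation axis $\bm{r}$) or routine linear algebra, so the write-up should foreground the splitting $\bm{t}=\bm{t}_{\parallel}+\bm{t}_{\perp}$ and the solvability of $(\bm{I}-\bm{R})\bm{c}=\bm{t}_{\perp}$, and relegate the degenerate $\bm{R}=\bm{I}$ case to a remark. A secondary subtlety worth a sentence is uniqueness: the screw axis, the rotation angle, and the pitch (the magnitude of $\bm{t}_{\parallel}$) are uniquely determined, even though the point $\bm{c}$ on the axis is not, so the decomposition is essentially unique when $\bm{R}\neq\bm{I}$.
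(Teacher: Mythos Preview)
Your proof is correct and follows the standard constructive argument for Chasles' Theorem. However, the paper does not actually prove Theorem~\ref{theorem1}: it is stated as a classical result with citations to \cite{hunt1978kinematic,bottema1990theoretical} and then used as a black box to motivate the screw-transformation reformulation in Eq.~(2) and the subsequent decoupling strategy. So there is no ``paper's own proof'' to compare against.

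That said, your write-up is exactly the argument one would expect and is well-organized. The splitting $\bm{t}=\bm{t}_{\parallel}+\bm{t}_{\perp}$ and the solvability of $(\bm{I}-\bm{R})\bm{c}=\bm{t}_{\perp}$ are indeed the heart of the matter, and your range characterization of $\bm{I}-\bm{R}$ via the restriction of $\bm{R}$ to $\bm{r}^{\perp}$ (a nontrivial planar rotation, hence $\bm{I}-\bm{R}$ invertible there) is the cleanest way to handle it. One minor note: your remark that the decomposition is ``essentially unique when $\bm{R}\neq\bm{I}$'' is right for the axis direction and pitch, but the axis itself admits a sign ambiguity in $\bm{r}$ (with a corresponding sign flip in $\theta$), which is harmless but worth a half-sentence if you care about uniqueness claims. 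Also, in the degenerate case $\bm{R}=\bm{I}$ with $\bm{t}=\bm{0}$ (the identity), every line is a screw axis, so uniqueness fails entirely there---again harmless for the paper's purposes.
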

}}

Chasles' Theorem indicates that, the six parameters of a Euclidean transformation contain the four independent components that define the \textit{screw axis}, together with the rotation angle about and translation along this screw axis\cite{ball1876theory}. Typically, the six parameters defining the Euclidean transformation can also be given by three Euler angles of rotation and three translation components. In screw theory, the Euclidean transformation is also known as the \textit{screw transformation}, the rotation about the screw axis is known as the \textit{screw rotation}, and the translation along the screw axis is known as the \textit{screw translation}. Mathematically, according to Chasles' Theorem, the screw transformation of $\bm{p}_i$ can be represented by
\begin{subequations}
\begin{align}
\bm{T}(\bm{p}_i)&=\bm{Rp}_i+\bm{t}\\
&=(\bm{R}\bm{p}_i+\bm{t}_{\perp})+\bm{t}_{\parallel}\\
&=\bm{S}(\bm{p}_i)+\bm{t}_{\parallel}\label{screw_},
\end{align}
\end{subequations}
where \(\bm{T}\in \mathbb{SE}(3)\) denotes the Euclidean transformation, $\bm{S}(\cdot)$ denotes the screw rotation, and $\bm{t}_{\perp}$ and $\bm{t}_{\parallel}$ are the translation components perpendicular and parallel to the screw axis, respectively. Thus the screw rotation $\bm{S}(\cdot)$ contains the rotation $\bm{R}$ and the translation $\bm{t}_{\perp}$. The screw translation is the translation component $\bm{t}_{\parallel}$. 

The geometrical interpretation of Chasles' Theorem and screw transformation is shown in Fig.~\ref{fig_1}. We define the unit-norm constrained orientation vectors of the screw and rotation axis as $\bm{v}_s$ and $\bm{v}_r$. Specifically, the screw axis $\bm{s}$ can be defined by the orientation vector $\bm{v}_r$ and a unique point $\bm{C}_i\in\mathbb{R}^3$ in the \textit{rotation plane} $\bm{\Omega}_i$ that is perpendicular to $\bm{v}_r$ and through the point $\bm{p}_i$. Thus the screw axis is naturally parallel to the rotation axis, i.e., $\bm{v}_s$ is parallel to $\bm{v}_r$. In three-dimensional space, the screw rotation $\bm{S}(\cdot)$ can be defined as 
\begin{equation}
\bm{S}(\bm{p}_i)=\bm{R}(\bm{p}_i-\bm{C}_i)+\bm{C}_i.
\end{equation} 
For this unique point $\bm{C}_i$, we can obtain
\begin{equation}
\bm{S}(\bm{C}_i)=\bm{C}_i.
\end{equation}
Therefore, according to the definition of rotation, the rigid motion $\bm{S}(\cdot)$, which can keep a point fixed (i.e., $\bm{C}_i$), is a rotational motion, called \textit{screw rotation}. Both the orientation of the screw axis and the rotation angle of the screw rotation are identical to that of the original rotation $\bm{R}$. The only difference is that the rotation axis $\bm{r}$ is through the origin $\bm{O}$ by default, while the screw axis $\bm{s}$ is through the point $\bm{C}_i$. 

In general, the 6-DOF rigid registration problem (\ref{ori}) under screw theory can be rewritten as
\begin{equation}
    \begin{aligned}
    \underset {\bm{R},\bm{C}_i,\bm{t}_{\parallel},\mathcal{I}\subseteq\mathcal{H}} {\max} \ &\left|\mathcal{I}\right| \\
    \text{s.t.}\quad &\left\|\bm{S}(\bm{p}_i)+\bm{t}_{\parallel}-\bm{q}_i\right\|\leq\epsilon,\ \forall i\in\mathcal{I}.\label{screw}
    \end{aligned}
\end{equation}
Solving this optimization problem thus amounts to searching for $\bm{R}$ (a 3-DOF problem), $\bm{C}_i$ (a 2-DOF problem), and $\bm{t}_{\parallel}$ (a 1-DOF problem).

On the other hand, we can also obtain the following theorem about the planar specialization of Chasles'Theorem.

\noindent\fbox{\parbox{0.98\linewidth}{
\begin{theorem}[Planar Chasles' Theorem\cite{hunt1978kinematic,bottema1990theoretical}]\label{theorem2}
When a Euclidean transformation specializes to a planar transformation in two-dimensional space, the screw axis becomes a pole, and the planar transformation can be represented by a rotation about this pole.
\end{theorem}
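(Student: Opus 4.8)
The plan is to prove Theorem~\ref{theorem2} by specializing the three-dimensional screw decomposition of Theorem~\ref{theorem1} to the planar case, and then to cross-check the result with a direct fixed-point computation in $\mathbb{R}^{2}$. First I would embed the two-dimensional Euclidean transformation $\bm{x}\mapsto\bm{R}_{2}\bm{x}+\bm{t}_{2}$, with $\bm{R}_{2}\in\mathbb{SO}(2)$ a rotation by angle $\theta$ and $\bm{t}_{2}\in\mathbb{R}^{2}$, into $\mathbb{SE}(3)$ as the motion acting on the plane $z=0$, i.e. with rotation block $\mathrm{diag}(\bm{R}_{2},1)$ and translation $(\bm{t}_{2}^{\top},0)^{\top}$. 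The orientation of the rotation axis is then $\bm{v}_{r}=(0,0,1)^{\top}$, perpendicular to the plane, so the screw translation satisfies $\bm{t}_{\parallel}=(\bm{t}\cdot\bm{v}_{r})\bm{v}_{r}=\bm{0}$. Substituting $\bm{t}_{\parallel}=\bm{0}$ into the decomposition $\bm{T}(\bm{p}_i)=\bm{S}(\bm{p}_i)+\bm{t}_{\parallel}$ of Theorem~\ref{theorem1} shows the transformation coincides with its screw rotation $\bm{S}(\cdot)$, a pure rotation about the screw axis $\bm{s}$; since $\bm{s}$ is parallel to $\bm{v}_{r}$ it meets the plane $z=0$ in exactly one point, and this intersection point is the pole, so the restriction of $\bm{S}(\cdot)$ to the plane is a planar rotation about it.

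Second, to keep the argument self-contained in two dimensions I would verify directly that when $\theta\neq 0$ the matrix $\bm{I}-\bm{R}_{2}$ is invertible (its eigenvalues are $1-e^{\pm i\theta}\neq 0$), so the fixed-point equation $(\bm{I}-\bm{R}_{2})\bm{c}=\bm{t}_{2}$ has the unique solution $\bm{c}=(\bm{I}-\bm{R}_{2})^{-1}\bm{t}_{2}$. Rewriting the transformation as $\bm{x}\mapsto\bm{R}_{2}(\bm{x}-\bm{c})+\bm{c}$ then exhibits it as a rotation of angle $\theta$ about the pole $\bm{c}$, which matches the intersection point found in the first part. This step also reconciles the count of ``four independent components of the screw axis'' in Theorem~\ref{theorem1} with its planar version: in $\mathbb{R}^{2}$ the axis direction is forced to be perpendicular to the plane, so only the two coordinates of the pole remain free, consistent with the $2$-DOF status of the pole used later in the decoupling strategy.

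The main obstacle — really the only delicate point — is the degenerate case $\theta=0$ with $\bm{t}_{2}\neq\bm{0}$, i.e. a pure translation: here $\bm{I}-\bm{R}_{2}=\bm{0}$ is singular, no finite fixed point exists, and correspondingly the screw axis of the $3$D embedding has a well-defined direction but no well-defined location. I would dispose of this by stating the equivalence for $\theta\neq 0$ and treating a pure translation separately as the limiting case of a ``pole at infinity'', noting that this degeneracy is irrelevant to the registration problem since the rotation angle to be recovered is generically nonzero. I expect no further difficulties, as the remaining steps are the elementary linear-algebra verifications sketched above.
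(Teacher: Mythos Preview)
Your proposal is correct and self-contained, but note that the paper does not actually provide its own proof of Theorem~\ref{theorem2}: it is stated as a classical result with citations to \cite{hunt1978kinematic,bottema1990theoretical}, and the text that follows is an interpretation rather than a derivation. Your argument---embedding the planar motion in $\mathbb{SE}(3)$, observing that $\bm{t}_{\parallel}=\bm{0}$ because the translation lies in the rotation plane, and then reading off the pole as the intersection of the screw axis with that plane---is exactly the natural specialization of Theorem~\ref{theorem1} that the paper implicitly relies on, and your direct fixed-point computation $(\bm{I}-\bm{R}_{2})\bm{c}=\bm{t}_{2}$ matches how the paper later recovers $\hat{\bm{t}}_{\perp}^*=(\bm{I}-\hat{\bm{R}}_{\theta}^*)\bm{C}_0^*$ in Stage~III. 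Your handling of the $\theta=0$ degeneracy via a pole at infinity is also consistent with the paper's choice to represent the pole in homogeneous coordinates $\hat{\bm{C}}=[c_x,c_y,w]^{\mathrm{T}}$ precisely to accommodate this limiting case. In short, there is nothing to compare against; you have supplied a proof where the paper supplies only a citation.
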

}}

Theorem~\ref{theorem2} indicates that, a 2D rigid transformation (e.g., transform point $\bm{p}$ to $\bm{q}$) in the plane can be converted to a pure rotational motion around a unique \textit{pole} $\bm{C}_0$. This pure rotational motion is a 2D \textit{screw rotation}, denoted by $\bm{S}_2(\cdot)$, and we can obtain
\begin{equation}
\bm{S}_2(\bm{p})=\bm{R}_2(\bm{p}-\bm{C}_0)+\bm{C}_0=\bm{q},
\end{equation}
where \(\bm{p},\bm{q},\bm{C}_0\in\mathbb{R}^2\), and $\bm{R}_2$ is a 2D rotation matrix. In this study, Theorem~\ref{theorem2} is also one of the fundamental parts of our proposed transformation decoupling strategy.

\begin{figure*}[!t]
 \centering
 \includegraphics[width=1\textwidth]{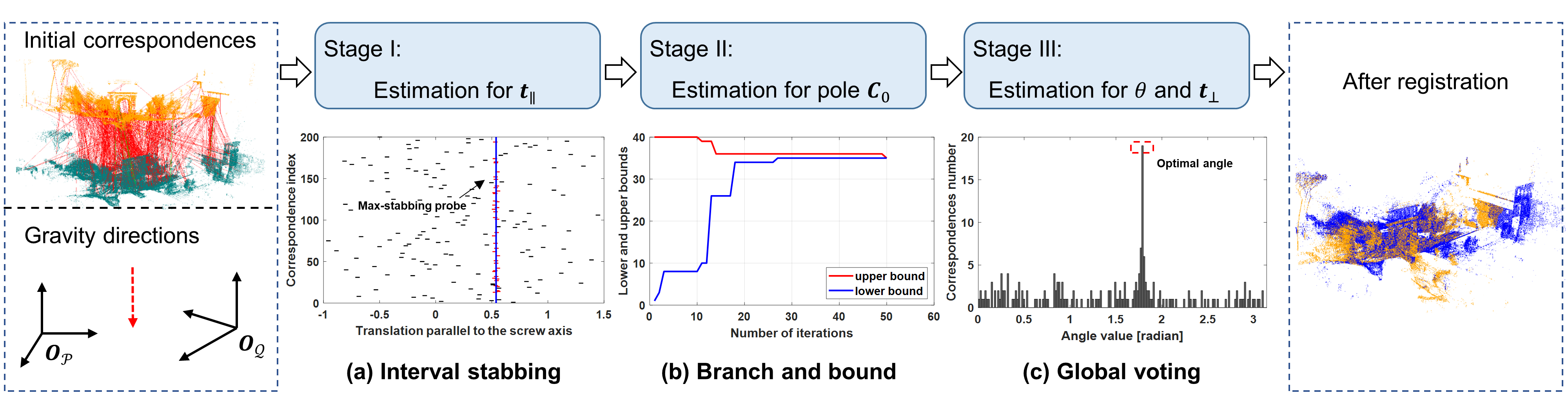}
 \caption{Calculation flow of the proposed three-stage method for correspondence-based registration. The source, target, and aligned point clouds are green, orange, and blue, respectively. In the initial correspondences set, green and red line segments represent inliers and outliers respectively (only a subset of correspondences is shown for visual clarity). For the interval stabbing part, the candidate intervals are represented by the black line segments, while the intervals crossed by the max-stabbing probe are depicted as the red line segments.}
 \label{fig_2}
\end{figure*}

\subsection{Rigid Registration with Gravity Prior}

In the case that the gravity directions are given, we assume they are denoted by unit-norm constrained vectors $\bm{v}_p$, $\bm{v}_q$ for the source and target point clouds $\mathcal{P}$, $\mathcal{Q}$, respectively. The constraint of gravity directions is given by
\begin{equation}
    \bm{v}_q = \bm{R}\bm{v}_p.
\end{equation}
The solution of this equation is given by \cite{bustos2017guaranteed,liu2023absolute,li2023fast}
\begin{equation}
    \bm{R}=\bm{R}(\theta,\bm{v}_q)\bm{R}_{\bm{v}_p}^{\bm{v}_q},
    \label{to_screw_axis}
\end{equation}
where $\bm{R}(\theta,\bm{v}_q)$ is the rotation that rotates $\theta$ about axis $\bm{v}_q$, and $\bm{R}_{\bm{v}_p}^{\bm{v}_q}$ is the rotation that rotates $\bm{v}_p$ to $\bm{v}_q$ with the minimum geodesic motion. Eq.~(\ref{to_screw_axis}) indicates that the rotation $\bm{R}$ is only dependent on the rotation angle $\theta\in [-\pi,\pi]$ when the gravity directions are prior known. Without loss of generality, we can align the Z-axis of the source point cloud $\mathcal{P}$ to the Z-axis of the target point cloud $\mathcal{Q}$ by the following rotation,
\begin{equation}
    \bm{p}'_i=\bm{R}_{\bm{v}_p}^{\bm{v}_q}\bm{p}_i.
\end{equation}
After this rotation, the 6-DOF rigid registration problem (\ref{screw}) can be reduced to a 4-DOF problem, i.e.,
\begin{equation}
    \begin{aligned}
    \underset {\theta,\bm{C},\bm{t}_{\parallel},\mathcal{I}\subseteq\mathcal{H}} {\max} \ &\left|\mathcal{I}\right| \\
    \text{s.t.}\quad &\left\|\bm{S}'(\bm{p}'_i)+\bm{t}_{\parallel}-\bm{q}_i\right\|\leq\epsilon,\ \forall i\in\mathcal{I},\label{4dof}
    \end{aligned}
\end{equation}
where $\bm{S}'(\bm{p}'_i)=\bm{R}'(\bm{p}'_i-\bm{C})+\bm{C}$, and we set $\bm{R}'\triangleq\bm{R}(\theta,\bm{v}_q)$ for convenience. Please note that in problem (\ref{4dof}), the orientation of the screw axis is parallel to $\bm{v}_q$ rather than the original rotation axis $\bm{v}_r$, i.e., $\bm{v}_q$ is the current rotation axis. Accordingly, $\bm{t}_{\parallel}$ in problem (\ref{4dof}) is the translation component parallel to $\bm{v}_q$, and $\bm{t}_{\perp}$ is the component perpendicular to $\bm{v}_q$.

\section{Three-stage Consensus Maximization Registration}
Due to the high-dimensional parameter space, solving the 4-DOF problem (\ref{4dof}) jointly is relatively time-consuming\cite{cai2019practical}. To accelerate the registration, we first reduce the 4-DOF original problem to a 1-DOF sub-problem with the aid of known gravity directions in Section~\ref{stage1}. The 1-DOF sub-problem is estimating the translation parallel to the screw axis. Then we decouple the remaining 3-DOF sub-problem into a 2-DOF and a 1-DOF sub-problem by screw theory, which is presented in Section~\ref{stage2} and Section~\ref{stage3} respectively. The 2-DOF sub-problem is searching for the \textit{pole}, an auxiliary variable in screw theory. The last 1-DOF sub-problem is estimating the rotation angle. After acquiring the rotation angle, we can readily calculate the translation orthogonal to the rotation axis and thereby obtain the final optimal solution. Same to the original problem, we formulate all three sub-problems as consensus maximization problems. The pipeline of the proposed method is given in Fig.~\ref{fig_2}.

\subsection{Stage I: Estimation for the Translation Parallel to the Screw Axis}\label{stage1}

Using the known gravity directions, we can reduce the original problem into a 1-DOF sub-problem that solely involves the translation parallel to the screw axis. Specifically, for an ideal inlier correspondence $(\bm{p}_i,\bm{q}_i)$, we have the following derivation about the original constraint in Eq.~(\ref{4dof}),
\begin{subequations}
\begin{align}
&\bm{q}_i=\bm{S}'(\bm{p}'_i)+\bm{t}_{\parallel}
\\
\Leftrightarrow&\bm{q}_i-\bm{p}'_i=\bm{S}'(\bm{p}'_i)-\bm{p}'_i+\bm{t}_{\parallel}
\\
\Rightarrow&\bm{v}_q^\mathrm{T}\left(\bm{q}_i-\bm{p}'_i\right)=\bm{v}_q^\mathrm{T}\left[ \bm{S}'(\bm{p}'_i)-\bm{p}'_i+\bm{t}_{\parallel}\right]\label{c}
\\
\Leftrightarrow&\bm{v}_q^\mathrm{T}\left(\bm{q}_i-\bm{p}'_i\right)=\left\|\bm{t}_{\parallel}\right\|\label{d}
\end{align}
\end{subequations}
where Eq.~(\ref{d}) is from the fact that vector $\bm{S}'(\bm{p}'_i)-\bm{p}'_i$ is perpendicular to $\bm{v}_q$ (see Fig.~\ref{fig_1}), and $\bm{t}_{\parallel}$ is parallel to $\bm{v}_q$. Considering the noise, we have the following new inlier constraint,
\begin{equation}
  \left|\left\|\bm{t}_{\parallel}\right\|+\bm{v}_q^\mathrm{T}\left(\bm{p}'_i-\bm{q}_i\right)\right|\leq\delta,\label{e}
\end{equation}
where $\delta$ is the inlier threshold. Since the gravity direction $\bm{v}_q$ is given, the constraint in Eq.~(\ref{e}) is only dependent on $\left\|\bm{t}_{\parallel}\right\|$. Accordingly, we can rewrite Eq.~(\ref{e}) in the form of an interval for $l\triangleq \left\|\bm{t}_{\parallel}\right\|$, i.e.,
\begin{equation}
    -\delta-\bm{v}_q^\mathrm{T}\left(\bm{p}'_i-\bm{q}_i\right)\leq l \leq\delta-\bm{v}_q^\mathrm{T}\left(\bm{p}'_i-\bm{q}_i\right).\label{interval}
\end{equation}

Given the set of initial correspondences and gravity directions, the first sub-problem aims at estimating the optimal $l^*$ to maximize the cardinality of the inlier set, which is defined by
\begin{equation}
    \begin{aligned}
    \underset {l,\mathcal{I}_1\subseteq\mathcal{H}} {\max} \ &\left|\mathcal{I}_1\right| \\
    \text{s.t.}\quad &l\in\left[ l_i^-,l_i^+\right],\ \forall i\in\mathcal{I}_1,\\ 
    & l_i^-=-\delta-\bm{v}_q^\mathrm{T}\left(\bm{p}'_i-\bm{q}_i\right),\\
    & l_i^+=\delta-\bm{v}_q^\mathrm{T}\left(\bm{p}'_i-\bm{q}_i\right),\\\label{problem1}
    \end{aligned}
\end{equation}
where $\mathcal{I}_1$ denotes the inlier set that only satisfies the constraint in Eq.~(\ref{problem1}). From the perspective of computational geometry, solving the consensus maximization problem (\ref{problem1}) is a typical \textit{interval stabbing} problem\cite{de1997computational}. Recently, the interval stabbing technology is widely used to solve different geometric optimization problems\cite{bustos2017guaranteed,cai2019practical,peng2022arcs,yan2022new}.

As depicted in Fig.~\ref{fig_2}, the interval stabbing problem is concerned with finding a probe (represented by the blue line segment) that stabs the maximum number of intervals. The interval stabbing problem can be efficiently solved with a time complexity of $\mathcal{O}(N\log N)$. On the basis of existing algorithms, the proposed interval stabbing algorithm (shown in appendix A) enhances the precision by returning the midpoint of the maximum overlapping interval instead of the common left endpoint\cite{cai2019practical}. The returned \textit{max-stabbing number} is the maximized cardinality of the inlier set. The returned \textit{max-stabbing position} is the value of optimal $l^*$. Besides, since $l^*$ is a scalar, we can obtain the optimal translation parallel to $\bm{v}_q$ by
\begin{equation}
    \bm{t}_\parallel^* = l^* \cdot \bm{v}_q
    \label{result of t_parallel}
\end{equation}

\subsection{Stage II: Searching for the Pole}\label{stage2}

\begin{figure}
\centering
\includegraphics[width=1\columnwidth]{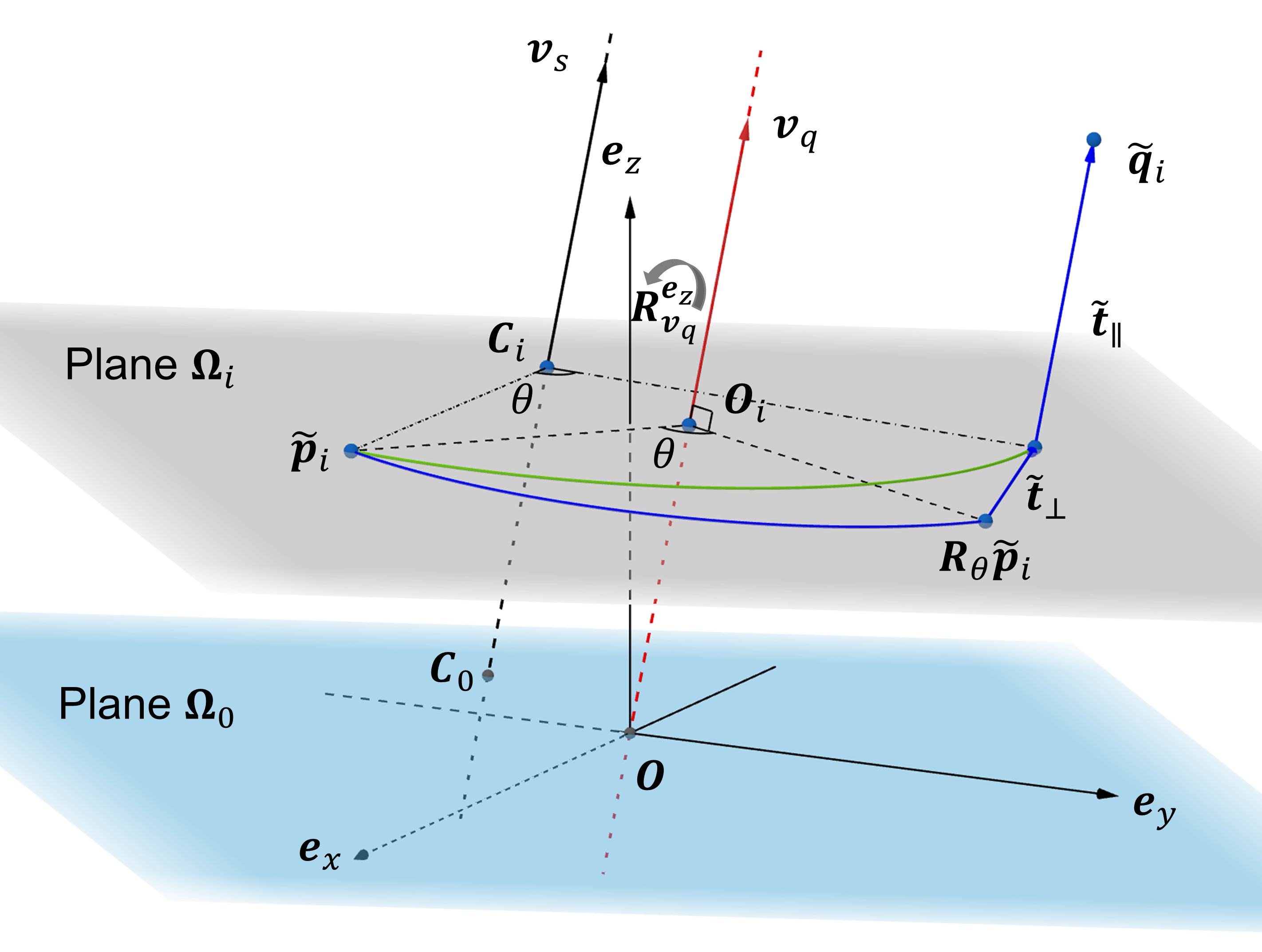}
\caption{The geometrical interpretation of the coordinate conversion by $\bm{R}_{\bm{v}_q}^{\bm{e}_z}$ and the plane projection to $\bm{\Omega}_0$.}
\label{fig_3}
\end{figure}

In this section, we first reduce the original 4-DOF problem (Eq.~(\ref{4dof})) into a 2D-2D rigid registration problem with 3-DOF using \textit{plane projection}. Subsequently, we decouple this 3-DOF registration problem into a 2-DOF sub-problem and a 1-DOF sub-problem by screw theory. This section focuses on solving the 2-DOF sub-problem, concretely searching for the pole.

\subsubsection{Plane Projection and Pole}
For an ideal inlier correspondence $(\bm{p}_i,\bm{q}_i)$, the constraint in Eq.~(\ref{4dof}) can be rewritten as,
\begin{equation}
    \bm{q}_i=\bm{R}'\bm{p}'_i+\bm{t}_{\perp}+\bm{t}_{\parallel}.
    \label{ideal}
\end{equation}
It can be observed that only the translation $\bm{t}_{\parallel}$ is along the screw axis, while the rotation $\bm{R}'$ and the translation $\bm{t}_{\perp}$ maintain the point $\bm{p}'_i$ within the rotation plane $\bm{\Omega}_i$. This plane is perpendicular to $\bm{v}_q$ and passes through $\bm{p}'_i$. Therefore, given the gravity direction $\bm{v}_q$, we can project the set of correspondence $\mathcal{C}$ from the three-dimensional space onto a two-dimensional plane $\bm{\Omega}_0$ that is perpendicular to $\bm{v}_q$ and through the origin $\bm{O}$ (i.e., $\bm{\Omega}_0$ is parallel to each $\bm{\Omega}_i$). After plane projection, the original 3D rigid transformation can be reduced to a 2D transformation with 3-DOF, which solely contains the 1-DOF rotation angle $\theta$ and the 2-DOF translation $\bm{t}_{\perp}$. Interestingly, the 2D rigid transformation can be represented by a 2D screw rotation (Theorem~\ref{theorem2}). In the next step, we first conduct the following coordinate conversion operation to achieve plane projection, and the geometrical illustration is given in Fig.~\ref{fig_3}.

Firstly, we introduce a new coordinate system, in which the rotation axis $\bm{v}_q$ corresponds to the Z-axis (denoted by $\bm{e}_z=[0,0,1]^{\mathrm{T}}$). The transformation from the original coordinate system to the new one is given by
\begin{equation}
    \bm{e}_z=\bm{R}_{\bm{v}_q}^{\bm{e}_z}\bm{v}_q.
\end{equation}
We can transform each vector in Eq.~(\ref{ideal}) to the new coordinate system by
\begin{equation}
    \begin{aligned}
        \tilde{\bm{p}}_i=\bm{R}_{\bm{v}_q}^{\bm{e}_z}\bm{p}'_i,\ \tilde{\bm{q}}_i=\bm{R}_{\bm{v}_q}^{\bm{e}_z}\bm{q}_i,\
        \tilde{\bm{t}}_{\perp}=\bm{R}_{\bm{v}_q}^{\bm{e}_z}\bm{t}_{\perp},\ \tilde{\bm{t}}_{\parallel}=\bm{R}_{\bm{v}_q}^{\bm{e}_z}\bm{t}_{\parallel}.
    \end{aligned}
    \label{new coordinates}
\end{equation}
Then we have the following coordinate conversion, 
\begin{subequations}
\begin{align}
&\bm{q}_i=\bm{R}'\bm{p}'_i+\bm{t}_{\perp}+\bm{t}_{\parallel}
\\
\Leftrightarrow&\bm{R}_{\bm{v}_q}^{\bm{e}_z}\bm{q}_i=\bm{R}_{\bm{v}_q}^{\bm{e}_z}\bm{R}'({\bm{R}_{\bm{v}_q}^{\bm{e}_z}})^{\mathrm{T}}\bm{R}_{\bm{v}_q}^{\bm{e}_z}\bm{p}'_i+\bm{R}_{\bm{v}_q}^{\bm{e}_z}\bm{t}_{\perp}+\bm{R}_{\bm{v}_q}^{\bm{e}_z}\bm{t}_{\parallel}
\\
\Leftrightarrow&\tilde{\bm{q}}_i=\bm{R}_{\theta}\tilde{\bm{p}}_i+\tilde{\bm{t}}_{\perp}+\tilde{\bm{t}}_{\parallel},\label{18c}
\end{align}
\end{subequations}
where $\bm{R}_{\theta}\triangleq \bm{R}_{\bm{v}_q}^{\bm{e}_z}\bm{R}'({\bm{R}_{\bm{v}_q}^{\bm{e}_z}})^{\mathrm{T}}$ is the rotation matrix in the new coordinate system, as shown in Fig.~\ref{fig_3}. The rotation axis in this coordinate system is now the Z-axis, thus $\bm{R}_{\theta}$ can be denoted as 
\begin{equation}
    \bm{R}_{\theta} \triangleq \begin{bmatrix}
    \cos\theta & -\sin\theta & 0 \\
    \sin\theta & \cos\theta & 0 \\
    0 & 0 & 1
    \end{bmatrix}.\label{20}
\end{equation}
Furthermore, $\tilde{\bm{t}}_{\perp}$ and $\tilde{\bm{t}}_{\parallel}$ have the following form
\begin{equation}
    \tilde{\bm{t}}_{\perp} \triangleq \left[
        t_a, t_b, 0
    \right]^\mathrm{T},\ 
    \tilde{\bm{t}}_{\parallel} \triangleq \left[
        0, 0, l
    \right]^\mathrm{T}.\label{21}
\end{equation}

After projecting the 3D rigid transformation in Eq.~(\ref{18c}) onto the 2D plane $\bm{\Omega}_0$, we can obtain the 2D rigid transformation as shown below
\begin{equation}
    \hat{\bm{q}}_i=\hat{\bm{R}}_{\theta}\hat{\bm{p}}_i+\hat{\bm{t}}_{\perp},\
    \hat{\bm{R}}_{\theta}={\begin{bmatrix} \cos\theta & -\sin\theta \\ \sin\theta &  \cos\theta \end{bmatrix}},\label{2d_equation}
\end{equation}
where $\hat{\bm{t}}_{\perp}=\left[t_a, t_b\right]^\mathrm{T}$, $\hat{\bm{p}}_i\triangleq[x_p, y_p]^\mathrm{T}$, and $\hat{\bm{q}}_i\triangleq[x_q, y_q]^\mathrm{T}$. The geometrical illustration for plane projection is given in Fig.~\ref{fig_4}. Estimating the optimal $\hat{\bm{R}}_{\theta}$ and $\hat{\bm{t}}_{\perp}$ is a typical 2D-2D rigid registration problem.
\begin{figure}
\centering
\includegraphics[width=0.75\columnwidth]{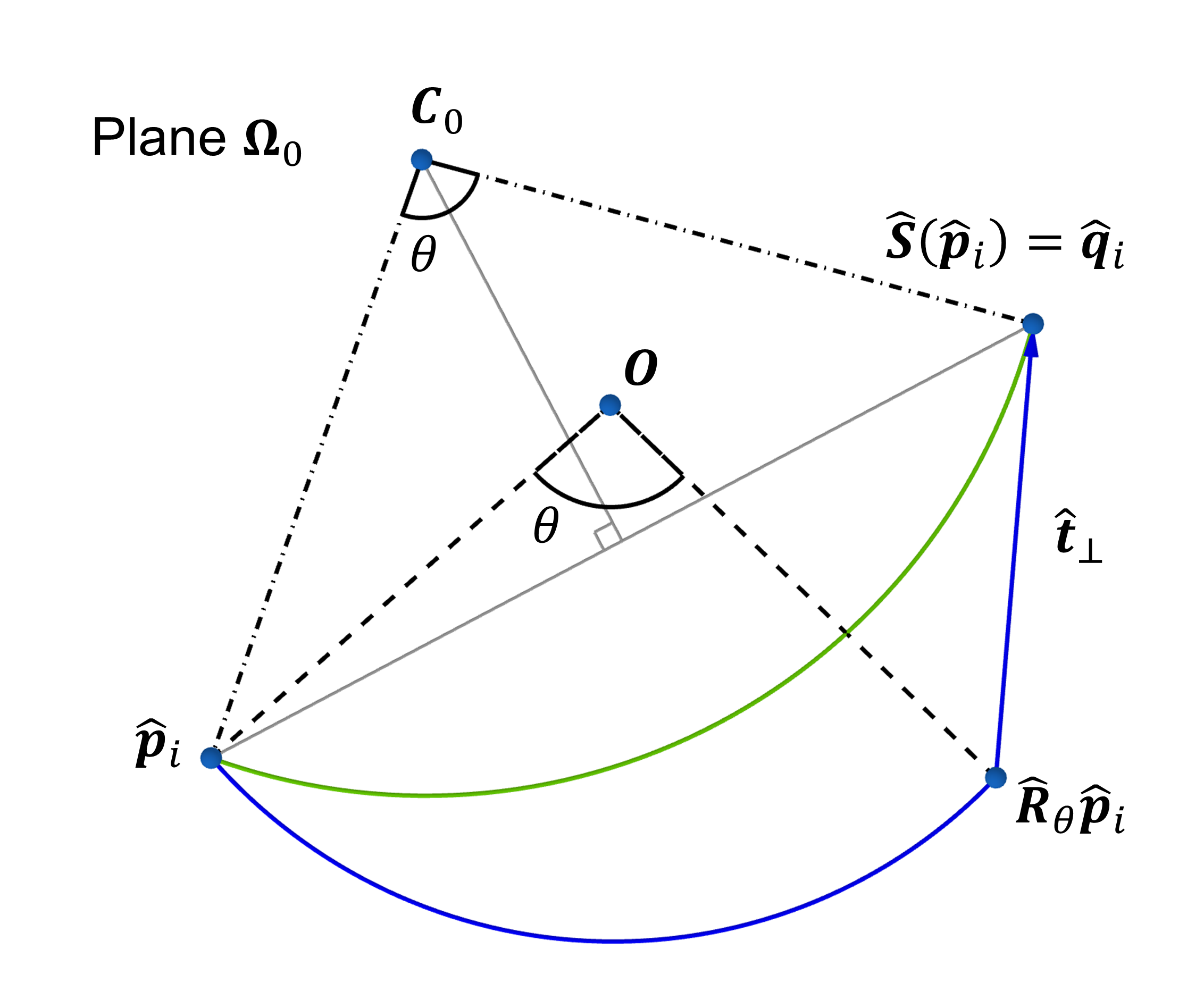}
\caption{The geometrical interpretation of the 2D rigid transformation and the 2D screw rotation after plane projection.}
\label{fig_4}
\end{figure}
 
In our formulation based on screw theory, the 2D rigid transformation can be reformulated by a 2D screw rotation, as presented in Theorem~\ref{theorem2}. The 2D screw rotation is a pure rotational motion around \textit{pole} $\bm{C}_0\in\mathbb{R}^2$, which is the projection of $\bm{C}_i\in\mathbb{R}^3$. We can rewrite Eq.~(\ref{2d_equation}) as,
\begin{equation}
    \hat{\bm{q}}_i=\bm{\hat R}_{\theta}(\hat{\bm{p}}_i-\bm{C}_0)+\bm{C}_0=\bm{\hat S}(\hat{\bm{p}}_i). 
    \label{2D_pure_rotation}
\end{equation}
Accordingly, under screw theory the 2D-2D rigid registration problem consists of estimating the 2-DOF rotation center $\bm{C}_0$ and the 1-DOF rotation angle $\theta$, as shown in Fig.~\ref{fig_4}. An important property of this rotation center (pole) is that it must stand on the vertical bisector of each line segment $\hat{\bm{p}}_i\hat{\bm{q}}_i$, as presented in Proposition~\ref{proposition1}.

\begin{proposition}\label{proposition1}
When a point rotates around a rotation center on a plane, the rotation center must fall on the perpendicular bisector of the line segment connecting the two corresponding points.
\end{proposition}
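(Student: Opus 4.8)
The plan is to prove Proposition~\ref{proposition1} by a direct distance argument: a rotation about a fixed center is an isometry that fixes that center, so the center is equidistant from any point and its image, which is exactly the defining property of the perpendicular bisector.

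First I would set up the configuration from Eq.~(\ref{2D_pure_rotation}): let $\hat{\bm{p}}_i$ be an arbitrary point in the plane, $\bm{C}_0$ the rotation center (pole), and $\hat{\bm{q}}_i=\bm{\hat R}_\theta(\hat{\bm{p}}_i-\bm{C}_0)+\bm{C}_0$ its image under the $2$D screw rotation. Then I would compute the two distances
\begin{equation}
\|\hat{\bm{p}}_i-\bm{C}_0\|,\qquad \|\hat{\bm{q}}_i-\bm{C}_0\|=\|\bm{\hat R}_\theta(\hat{\bm{p}}_i-\bm{C}_0)\|,
\end{equation}
and invoke the fact that $\bm{\hat R}_\theta\in\mathbb{SO}(2)$ is orthogonal, hence norm-preserving, so the two distances are equal. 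This shows $\bm{C}_0$ lies on the locus of points equidistant from $\hat{\bm{p}}_i$ and $\hat{\bm{q}}_i$.

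Next I would identify that locus. In the plane, the set of points equidistant from two distinct points $\hat{\bm{p}}_i$ and $\hat{\bm{q}}_i$ is precisely the perpendicular bisector of the segment $\hat{\bm{p}}_i\hat{\bm{q}}_i$ — this is a standard characterization, which one can verify algebraically by squaring $\|\bm{x}-\hat{\bm{p}}_i\|^2=\|\bm{x}-\hat{\bm{q}}_i\|^2$ and observing the quadratic terms cancel, leaving the linear equation $2(\hat{\bm{q}}_i-\hat{\bm{p}}_i)^{\mathrm T}\bm{x}=\|\hat{\bm{q}}_i\|^2-\|\hat{\bm{p}}_i\|^2$, which describes a line orthogonal to $\hat{\bm{q}}_i-\hat{\bm{p}}_i$ passing through the midpoint $\tfrac12(\hat{\bm{p}}_i+\hat{\bm{q}}_i)$. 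Combining this with the previous step gives $\bm{C}_0$ on the perpendicular bisector of $\hat{\bm{p}}_i\hat{\bm{q}}_i$, and since $i$ was arbitrary, $\bm{C}_0$ lies on the perpendicular bisector of every such segment, which completes the proof.

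There is no serious obstacle here; the only point requiring a word of care is the degenerate case $\hat{\bm{p}}_i=\hat{\bm{q}}_i$ (which happens when $\theta=0$ or when $\hat{\bm{p}}_i=\bm{C}_0$), in which the "perpendicular bisector" is undefined — I would simply note that the statement is vacuous or trivial in that case and assume $\hat{\bm{p}}_i\neq\hat{\bm{q}}_i$ otherwise. The argument is essentially one line (orthogonal matrices preserve norm), so the main work is just stating the equidistance-locus characterization cleanly.
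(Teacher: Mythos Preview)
Your argument is correct: the equidistance of $\bm{C}_0$ from $\hat{\bm{p}}_i$ and $\hat{\bm{q}}_i$ follows immediately from the orthogonality of $\hat{\bm{R}}_\theta$, and the perpendicular-bisector characterization then gives the claim. The paper itself does not supply a proof of Proposition~\ref{proposition1}; it states the proposition as an elementary geometric fact and proceeds directly to parametrize the bisector line in Eq.~(\ref{line}). Your write-up therefore fills in exactly the justification the paper leaves implicit, and your handling of the degenerate case $\hat{\bm{p}}_i=\hat{\bm{q}}_i$ is appropriate (the paper sidesteps this by working in homogeneous coordinates for $\bm{C}_0$ so that the ``pole at infinity'' when $\theta\to 0$ is representable).
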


We first define the perpendicular bisector as a linear function of the form $a_ix+b_iy+c_i=0$. For an ideal inlier correspondence $(\hat{\bm{p}}_i,\hat{\bm{q}}_i)$, we can obtain
\begin{equation}
    \begin{aligned}
    a_i &= x_q-x_p, \\
    b_i &= y_q-y_p, \\
    c_i &= -(\frac{x_p+x_q}{2}a_i+\frac{y_p+y_q}{2}b_i).\label{line}
    \end{aligned}
\end{equation}
Pole $\bm{C}_0\triangleq\left[C_x,C_y\right]^\mathrm{T}$ should stand on this line. However, when the rotation angle tends to zero, $\bm{C}_0$ stands in an infinite far position, which can not be expressed in a normal way. To avoid this limitation, we use the homogeneous coordinate $\hat{\bm{C}}\triangleq\left[c_x,c_y,w\right]^\mathrm{T}$ to represent pole $\bm{C}_0$, and the constraint of such coordinate is $\|\hat{\bm{C}}\|=1$. We also use the vector expressions for every perpendicular bisector, i.e., $\bm{n}_i\triangleq\left[a_i,b_i,c_i\right]^\mathrm{T}$. Therefore, the constraint of the pole can be expressed as 
\begin{equation}
    a_ic_x+b_ic_y+c_iw=\bm{n}_i^\mathrm{T}\hat{\bm{C}}=0.
\end{equation}
Considering the noise, estimating pole $\bm{C}_0$ can be formulated as a \textit{linear model fitting problem}\cite{wang2021practical,liu2022globally}. We can solve this problem by addressing the following consensus maximization problem, in which we seek an optimal unit-norm constrained $\hat{\bm{C}}$ with the largest number of candidate inliers.
\begin{equation}
 \begin{aligned}
     \underset {\hat{\bm{C}},\mathcal{I}_2\subseteq\mathcal{I}_1^*} {\max} \ &\left|\mathcal{I}_2\right| \\
     \text{s.t.}\quad &|\bm{n}_i^\mathrm{T} \hat{\bm{C}}|\leq \tau,\ \forall i\in\mathcal{I}_2,\\
     &\|\hat{\bm{C}}\|=1,
     \label{problem2}
 \end{aligned}    
\end{equation}
where $\mathcal{I}_2$ is the inlier set extracted from the cardinality-maximized inlier set $\mathcal{I}_1^*$, and $\tau$ is the inlier threshold. 

\subsubsection{Branch-and-Bound}
Since searching for the exact solution of consensus maximization problem (\ref{problem2}) is nonconvex and NP-hard\cite{tat2020robust}, we utilize the globally optimal and deterministic branch-and-bound (BnB) algorithm\cite{morrison2016branch} to solve it. Specifically, the BnB algorithm systematically explores the entire parameter space by iteratively dividing it into smaller sub-branches and calculating the upper and lower bound for the objective function on each sub-branch. It discards those sub-branches where larger objective function values than the current optimal value are impossible. As the solution domain progressively narrows down, the gap between the upper and lower bounds gradually diminishes until zero, and then the BnB algorithm achieves the optimal solution. An example of the convergence curve of the proposed BnB algorithm is given in Fig.~\ref{fig_2}.

The first step in constructing the BnB algorithm is the parameterization of the solution domain. Geometrically, the unit-norm constrained vector $\hat{\bm{C}}$ lies on the surface of a \textit{unit sphere}. We denote the unit sphere as $\mathbb{S}^{2}$. Since $\hat{\bm{C}}$ and $-\hat{\bm{C}}$ have the same inlier set, we can set the solution domain of $\hat{\bm{C}}$ as a hemisphere denoted by $\mathbb{S}^{2+}=\{\bm{h}=[h_1,h_2,h_3]^\mathrm{T} \big| \|\bm{h}\|=1,h_3\geq 0\}$. We then use a compact representation method, \textit{exponential mapping}\cite{hartley2009global,liu2020globallypami}, to map the 3D unit hemisphere to a 2D disk. Concretely, the vector $\bm{h}\in\mathbb{S}^{2+}$ corresponds to a unique point $\bm{\varphi}\in\mathbb{R}^2$ in the disk,
\begin{subequations}
\begin{align}
    \bm{h}&=\left[\sin\omega\cdot\hat{\bm{\varphi}}^\mathrm{T},\cos\omega \right]^\mathrm{T}\\
    \bm{\varphi}&=\hat{\bm{\varphi}}\cdot\omega
\end{align}
\end{subequations}
where $\omega\in[0,\pi/2]$, and $\hat{\bm{\varphi}}$ is a unit vector in $\mathbb{R}^{2}$. Therefore, to facilitate manipulation, the solution domain of $\hat{\bm{C}}$ is defined as a circumscribed square (with a radius of $\pi/2$) of the disk domain, as shown in Fig.~\ref{fig_5}.

The next step is estimating the upper and lower bound for the sub-branch. We first introduce the following lemma.

\begin{lemma}
\label{exp.mapping}
Given a square-shaped sub-branch $\mathbb{B}$ in the exponential mapping plane, its center is $\bm{\varphi}_c\in \mathbb{R}^2$ and half-side length is $\gamma$. For $\forall\bm{\varphi}\in\mathbb{B}$, we have
\begin{equation}
\angle\left(\bm{h},\bm{h}_c\right)\leq \left\| \bm{\varphi}-\bm{\varphi}_c \right\| \leq \sqrt{2}\gamma,\label{28}
\end{equation}
where $\bm{\varphi}$ and $\bm{\varphi}_c$ correspond to $\bm{h}\in\mathbb{S}^{2+}$ and $\bm{h}_c\in\mathbb{S}^{2+}$, respectively.
\end{lemma}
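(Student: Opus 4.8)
The plan is to establish the two inequalities in~\eqref{28} separately, working from the exponential mapping definition $\bm{h}=[\sin\omega\cdot\hat{\bm{\varphi}}^\mathrm{T},\cos\omega]^\mathrm{T}$ with $\bm{\varphi}=\hat{\bm{\varphi}}\cdot\omega$. The right-hand inequality $\left\|\bm{\varphi}-\bm{\varphi}_c\right\|\leq\sqrt{2}\gamma$ is purely planar geometry: a point in a square of half-side $\gamma$ is at distance at most $\sqrt{2}\gamma$ from the center (the half-diagonal), so this follows immediately from $\bm{\varphi}\in\mathbb{B}$. The substance is the left-hand inequality $\angle(\bm{h},\bm{h}_c)\leq\left\|\bm{\varphi}-\bm{\varphi}_c\right\|$, which says that the exponential map from the disk (with its flat Euclidean metric) to the hemisphere is distance-nonincreasing in the relevant sense: the geodesic (angular) distance between the images is bounded by the Euclidean distance between the preimages.

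First I would set up notation: write $\omega=\|\bm{\varphi}\|$, $\omega_c=\|\bm{\varphi}_c\|$, so that $\angle(\bm{h},\bm{h}_c)=\arccos(\bm{h}^\mathrm{T}\bm{h}_c)$ and, expanding the dot product, $\bm{h}^\mathrm{T}\bm{h}_c=\sin\omega\sin\omega_c\,(\hat{\bm{\varphi}}^\mathrm{T}\hat{\bm{\varphi}}_c)+\cos\omega\cos\omega_c$. Let $\alpha=\angle(\hat{\bm{\varphi}},\hat{\bm{\varphi}}_c)\in[0,\pi]$ be the angle between the two in-plane unit directions, so $\hat{\bm{\varphi}}^\mathrm{T}\hat{\bm{\varphi}}_c=\cos\alpha$ and $\bm{h}^\mathrm{T}\bm{h}_c=\cos\omega\cos\omega_c+\sin\omega\sin\omega_c\cos\alpha$. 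On the preimage side, the law of cosines gives $\left\|\bm{\varphi}-\bm{\varphi}_c\right\|^2=\omega^2+\omega_c^2-2\omega\omega_c\cos\alpha$. The claim $\angle(\bm{h},\bm{h}_c)\leq\left\|\bm{\varphi}-\bm{\varphi}_c\right\|$ is then the statement that $d(\omega,\omega_c,\alpha):=\arccos(\cos\omega\cos\omega_c+\sin\omega\sin\omega_c\cos\alpha)\leq\sqrt{\omega^2+\omega_c^2-2\omega\omega_c\cos\alpha}$ for all $\omega,\omega_c\in[0,\pi/2]$ and $\alpha\in[0,\pi]$. I recognize the left side as the spherical law of cosines: $d$ is exactly the length of the third side of a spherical triangle with two sides $\omega,\omega_c$ meeting at angle $\alpha$, while the right side is the third side of the \emph{Euclidean} triangle with the same two side lengths and included angle. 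So the lemma reduces to the classical fact that, for fixed two sides and included angle, the third side of a spherical triangle is no longer than that of the corresponding planar triangle — equivalently, that geodesics on the sphere ``spread apart'' no faster than straight lines in the plane (nonnegative curvature / Toponogov-type comparison), provided the sides stay within a hemisphere, which holds since $\omega,\omega_c\leq\pi/2$.

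To make this self-contained rather than invoking comparison geometry, I would prove the scalar inequality directly. Fix $\omega,\omega_c$ and treat both sides as functions of $\alpha$; at $\alpha=0$ both sides equal $|\omega-\omega_c|$, so they agree at the endpoint, and it then suffices to compare derivatives in $\cos\alpha$ — or, cleaner, substitute $u=\cos\alpha$ and show $g(u):=\omega^2+\omega_c^2-2\omega\omega_c u - \big(\arccos(\cos\omega\cos\omega_c+\sin\omega\sin\omega_c\,u)\big)^2\geq 0$ on $u\in[-1,1]$, noting $g(1)=(|\omega-\omega_c|)^2-(\omega-\omega_c)^2=0$ when $\omega,\omega_c\le\pi/2$ (here $\cos\omega\cos\omega_c+\sin\omega\sin\omega_c=\cos(\omega-\omega_c)$). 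Differentiating, $g'(u)=-2\omega\omega_c+2\arccos(\cdots)\cdot\frac{\sin\omega\sin\omega_c}{\sqrt{1-(\cdots)^2}}$, and one shows $g'(u)\le 0$, so $g$ decreases to its minimum $g(1)=0$. The inequality $g'(u)\le0$ rearranges to $\frac{d\,\sin\omega\sin\omega_c}{\sin d}\le\omega\omega_c$ where $d=d(\omega,\omega_c,\alpha)$; since $d\in[0,\pi]$ and $x/\sin x\ge1$, and since $\sin\omega\le\omega$, $\sin\omega_c\le\omega_c$ on $[0,\pi/2]$, this needs a bit more care — the cleanest route is to combine $\sin d\ge$ (something) via the spherical triangle's relation to its sides. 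I expect \emph{this monotonicity step — verifying $g'(u)\le0$ cleanly, i.e.\ that the spherical third side grows at least as fast in the included angle as the planar one — to be the main obstacle}; everything else (the $\sqrt2\gamma$ bound, reducing to the scalar inequality, the endpoint evaluation) is routine. If the direct derivative bookkeeping gets messy, the fallback is to cite the standard hinge/Toponogov comparison theorem for spaces of curvature bounded below by $0$ versus the unit sphere's curvature $1$, restricted to the hemisphere where geodesics between the relevant points are minimizing.
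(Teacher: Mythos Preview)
The paper itself does not prove this lemma; it simply cites~\cite{liu2020globallypami}, so your attempt goes well beyond what the paper supplies. Your reduction is correct: the $\sqrt{2}\gamma$ bound is immediate, and the left inequality is exactly the statement that the spherical third side of a hinge with sides $\omega,\omega_c$ and included angle $\alpha$ is bounded by the Euclidean third side. Your fallback to the Toponogov hinge comparison (sphere has curvature $1\ge 0$, so the opposite side is no longer than in the flat model) is a valid route.

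However, your direct calculus argument has a genuine error, not just messy bookkeeping: the function $g(u)=\omega^2+\omega_c^2-2\omega\omega_c u-\big(\arccos(\cos\omega\cos\omega_c+\sin\omega\sin\omega_c\,u)\big)^2$ is \emph{not} monotone on $[-1,1]$. You correctly note $g(1)=0$, but $g(-1)=0$ as well, since at $\alpha=\pi$ both the spherical and Euclidean third sides equal $\omega+\omega_c$. Hence $g'(u)\le 0$ cannot hold throughout; e.g.\ for $\omega=\omega_c=\pi/4$ one computes $g'(-1)=-\pi^2/8+\pi/2>0$ while $g'(1)=-\pi^2/8+2\sin^2(\pi/4)<0$. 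So the monotonicity step fails and cannot be repaired.

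A cleaner self-contained argument, avoiding Toponogov, is to show directly that the exponential map $\Phi:\bm{\varphi}\mapsto\bm{h}$ is $1$-Lipschitz. In polar coordinates $(\omega,\phi)$ on the plane, $\Phi(\omega,\phi)=(\sin\omega\cos\phi,\sin\omega\sin\phi,\cos\omega)$; the orthonormal frame $(\partial_\omega,\omega^{-1}\partial_\phi)$ is sent to orthogonal vectors of lengths $1$ and $\sin\omega/\omega\le 1$, so $\|d\Phi\|_{\mathrm{op}}\le 1$ everywhere (with the obvious limit at $\omega=0$). Integrating along the straight segment from $\bm{\varphi}_c$ to $\bm{\varphi}$ gives a curve on the sphere of length at most $\|\bm{\varphi}-\bm{\varphi}_c\|$, and the spherical geodesic distance $\angle(\bm{h},\bm{h}_c)$ is bounded by the length of any connecting curve. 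This replaces your problematic monotonicity step entirely.
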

\begin{proof}
    The completed proof is given in \cite{liu2020globallypami}.
\end{proof}

Based on Lemma~\ref{exp.mapping}, the upper and lower bound of the proposed BnB algorithm for problem (\ref{problem2}) can be set as  
\begin{lemma} 
\label{bounds}
Given a square-shaped sub-branch $\mathbb{B}$, whose center is $\bm{\varphi}_c\in \mathbb{R}^2$ (corresponds to $\bm{h}_c\in\mathbb{S}^{2+}$ by exponential mapping) and half-side length is $\gamma$, the upper bound $U(\mathbb{B})$ and lower bound 
$L(\mathbb{B})$ can be set as

\begin{subequations}
\begin{align}
&U(\mathbb{B})=\sum\limits_{i=1}^{N'}\mathbb{I}\big(|\bm{n}_i^\mathrm{T}\bm{h}_c|\leq \Psi_i\big),\\
&L(\mathbb{B})=\sum\limits_{i=1}^{N'}\mathbb{I}\big(|\bm{n}_i^\mathrm{T}\bm{h}_c|\leq \tau\big),\\
&{\Psi_i}=\begin{cases}     \left\|\bm{n_i}\right\|\sin(\sqrt{2}\gamma+\xi_i), &\sqrt{2}\gamma+\xi_i<\pi/2 \\
 \left\|\bm{n_i}\right\|, &\sqrt{2}\gamma+\xi_i\geq\pi/2
\end{cases}
\end{align}
\end{subequations}
where $\xi_i\triangleq\arcsin (\tau/\left\|\bm{n_i}\right\|)$, $\tau$ is the inlier threshold, and $N'$ is the cardinality of the inlier set $\mathcal{I}_1^*$.
\end{lemma}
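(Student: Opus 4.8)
The plan is to prove the validity of the bounds $U(\mathbb{B})$ and $L(\mathbb{B})$ by checking two standard requirements for any BnB scheme: (i) for every sub-branch $\mathbb{B}$, $L(\mathbb{B}) \leq \max_{\hat{\bm{C}} \in \mathbb{B}} |\mathcal{I}_2(\hat{\bm{C}})| \leq U(\mathbb{B})$ where $\mathcal{I}_2(\hat{\bm{C}}) = \{i : |\bm{n}_i^\mathrm{T}\hat{\bm{C}}| \leq \tau\}$, and (ii) both bounds converge to the true maximum as $\gamma \to 0$. For the lower bound, the argument is immediate: $\bm{h}_c$ is itself a feasible point of $\mathbb{B}$ (its center corresponds to an actual unit vector on the hemisphere), so the consensus count it achieves, namely $L(\mathbb{B}) = \sum_i \mathbb{I}(|\bm{n}_i^\mathrm{T}\bm{h}_c| \leq \tau)$, can only underestimate the best achievable count over the whole branch.

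For the upper bound, the core step is a geometric estimate on how much $|\bm{n}_i^\mathrm{T}\bm{h}|$ can vary as $\bm{h}$ ranges over $\mathbb{B}$. Writing $|\bm{n}_i^\mathrm{T}\bm{h}| = \|\bm{n}_i\| \, |\cos\angle(\bm{n}_i,\bm{h})| = \|\bm{n}_i\|\,|\sin\beta_i(\bm{h})|$ where $\beta_i(\bm{h})$ is the angle between $\bm{h}$ and the plane orthogonal to $\bm{n}_i$, I would argue that if correspondence $i$ is in the consensus set for \emph{some} $\bm{h} \in \mathbb{B}$, then $|\sin\beta_i(\bm{h})| \leq \tau/\|\bm{n}_i\|$, i.e.\ $|\beta_i(\bm{h})| \leq \xi_i$. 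Combined with Lemma~\ref{exp.mapping}, which guarantees $\angle(\bm{h},\bm{h}_c) \leq \sqrt{2}\gamma$ for all $\bm{h} \in \mathbb{B}$, the triangle inequality for angles on the sphere gives $|\beta_i(\bm{h}_c)| \leq |\beta_i(\bm{h})| + \angle(\bm{h},\bm{h}_c) \leq \xi_i + \sqrt{2}\gamma$. Applying $\sin$ (monotone on $[0,\pi/2]$, with a saturation case when $\xi_i + \sqrt{2}\gamma \geq \pi/2$) yields $|\bm{n}_i^\mathrm{T}\bm{h}_c| = \|\bm{n}_i\|\,|\sin\beta_i(\bm{h}_c)| \leq \|\bm{n}_i\|\sin(\sqrt{2}\gamma + \xi_i) = \Psi_i$. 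Hence any index that is feasible anywhere in $\mathbb{B}$ satisfies $|\bm{n}_i^\mathrm{T}\bm{h}_c| \leq \Psi_i$, so $U(\mathbb{B})$ counts a superset of every branch-feasible consensus set, establishing the upper-bound inequality. Convergence follows because as $\gamma \to 0$ we have $\Psi_i \to \|\bm{n}_i\|\sin\xi_i = \tau$, so $U(\mathbb{B})$ and $L(\mathbb{B})$ coincide in the limit.

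The main obstacle I anticipate is making the angle-addition step rigorous, specifically justifying the inequality $|\beta_i(\bm{h}_c)| \leq |\beta_i(\bm{h})| + \angle(\bm{h},\bm{h}_c)$. This is essentially a reverse triangle inequality: the angle between $\bm{h}_c$ and the hyperplane $\bm{n}_i^\perp$ cannot exceed the angle between $\bm{h}$ and that hyperplane plus the angle between $\bm{h}$ and $\bm{h}_c$, because the ``distance to the subspace'' (measured as an angle, equivalently $\arcsin$ of the Euclidean distance to the unit sphere of that subspace) is $1$-Lipschitz with respect to the geodesic metric on $\mathbb{S}^2$. I would prove this either by a direct spherical-trigonometry argument or by noting that $\bm{h} \mapsto \arcsin|\bm{n}_i^\mathrm{T}\bm{h}|/\|\bm{n}_i\|$ is the angular distance from $\bm{h}$ to a great circle and invoking the standard fact that distance to a closed geodesically-convex set is $1$-Lipschitz. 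The remaining pieces — the saturation case $\Psi_i = \|\bm{n}_i\|$ when the angle sum exceeds $\pi/2$ (which simply uses $|\bm{n}_i^\mathrm{T}\bm{h}_c| \leq \|\bm{n}_i\|\|\bm{h}_c\| = \|\bm{n}_i\|$ by Cauchy--Schwarz), the well-definedness of $\xi_i = \arcsin(\tau/\|\bm{n}_i\|)$ assuming $\tau \leq \|\bm{n}_i\|$ (and noting that if $\tau > \|\bm{n}_i\|$ the correspondence is trivially always an inlier), and monotonicity of $\sin$ — are routine and I would dispatch them briefly.
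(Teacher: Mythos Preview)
Your proposal is correct. The paper itself does not supply a proof of this lemma; it simply writes ``The completed proof is given in \cite{liu2022globally}.'' What you have written is precisely the standard argument that underlies such angular BnB bounds for linear-model consensus maximization on the sphere, and it is sound in every step: the lower bound is just the objective evaluated at the feasible center $\bm{h}_c$; the upper bound follows from writing $|\bm{n}_i^{\mathrm T}\bm{h}|=\|\bm{n}_i\|\,|\sin\beta_i(\bm h)|$, bounding the angular perturbation $\angle(\bm h,\bm h_c)\le\sqrt{2}\gamma$ via Lemma~\ref{exp.mapping}, applying the 1-Lipschitz property of angular distance to the great circle $\bm n_i^\perp\cap\mathbb S^2$, and then taking sines with the obvious saturation when the total angle exceeds $\pi/2$. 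Your anticipated ``obstacle'' (the triangle inequality $|\beta_i(\bm h_c)|\le|\beta_i(\bm h)|+\angle(\bm h,\bm h_c)$) is indeed the only non-trivial step, and your justification via distance-to-a-closed-set being 1-Lipschitz in the geodesic metric is exactly right. The convergence and edge-case remarks ($\tau>\|\bm n_i\|$, Cauchy--Schwarz for saturation) are also correctly handled.
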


\begin{proof}
    The completed proof is given in \cite{liu2022globally}.
\end{proof}

Based on Lemma~\ref{bounds}, the proposed BnB algorithm for the 2-DOF pole search sub-problem is given in appendix A. To improve the algorithm efficiency, the input data is set as the candidate inlier correspondence set of problem (\ref{problem1}).
After we obtain optimal solution $\hat{\bm{C}}^*=[c_x^*,c_y^*,w^*]^{\mathrm{T}}$ by the proposed algorithm, we can transform it back to the 2D coordinate by 
\begin{equation}
    \bm{C}_0^*=\left[\frac{c_x^*}{w^*},\frac{c_y^*}{w^*}\right]^\mathrm{T}
\end{equation}

\begin{figure}
\centering
\includegraphics[width=0.7\columnwidth]{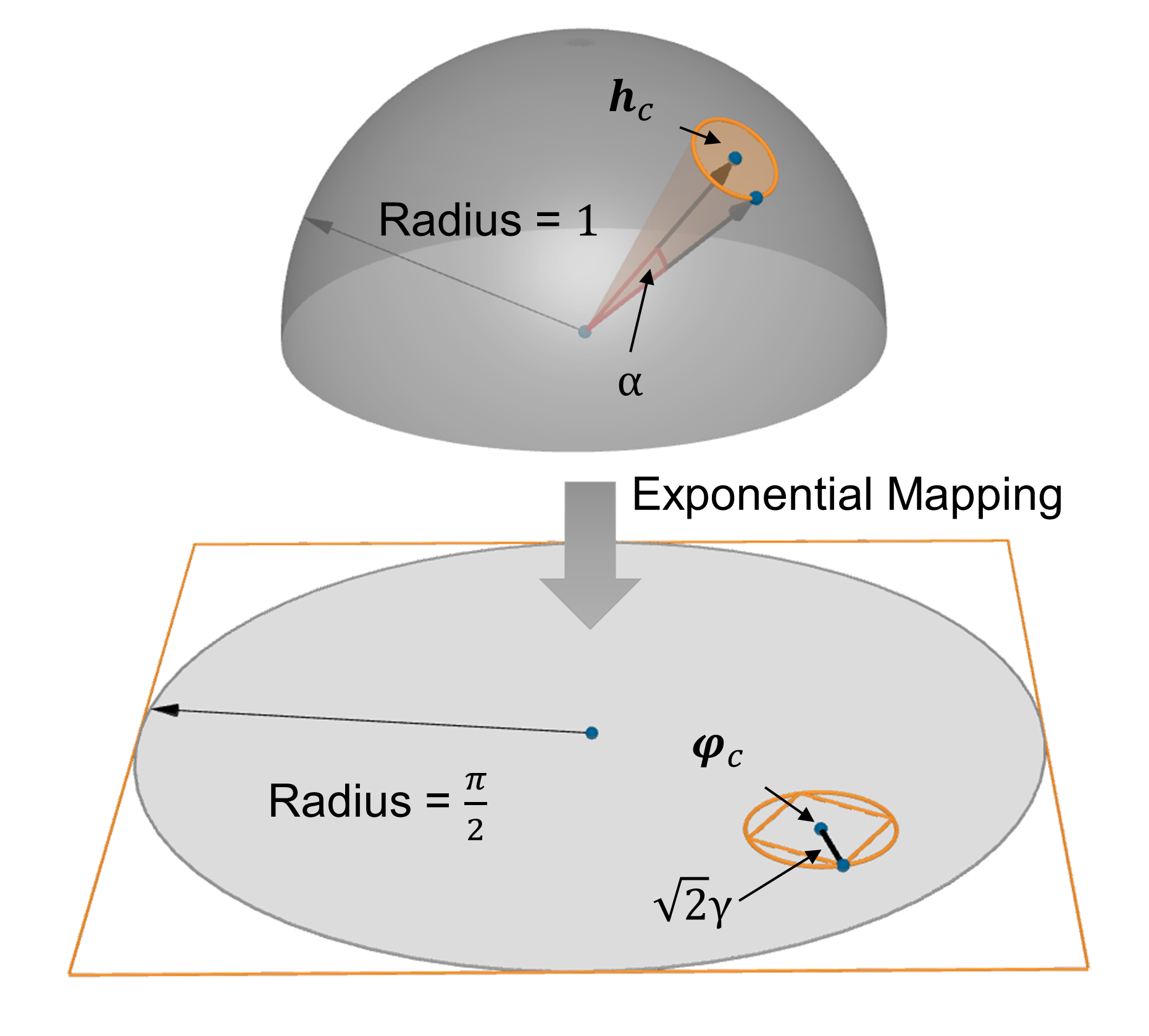}
\caption{Exponential mapping of the solution domain and the geometrical interpretation of Lemma~\ref{exp.mapping}.}
\label{fig_5}
\end{figure}

\subsection{Stage III: Voting for the Rotation Angle}\label{stage3}

\begin{figure*}[!t]
 \centering
 \includegraphics[width=0.8\textwidth]{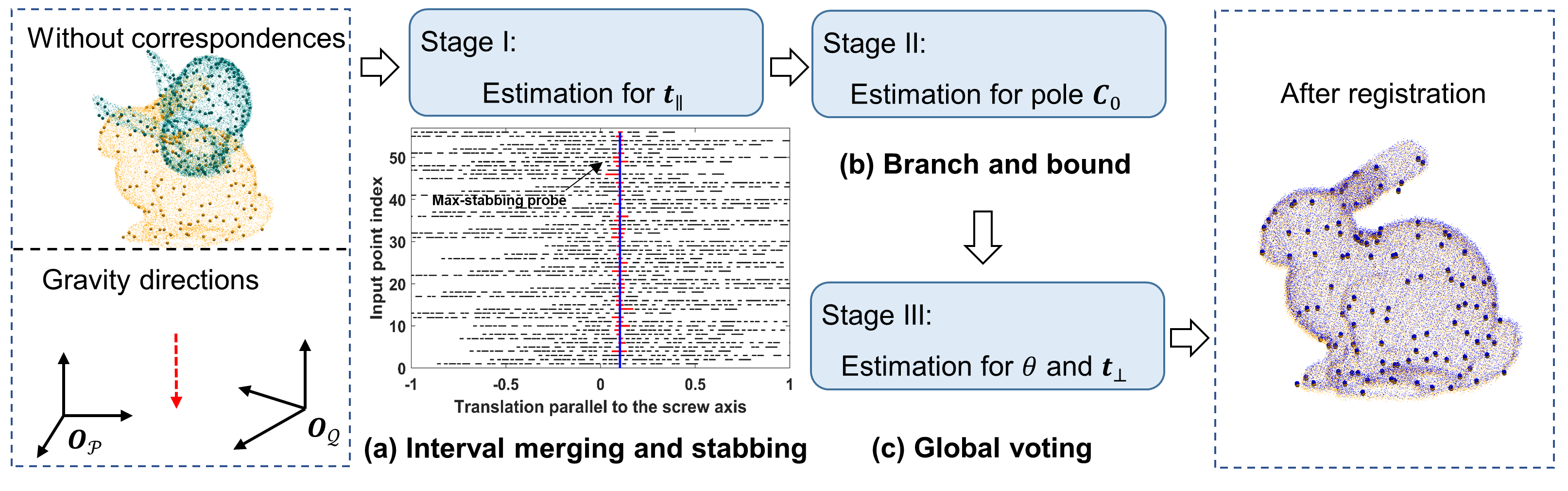}
 \caption{Calculation flow of the proposed three-stage method for correspondence-free registration. The bolded points in the point cloud represent the points after downsampling. In the first stage, the merged intervals are represented by the black line segments, while the intervals crossed by the max-stabbing probe are depicted as the red line segments.}
 \label{merge_and_stabbing}
\end{figure*}
In the last stage, we aim to solve the remaining 1-DOF rotation angle estimation sub-problem. According to Eq.~(\ref{2D_pure_rotation}), we can observe that the rotation angle is the angle between two vectors $(\hat{\bm{p}}_i-\bm{C}_0)$ and $(\hat{\bm{q}}_i-\bm{C}_0)$ in 2D plane $\bm{\Omega}_0$. Given optimal pole $\bm{C}_0^*$, the rotation angle for the $i$-th candidate inlier correspondence can be computed as 
\begin{equation}
    \theta_i = \angle(\hat{\bm{p}}_i-\bm{C}_0^*,\hat{\bm{q}}_i-\bm{C}_0^*).
    \label{theta definition}
\end{equation}
Inspired by the \textit{uniform grid approach}\cite{nesterov2018lectures}, we take $s$ equally spaced 1D grids on $[0,2\pi]$ whose centers are $\{\theta_k=(2k-1)\pi/s\}_{k=1}^s$ as the complete solution domain of rotation angle. Therefore, given the set of candidate inlier correspondences from problem (\ref{problem2}), the third sub-problem aims at estimating the optimal $\theta$ to maximize the cardinality of the inlier set, defined by
\begin{equation}
 \begin{aligned}
     \underset {\theta\in \{\theta_k\}_{k=1}^s,\mathcal{I}_3\subseteq\mathcal{I}_2^*} {\max} \ &\left|\mathcal{I}_3\right| \\
     \text{s.t.}\quad &\left|\theta - \angle(\hat{\bm{p}}_i-\bm{C}_0^*,\hat{\bm{q}}_i-\bm{C}_0^*)\right| \leq \zeta,\ \forall i\in\mathcal{I}_3  
     \label{problem3}
 \end{aligned}    
\end{equation}
where $\mathcal{I}_3$ is the inlier set extracted from the cardinality-maximized inlier set $\mathcal{I}_2^*$, and $\zeta\triangleq\pi/s$ is the inlier threshold. We can apply \textit{voting-based method}\cite{glent2014search,scaramuzza20111,yang2020teaser,yang2023mutual} to efficiently address this consensus maximization problem. For $N''\triangleq\left|\mathcal{I}_2^*\right|$ candidate inlier correspondences, we can vote $\theta^*$ with the largest consensus set $\mathcal{I}_3^*$ from $\{\theta_i\}_{i=1}^{N''}$. Due to its logical simplicity, we omit the global voting algorithm in this paper. Considering the trade-off between accuracy and efficiency, we set $s=360$ as an invariant parameter. An visualized illustration of the proposed global voting method is presented in Fig.~\ref{fig_2}. 

After obtaining optimal $\theta^*$, according to Eq.~(\ref{2d_equation}) and Eq.~(\ref{2D_pure_rotation}), we can derive that
\begin{equation}
    \hat{\bm{t}}_{\perp}^*=(\bm{I}-\hat{\bm{R}}_{\theta}^*)\bm{C}_0^*
\end{equation}
where $\bm{I}$ is an identity matrix. Then, according to Eq.~(\ref{20}) and Eq.~(\ref{21}), we have
\begin{equation}
    \bm{R}_{\theta}^*={
    \begin{bmatrix}
    \cos\theta^* & -\sin\theta^* &0\\
    \sin\theta^* & \cos\theta^* &0 \\
    0 &0 &1 \\
    \end{bmatrix}},\ 
    \tilde{\bm{t}}_{\perp}^* = {
    \begin{bmatrix}
    (\bm{I}-\hat{\bm{R}}_{\theta}^*)\bm{C}_0^* \\
    0 \\        
    \end{bmatrix}}
\end{equation}
Finally, according to Eq.~(\ref{to_screw_axis}) and Eq.~(\ref{new coordinates}), we can obtain the optimal rotation and translation by
\begin{align}
    \bm{R}^*&=(\bm{R}_{\bm{v}_q}^{\bm{e}_z})^{\mathrm{T}}\bm{R}_{\theta}^*\bm{R}_{\bm{v}_q}^{\bm{e}_z} \bm{R}_{\bm{v}_p}^{\bm{v}_q}\\
    \bm{t}^*&=\bm{t}_{\perp}^*+\bm{t}_{\parallel}^*=(\bm{R}_{\bm{v}_q}^{\bm{e}_z})^{\mathrm{T}}\tilde{\bm{t}}_{\perp}^*+\bm{t}_{\parallel}^*
\end{align}

\section{Simultaneous Pose and Correspondence Registration}
Since the performance limitation of current 3D feature matching methods, we have to face the situation that the correspondences are unknown in some practical applications, such as when the point clouds are not sampled densely from smooth surfaces\cite{7368945}. Therefore, we explore the feasibility and potential of our proposed approach in solving the more challenging SPCR problem in this section.

We assume that point clouds $\mathcal{P}=\{\bm{p}_i\}_{i=1}^{M}$ and $\mathcal{Q}=\{\bm{q}_j\}_{j=1}^{N}$ are the source and target point clouds, respectively. Following \cite{7381673,campbell2018globally}, and assuming known gravity directions, we have the following consensus maximization problem for correspondence-free registration
\begin{equation}
	\begin{aligned}
		\underset {\theta,\bm{C},\bm{t}_{\parallel},\mathcal{I}\subseteq\mathcal{S}} {\max} \ &\left|\mathcal{I}\right| \\
		\text{s.t.}\quad &\exists j \in\mathcal{K},\ \left\|\bm{S}(\bm{p}'_i)+\bm{t}_{\parallel}-\bm{q}_j\right\|\leq\epsilon,\ \forall i\in\mathcal{I},\label{SPCR}
	\end{aligned}
\end{equation}
where $\bm{p}'_i=\bm{R}_{\bm{v}_p}^{\bm{v}_q}\bm{p}_i$ and $\mathcal{S}=\left\{1,\dots,M\right\}$, $\mathcal{K}=\left\{1,\dots,N\right\}$ are the sets of indices for point clouds $\mathcal{P}$, $\mathcal{Q}$, respectively. According to the derivation in Section \ref{stage1}, we can obtain a new consensus maximization sub-problem for estimating the translation parallel to the screw axis without correspondences, which is defined by
\begin{equation}
	\begin{aligned}
		\underset {l,\mathcal{I}_1\subseteq\mathcal{S}} {\max} \ &\left|\mathcal{I}_1\right| \\
		\text{s.t.}\quad &\exists j \in\mathcal{K},\ l\in\left[ l_{ij}^-,l_{ij}^+ \right],\ \forall i\in\mathcal{I}_1,\\ 
		& l_{ij}^-=-\delta-\bm{v}_q^\mathrm{T}\left(\bm{p}'_i-\bm{q}_j\right),\\
		& l_{ij}^+=\delta-\bm{v}_q^\mathrm{T}\left(\bm{p}'_i-\bm{q}_j\right).\\\label{problem1_spcr}
	\end{aligned}
\end{equation}

For sub-problem (\ref{problem1_spcr}), we introduce an \textit{interval merging} algorithm\cite{de1997computational} (shown in appendix A) to effectively reduce the number of input intervals before interval stabbing. This operation can effectively avoid the all-to-all correspondence assumption\cite{yang2020teaser,liu2023absolute,li2023fast}, in which the assumed correspondence-based registration problem is extremely outlier-contaminated (with high input number and high outlier rate), thereby affecting the efficiency. After interval merging, the max-stabbing probe can only cross through at most one interval for each point $\bm{p}'_i$. From the perspective of inlier set cardinality, each point $\bm{p}'_i$ contributes a maximum of 1 to the cardinality of the inlier set. Therefore, the interval merging algorithm is executed for each point $\bm{p}'_i$, followed by the execution of the proposed interval stabbing algorithm for all merged intervals. The subplot of Fig.~\ref{merge_and_stabbing} illustrates an example of the visualization results about interval merging and stabbing.

After solving the correspondence-free sub-problem (\ref{problem1_spcr}), we can obtain candidate inlier correspondences that only satisfy the 1-DOF constraint. While it may not be possible to identify outliers that coincidentally satisfy this constraint, we can readily employ the proposed correspondence-based methods to tackle  the second and third sub-problems (Eq.~(\ref{problem2}) and Eq.~(\ref{problem3})), enabling the estimation of the remaining translation and the rotation angle. The three-stage calculation flow for the SPCR problem is shown in Fig.~\ref{merge_and_stabbing}.

\section{Experiments}

\begin{figure*}
 \centering
 \begin{tabular}{c c c}
    \multicolumn{3}{c}{
    \begin{minipage}{0.5\textwidth}
    \centering
    \raisebox{-.1\height}{\includegraphics[width=\linewidth]{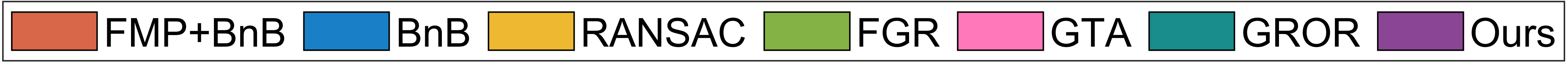}}
    \end{minipage} }
 \\ 
    \begin{minipage}[b]{0.3\textwidth}
    \centering
    \raisebox{-.1\height}{\includegraphics[width=\linewidth]{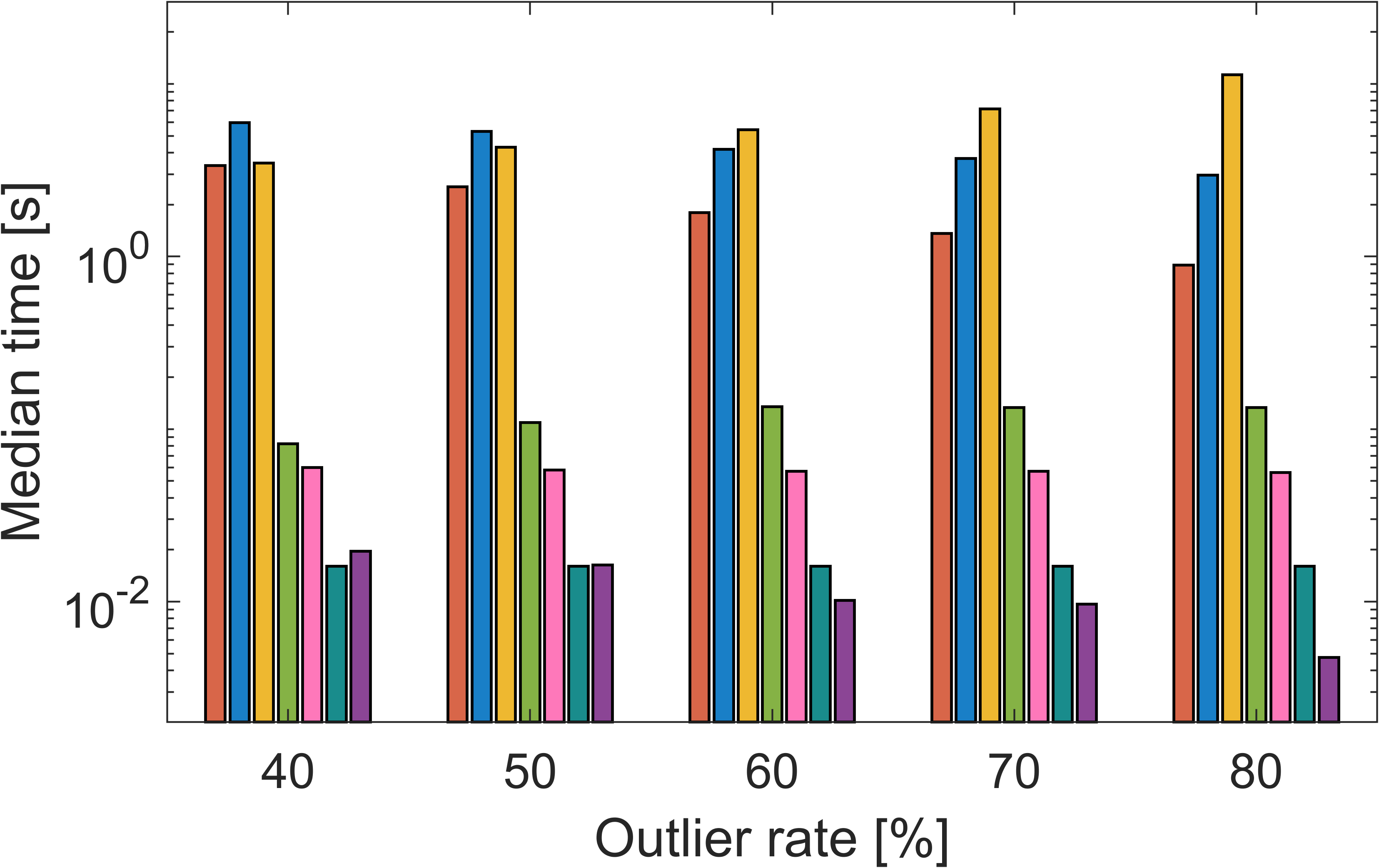}}
    \end{minipage}
  & \begin{minipage}[b]{0.3\textwidth}
    \centering
    \raisebox{-.1\height}{\includegraphics[width=\linewidth]{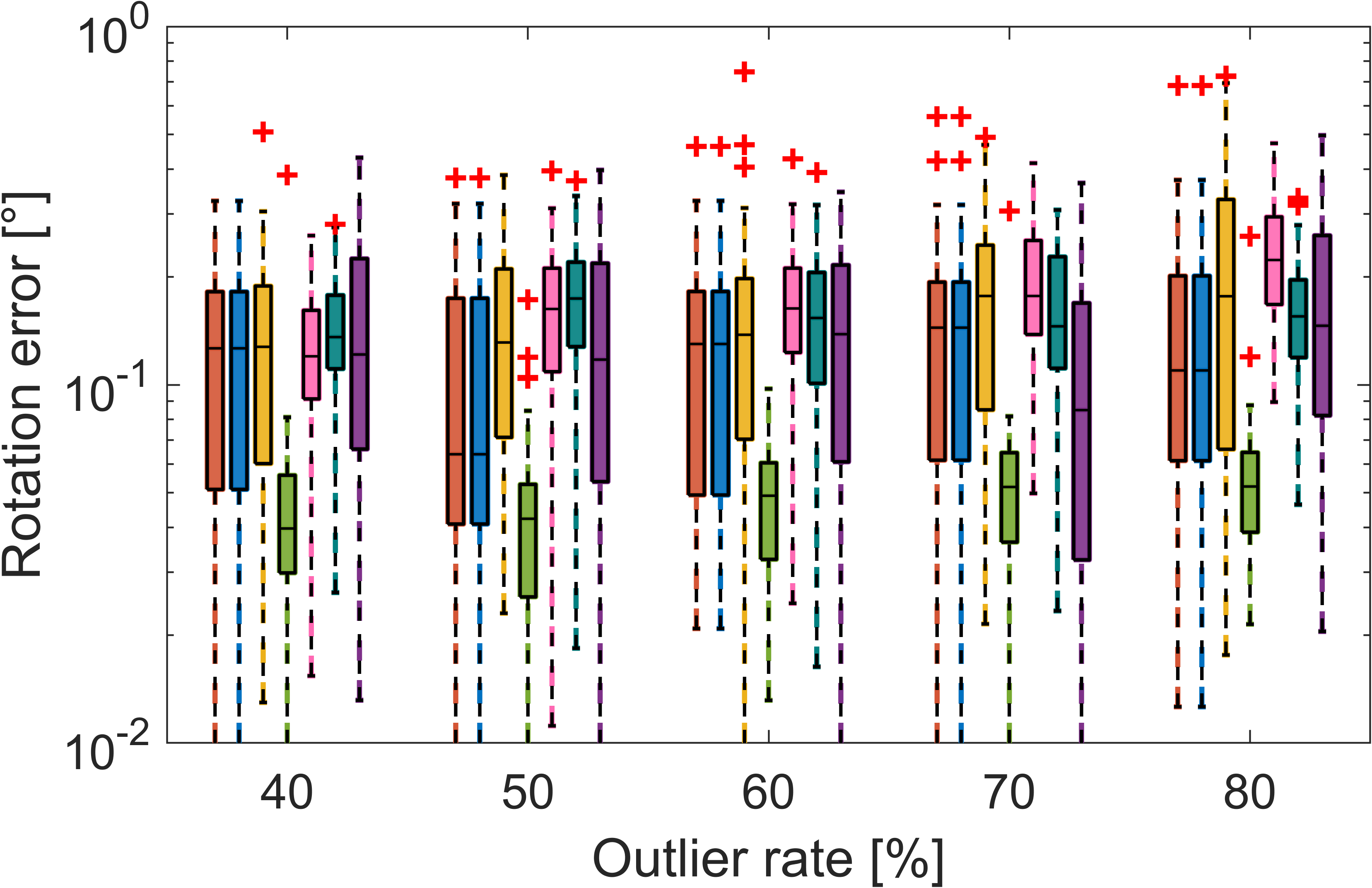}}
    \end{minipage}
  & \begin{minipage}[b]{0.3\textwidth}
    \centering
    \raisebox{-.1\height}{\includegraphics[width=\linewidth]{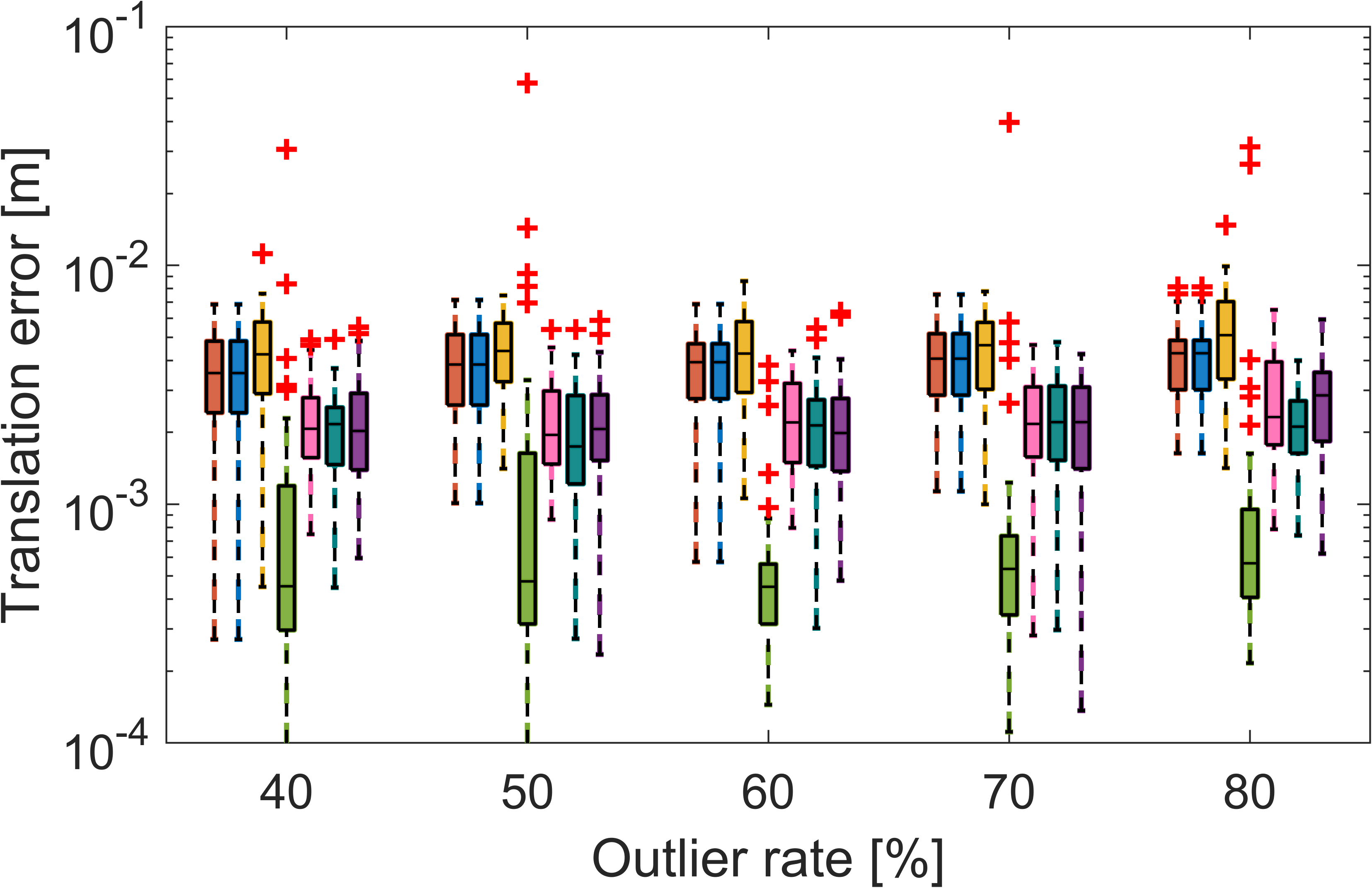}}
    \end{minipage}
 \\ 
    \begin{minipage}[b]{0.3\textwidth}
    \centering
    \raisebox{-.1\height}{\includegraphics[width=\linewidth]{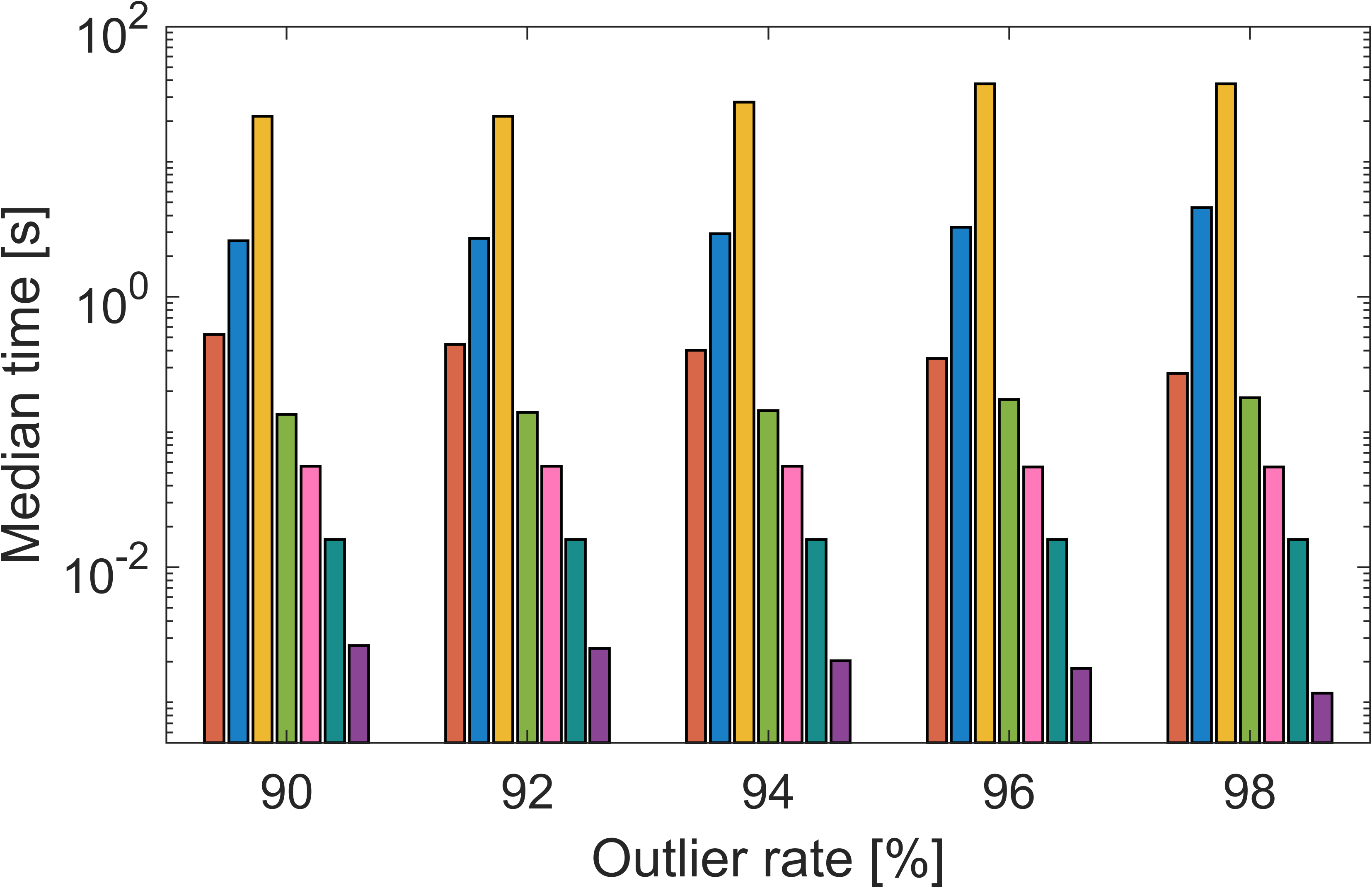}}
    \end{minipage}
  & \begin{minipage}[b]{0.3\textwidth}
    \centering
    \raisebox{-.1\height}{\includegraphics[width=\linewidth]{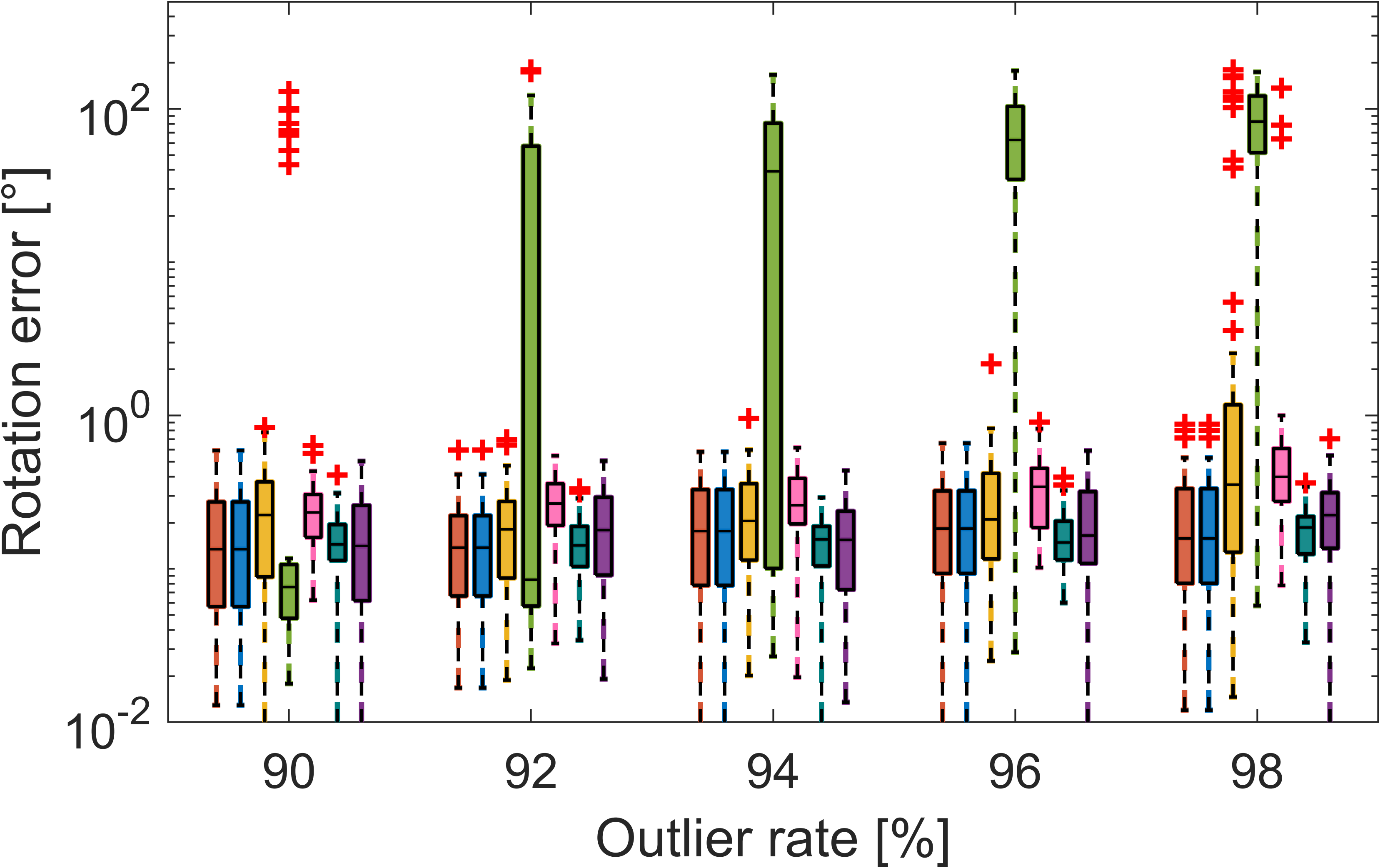}}
    \end{minipage}
  & \begin{minipage}[b]{0.3\textwidth}
    \centering
    \raisebox{-.1\height}{\includegraphics[width=\linewidth]{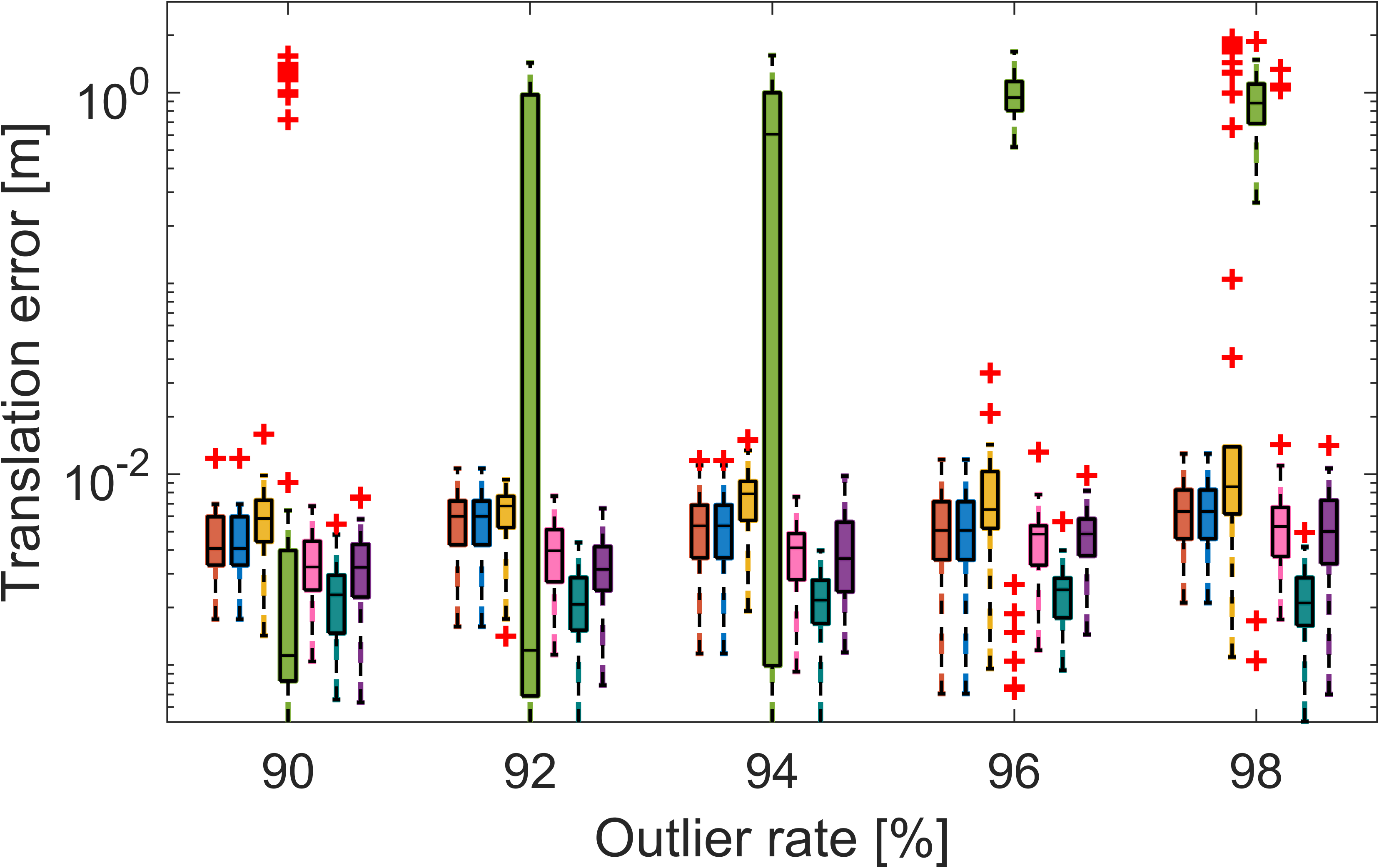}}
    \end{minipage}    
 \\ \footnotesize(a) Median running time 
  & \footnotesize(b) Rotation error 
  & \footnotesize(c) Translation error 
 \end{tabular}
 \caption{Controlled experiments on the outlier rate. (a) Median running time, (b) Rotation error, and (c) Translation error.}
 \label{synthetic_outlier}
\end{figure*}

\subsection{Experimental Setting}
In this section, we compare the performance of the proposed method with SOTA correspondence-based approaches by utilizing both synthetic and real-world data. Besides, we compare our extended SPCR method with different correspondence-free approaches. All experiments are implemented in a PC with an AMD 5600x CPU and 32GB RAM.

\subsubsection{Compared methods}
In the correspondences-based registration experiments, our method is compared with the 4-DOF as well as 6-DOF registration methods, including:
\begin{itemize}
    \item FMP+BnB\cite{cai2019practical}: A joint 4-DOF BnB method using fast match pruning (FMP) as the preprocessing step, programmed in C++.
    \item BnB\cite{cai2019practical}: A joint 4-DOF BnB method without fast match pruning.
    \item RANSAC\cite{fischler1981random}: A typical consensus maximization approach, which is customized for 4-DOF registration and programmed in C++. The maximum number of iterations is set to $10^7$. 
    \item FGR\cite{zhou2016fast}: A fast M-estimation method, which is customized for 4-DOF registration and programmed in C++. The annealing rate is set to $1.1$.
    \item GTA\cite{albarelli2010game}: A 6-DOF outlier removal method based on the game-theoretic framework, programmed in C++.
    \item GROR\cite{yan2022new}: A 6-DOF fast outlier removal method based on the reliability of the correspondence graph, programmed in C++.
\end{itemize}

In addition, our extended method is compared with the following global and local methods for the SPCR problem:
\begin{itemize}
    \item GO-ICP\cite{7368945}: A globally optimal approach that combines global BnB and local ICP, programmed in C++.
    \item GO-ICPT\cite{7368945}: A variant of GO-ICP with a specified trimming percentage for outlier removal.
    \item ICP\cite{icp}: A classic rigid registration algorithm that finds the optimal transformation by solving the least squares problem during each iteration, implemented in MATLAB Toolbox\footnote{\url{https://www.mathworks.com/help/vision/ref/pcregistericp.html}}.
    \item CPD\cite{CPD}: A probabilistic method that transfers the registration as a probability density estimation problem, programmed in C language.
    \item Gmmreg\cite{jian}: A probabilistic registration method that represents the input point clouds as GMMs and aligns them, programmed in C language.
\end{itemize}
In both correspondence-based and SPCR experiments, the proposed method is consistently referred to as \textbf{Ours}, which is implemented in Matlab 2022b. Code will be available at \url{https://github.com/Xinyi-tum/4DOF-Registration}.

\subsubsection{Evaluation metrics}
Following \cite{cai2019practical,yan2022new}, we employ running time $T$, rotation error $RE$, translation error $TE$, and success rate $SR$ to evaluate the registration performance. The calculations of $RE$ and $TE$ are shown as below:
\begin{align}
    RE &= \arccos \left(\frac{\mathrm{Tr}(\bm{R}_{gt}^{\mathrm{T}}\bm{R}^*)-1}{2}\right) \\
    TE &= \left\|\bm{t}_{gt} - \bm{t}^*\right\|        
\end{align}
where $\bm{R}_{gt}$ and $\bm{t}_{gt}$ are ground truth rotation and translation, $\bm{R}^*$ and $\bm{t}^*$ are estimated results, and $\mathrm{Tr}(\cdot)$ is the trace of a matrix. The point clouds are successfully aligned when $RE$ and $TE$ are within the predefined thresholds.

\subsection{Synthetic Data Experiments}
\begin{figure*}
 \centering
 \begin{tabular}{c c c}
    \multicolumn{3}{c}{
    \begin{minipage}{0.5\textwidth}
    \centering
    \raisebox{-.1\height}{\includegraphics[width=\linewidth]{Pics/legend.png}}
    \end{minipage} } \\
    \begin{minipage}[b]{0.3\textwidth}
    \centering
    \raisebox{-.1\height}{\includegraphics[width=\linewidth]{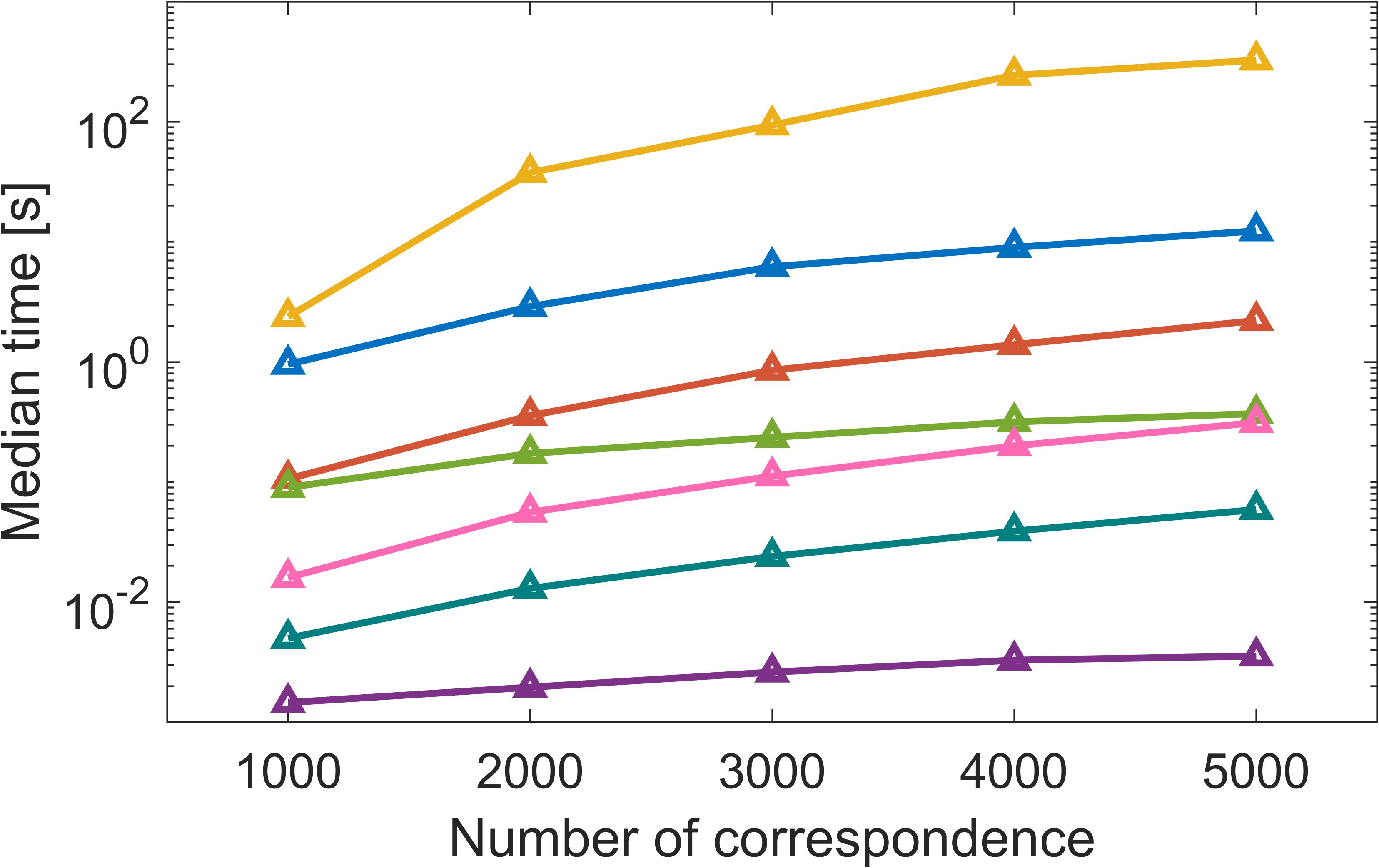}}
    \end{minipage}
  & \begin{minipage}[b]{0.3\textwidth}
    \centering
    \raisebox{-.1\height}{\includegraphics[width=\linewidth]{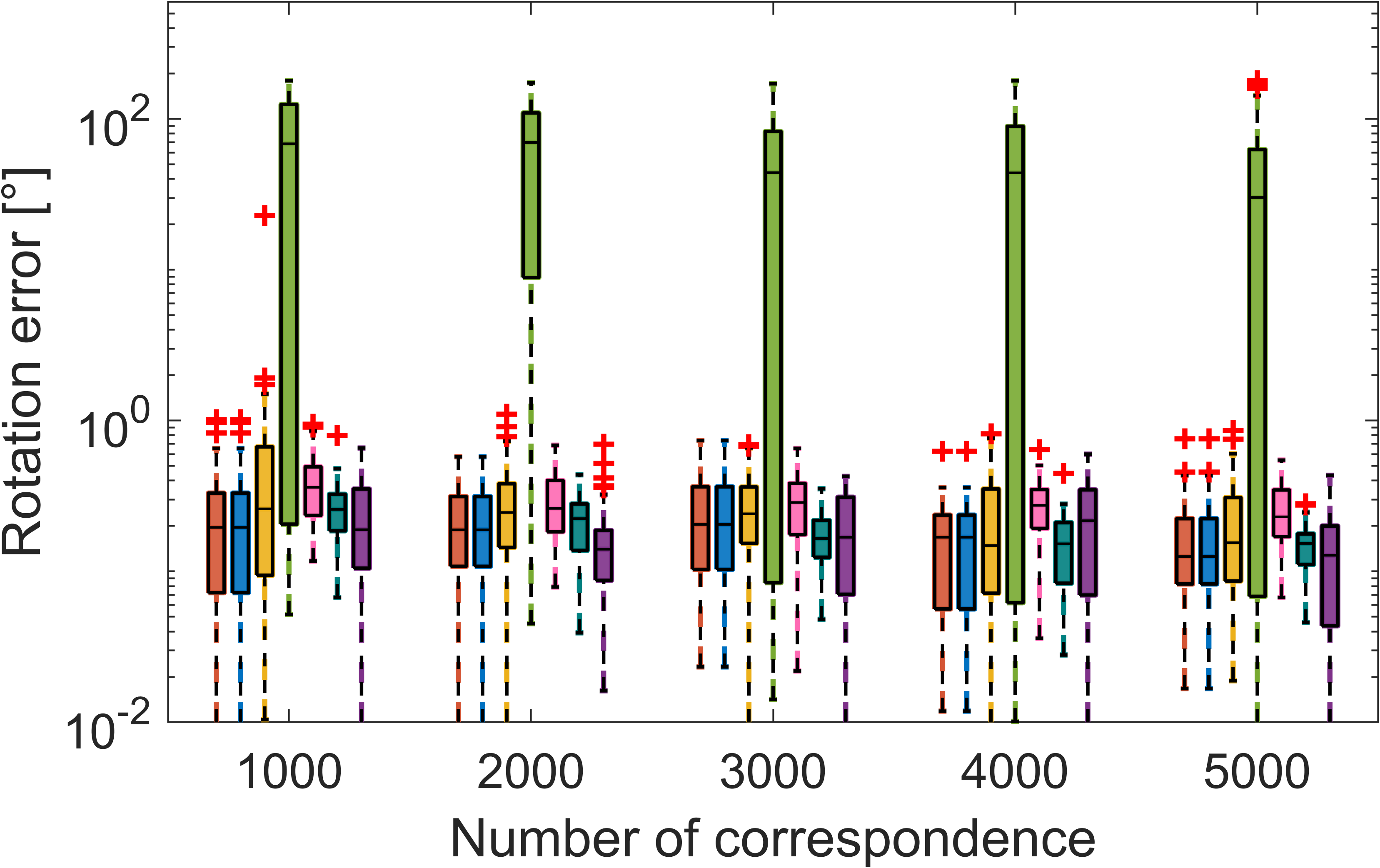}}
    \end{minipage}
  & \begin{minipage}[b]{0.3\textwidth}
    \centering
    \raisebox{-.1\height}{\includegraphics[width=\linewidth]{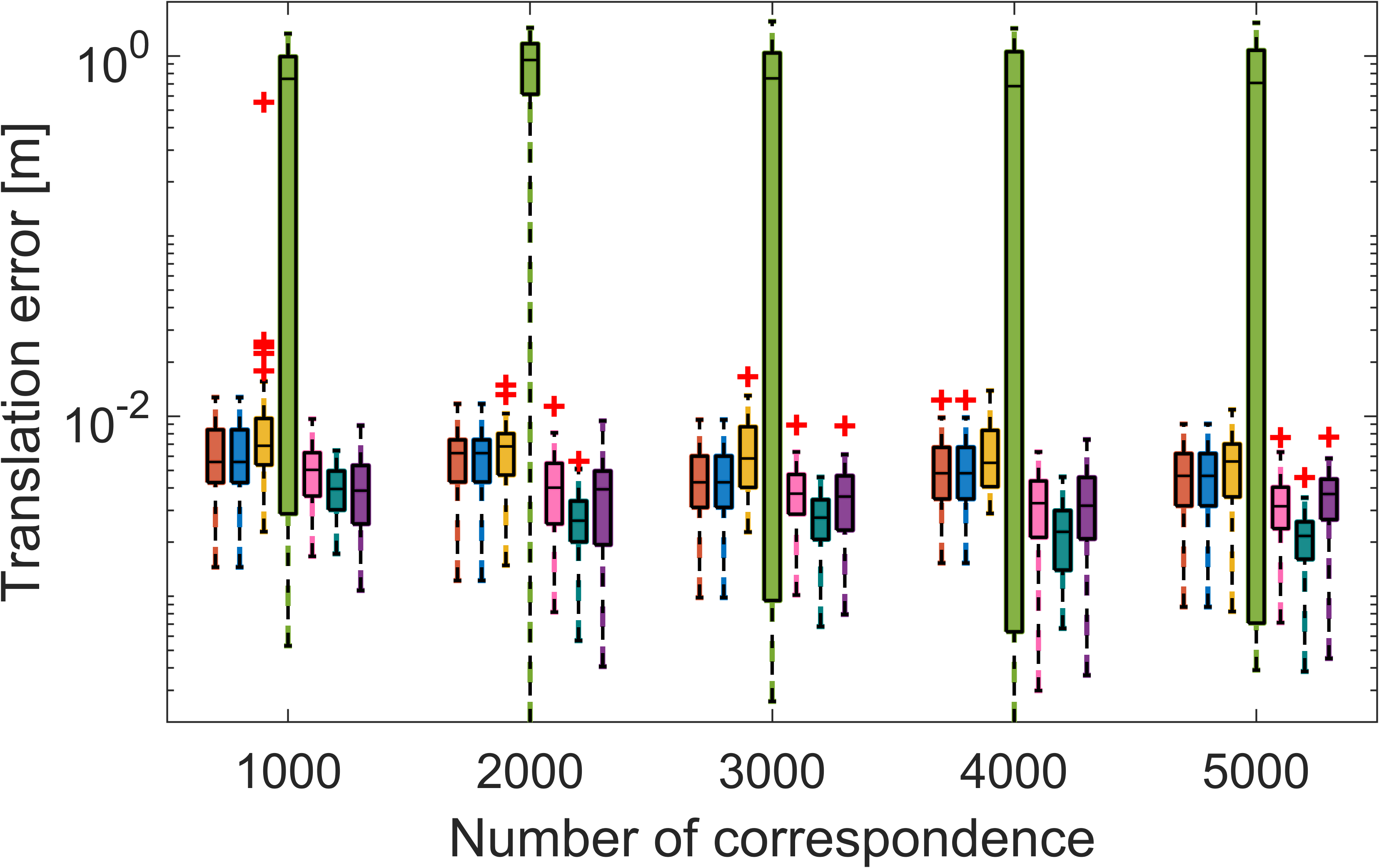}}
    \end{minipage}
 \\ \footnotesize(a) Median running time
  & \footnotesize(b) Rotation error
  & \footnotesize(c) Translation error
 \end{tabular}
 \caption{Controlled experiments on the number of correspondences. (a) Median running time, (b) Rotation error, and (c) Translation error.}
 \label{synthetic_num_point}
\end{figure*}


\begin{table*}\footnotesize
 \setlength{\tabcolsep}{7pt}
 \renewcommand{\arraystretch}{1.2}
 \caption{Controlled experiments on the number of correspondences in the extreme case. The results include median running time (s) $|$ success rate (\%) with cases satisfying $ RE \leq 1\degree$ and $TE \leq 0.01\si{\metre}$.}
 \label{high_corr}
 \centering
 \begin{tabular}{c r@{\hspace{0pt}} l r@{\hspace{0pt}} l r@{\hspace{0pt}} l r@{\hspace{0pt}} l r@{\hspace{0pt}} l r@{\hspace{0pt}} l r@{\hspace{0pt}} l}
    \hline
    \multirow{2}*{Method} 
    & \multicolumn{14}{c}{Number of correspondence ($\times10^3$)}\\
    \cline{2-15}
    {} & \multicolumn{2}{c}{10} & \multicolumn{2}{c}{20} & \multicolumn{2}{c}{50} & \multicolumn{2}{c}{100} & \multicolumn{2}{c}{200} & \multicolumn{2}{c}{500} & \multicolumn{2}{c}{1000}\\
    \hline 
    FMP+BnB &$7.375|$ &$100$ &$26.73|$ &$100$ &$165.9|$ &$100$ &$647.7|$ &$100$ &\multicolumn{2}{c}{$>1800$s} &\multicolumn{2}{c}{-} &\multicolumn{2}{c}{-}\\
    BnB &$32.20|$ &$100$ &$83.85|$ &$100$ &$374.8|$ &$100$ &$1169|$ &$100$ &\multicolumn{2}{c}{$>1800$s} &\multicolumn{2}{c}{-} &\multicolumn{2}{c}{-}\\
    RANSAC &$1474|$ &$94.0$ &\multicolumn{2}{c}{$>1800$s} &\multicolumn{2}{c}{-} &\multicolumn{2}{c}{-} &\multicolumn{2}{c}{-} &\multicolumn{2}{c}{-} &\multicolumn{2}{c}{-}\\
    FGR &$0.725|$ &$50.0$ &$1.875|$ &$56.0$ &$5.613|$ &$48.0$ &$7.540|$ &$72.0$ &$20.66|$ &$68.0$ &$44.94|$ &$62.0$ &$100.5|$ &$62.0$ \\
    GTA &$1.287|$ &$96.0$ &$4.232|$ &$0.00$ &\multicolumn{2}{c}{Out of memory} &\multicolumn{2}{c}{-} &\multicolumn{2}{c}{-} &\multicolumn{2}{c}{-} &\multicolumn{2}{c}{-}\\
    GROR &$0.217|$ &$100$ &$0.835|$ &$100$ &$4.866|$ &$100$ &$19.28|$ &$100$ &$77.04|$ &$100$ &$494.3|$ &$100$ &\multicolumn{2}{c}{$>1800$s} \\
    Ours &$0.009|$ &$100$ &$0.019|$ &$100$ &$0.039|$ &$100$ &$0.097|$ &$100$ &$0.241|$ &$100$ &$1.064|$ &$100$ &$1.905|$ &$100$ \\
    \hline 
 \end{tabular}
\end{table*}

\begin{figure*}
 \centering
 \begin{tabular}{c c c}
    \multicolumn{3}{c}{
    \begin{minipage}{0.6\textwidth}
    \centering
    \raisebox{-.1\height}{\includegraphics[width=\linewidth]{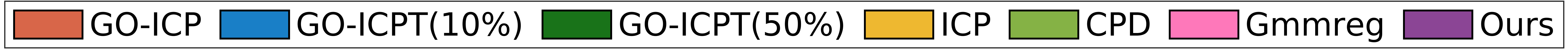}}
    \end{minipage} } \\ 
    \begin{minipage}[b]{0.3\textwidth}
    \centering
    \raisebox{-.1\height}{\includegraphics[width=\linewidth]{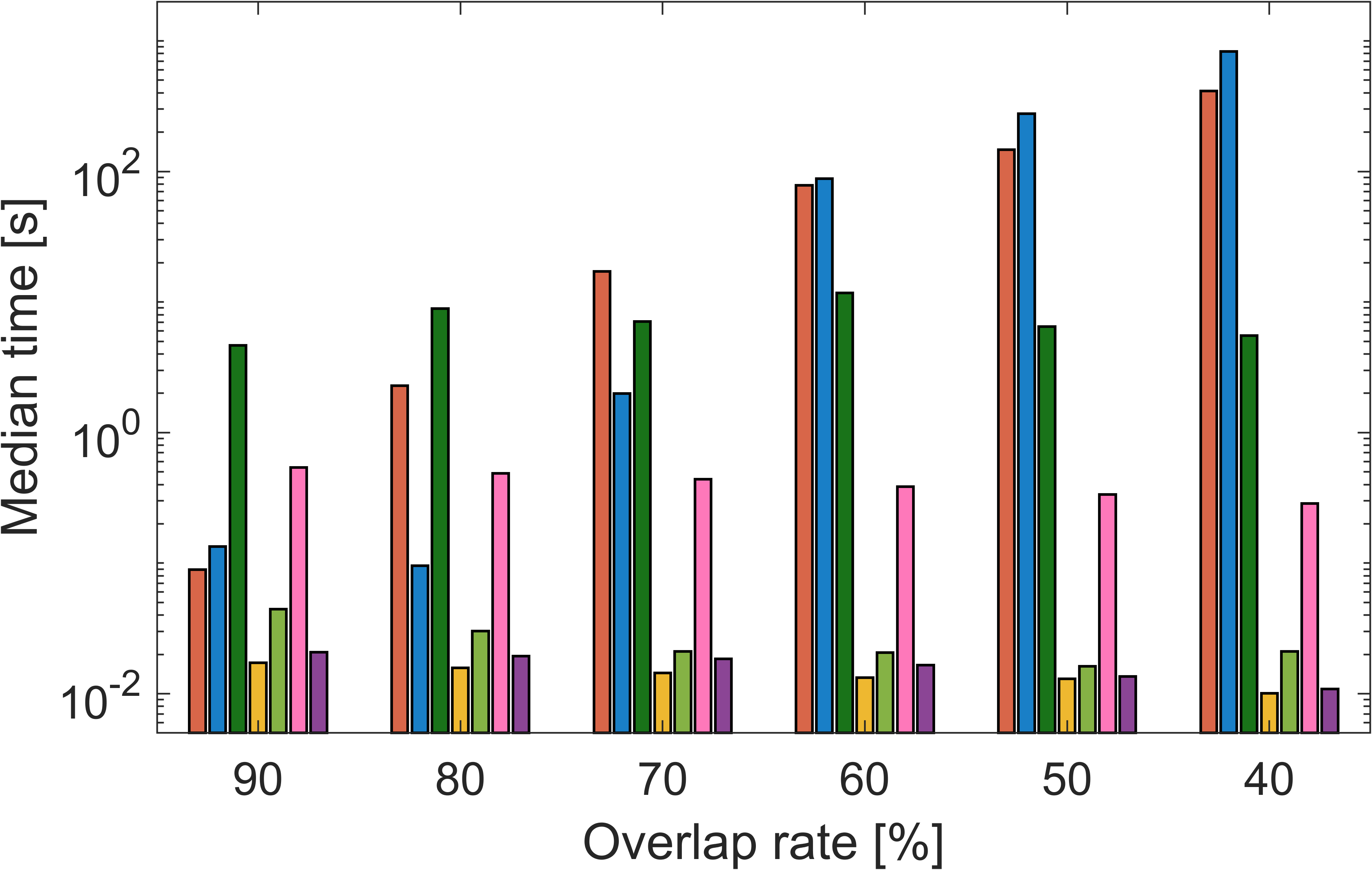}}
    \end{minipage}
  &  \begin{minipage}[b]{0.3\textwidth}
    \centering
    \raisebox{-.1\height}{\includegraphics[width=\linewidth]{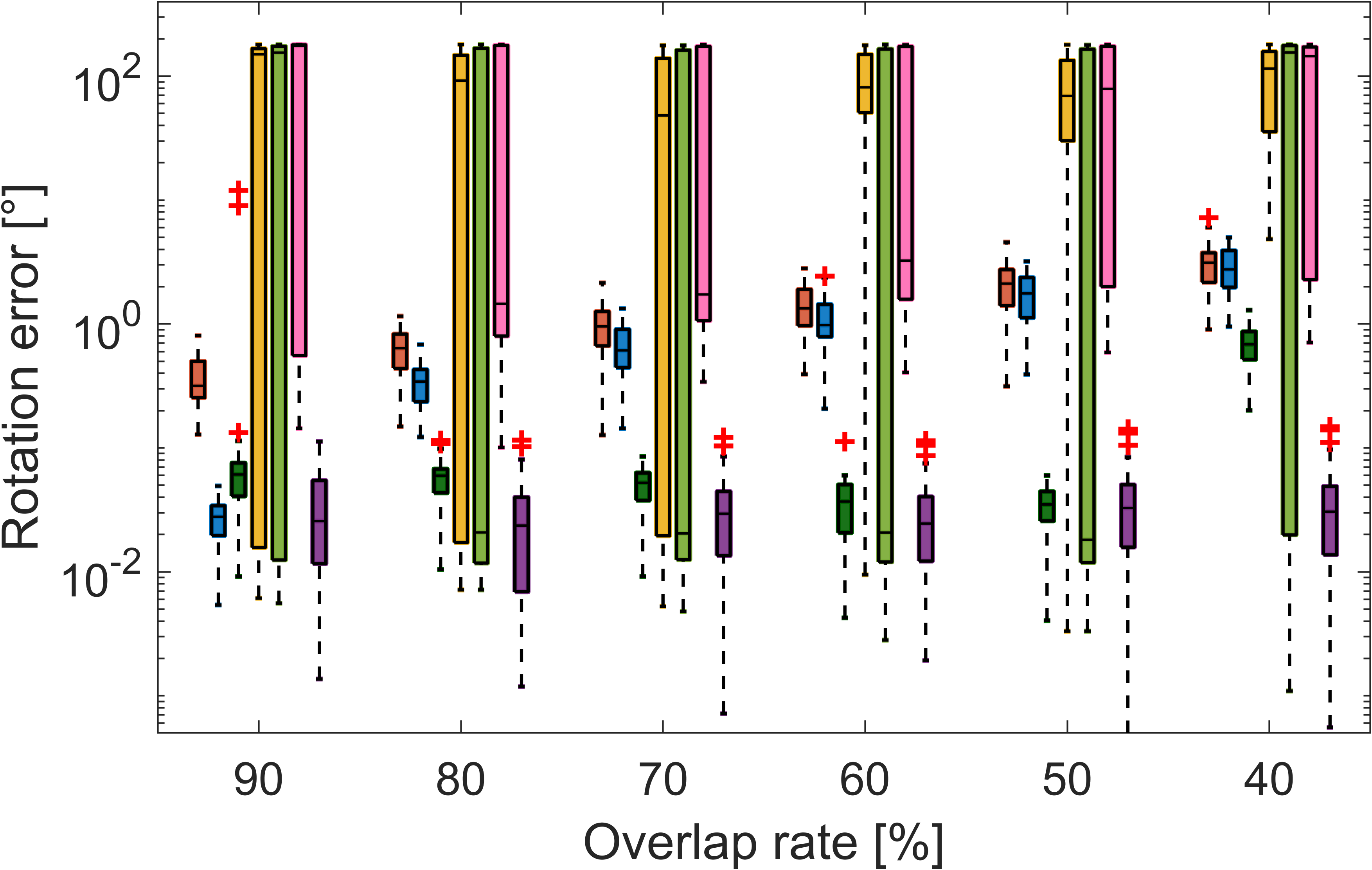}}
    \end{minipage}
  & \begin{minipage}[b]{0.3\textwidth}
    \centering
    \raisebox{-.1\height}{\includegraphics[width=\linewidth]{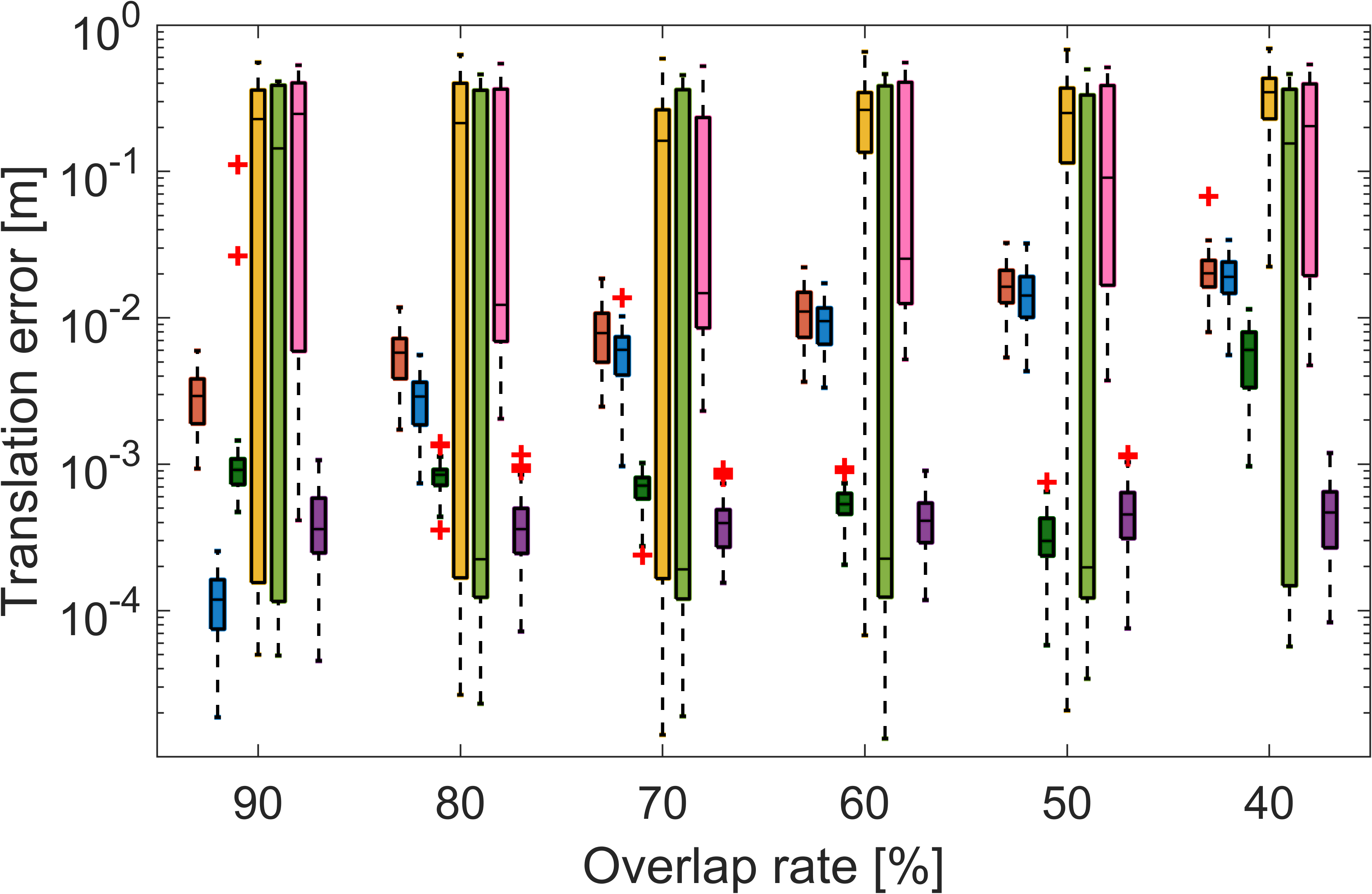}}
    \end{minipage}
 \\ \footnotesize(a) Median running time
  & \footnotesize(b) Rotation error
  & \footnotesize(c) Translation error
 \end{tabular}
 \caption{Controlled SPCR experiments on the overlap rate. (a) Median running time, (b) Rotation error, and (c) Translation error.}
 \label{synthetic_corr_free}
\end{figure*}

To verify the theoretical performance of the proposed method, we first conduct experiments using synthetic data. The source point cloud is randomly generated by creating \(N\) points distributed in the cube $[-1,1]^3$. Then the source point cloud is transformed by a random rotation, whose angle is within \([-\pi,\pi]\), and a random translation in \([-1,1]^3\) to generate the target point cloud. Due to the limitations of FMP+BnB\cite{cai2019practical}, the rotation axis is fixed as $[0,0,1]^{\mathrm{T}}$. Therefore, the gravity direction utilized in the proposed method can be set as $[0,0,-1]^T$. Subsequently, a subset of points in the target point cloud is substituted with randomly generated points in \([-1,1]^3\), imitating outliers. The outlier rate, denoted by $\eta$, represents the proportion of these substituted points relative to the total number of points. The noise is simulated by adding zero mean Gaussian noise to both source and target point clouds and the standard derivation is $\sigma=0.005$. Following \cite{yan2022new,10091912}, the inlier threshold for each method is determined by $\sigma$. Notably, the experiment on the robustness of the proposed method to gravity direction noise is given in Appendix B.

\subsubsection{Controlled experiments with different outlier rates}

This section presents two groups of controlled experiments designed to compare the outlier-robustness of the proposed method with FMP+BnB, BnB, RANSAC, FGR, GTA, and GROR. The first group of experiments has normal outlier rates, varying from $40\%$ to $80\%$ in increments of $10\%$. The second group of experiments has extremely high outlier rates, varying from $90\%$ to $98\%$ in increments of $2\%$. The correspondence number in both experiments is fixed as $N=2000$. The experiment is repeated $50$ times for each setting and each method. The median running time, rotation error, and translation error are given in Fig.~\ref{synthetic_outlier}.

In the first group of experiments, all compared methods are robust against up to $80\%$ outlier rate. In terms of rotation and translation accuracy, the proposed method demonstrates comparable performance to the other methods. However, Ours has the lowest running time except at $40\%$ outlier rate. Notably, the running time of our method gradually decreases as the outlier rate increases. This phenomenon can be attributed to the reason that in the first stage, a significant number of outlier correspondences are rejected. Because the BnB-based second stage is relatively time-consuming (worst-case exponential time complexity) among the three stages. Consequently, as the outlier rate increases, the input correspondences for the second stage gradually decrease and so does the total running time. This indicates the effectiveness of the proposed first stage in terms of outlier removal. FMP+BnB has a similar phenomenon as Ours and is more efficient than pure BnB since it also contains an efficient preprocessing step for outlier removal. Nonetheless, the SOTA FMP+BnB is approximately two orders of magnitude slower than Ours. The running time of RANSAC increases exponentially with the outlier rate, making it the slowest method starting from $60\%$ outlier rate. FGR and GTA perform well under regular outlier rates. Another SOTA GROR is relatively efficient, but is about $3$ times slower than Ours at $80\%$ outlier rate. 

In the second group of experiments, only FMP+BnB, BnB, GROR, and Ours are robust against up to $98\%$ outlier rate, while maintaining comparable registration accuracy. RANSAC only resists $94\%$ outlier rate and FGR starts breaking at $90\%$ outlier rate. Due to its high number of iterations ($10^7$), RANSAC is more robust than FGR, but is the most time-consuming method. GTA performs relatively better, but is only robust to $96\%$ outlier rate. On the other hand, Ours outperforms all methods in terms of efficiency. For instance, Ours is approximately $3\times 10^4$ times faster than RANSAC, $200$ times faster than FMP+BnB, and $10$ times faster than GROR at $98\%$ outlier rate. In general, Ours stands out as the fastest method and one of the most robust methods.

\subsubsection{Controlled experiments with different correspondences numbers}

This section presents two groups of controlled experiments aimed at comparing the registration efficiency. The correspondence number in the first group of experiment varies from $1000$ to $5000$. The outlier rate is fixed as $95\%$, and each experiment is repeated $50$ times for each setting. The experimental results are plotted in Fig.~\ref{synthetic_num_point}. Compared to other methods, the efficiency of the proposed method is less affected by the correspondence number. This implies that, with an increase in the number of correspondence, the efficiency advantage of Ours will become more prominent. Similarly, the running time of FGR exhibits a slow increase as the number of correspondences grows, resulting in only a marginal time difference between GTA and FGR at $N=5000$. However, FGR often produces unsatisfactory registration results when outlier rates are high. RANSAC is the method most sensitive to the number of correspondences and consistently exhibits the longest running time among all the methods. Particularly, as the correspondences number increases from $1000$ to $5000$, the efficiency of Ours increases from being roughly $3$ times faster than the second fastest method, GROR, to being $16$ times faster. In the case of FMP+BnB, this improvement ranges from being $70$ times faster to $613$ times faster. 

To further explore the scalability of Ours, the second group of experiments has extremely high correspondence numbers, varying from $10k$ to $1000k$, where $k$ represents $10^3$. This experiment also has 50 trials for each setting, and the outlier rates remain at $95\%$. The results, including median running times and success rates, are presented in Table~\ref{high_corr}. The thresholds for successful registration are $RE\leq1\degree$ and $TE\leq0.01\si{m}$. Notably, if the running time of a method exceeds $1800$s, the results for that method are not reported. As can be seen from the results of success rate, only FMP+BnB, BnB, GROR, and Ours can successfully register all point cloud pairs even with a large scale of correspondences. However, both FMP+BnB and BnB surpass $1800$s when the number of points reaches $200k$. Although FGR demonstrates better efficiency, it exhibits low success rate. As $N$ increases to $50k$, GTA is unable to even operate properly due to the memory problem. Furthermore, the running time of GROR grows faster than that of FGR as the correspondence number increases. Therefore, GROR becomes slower than FGR starting from $N=100k$ and it exceeds the time limit of $1800$s at $N=1000k$. However, thanks to the proposed transformation strategy, the results indicate that the proposed method has the best scalability among all methods. It can successfully align $100k$ correspondences in $0.1$s and register $1000k$ correspondences in $2$s. 

\begin{figure}
 \centering
 \begin{tabular}{c c c c c c}
    \multicolumn{3}{c}{
    \begin{minipage}[b]{0.4\columnwidth}
    \centering
    \raisebox{-.1\height}{\includegraphics[width=\linewidth]{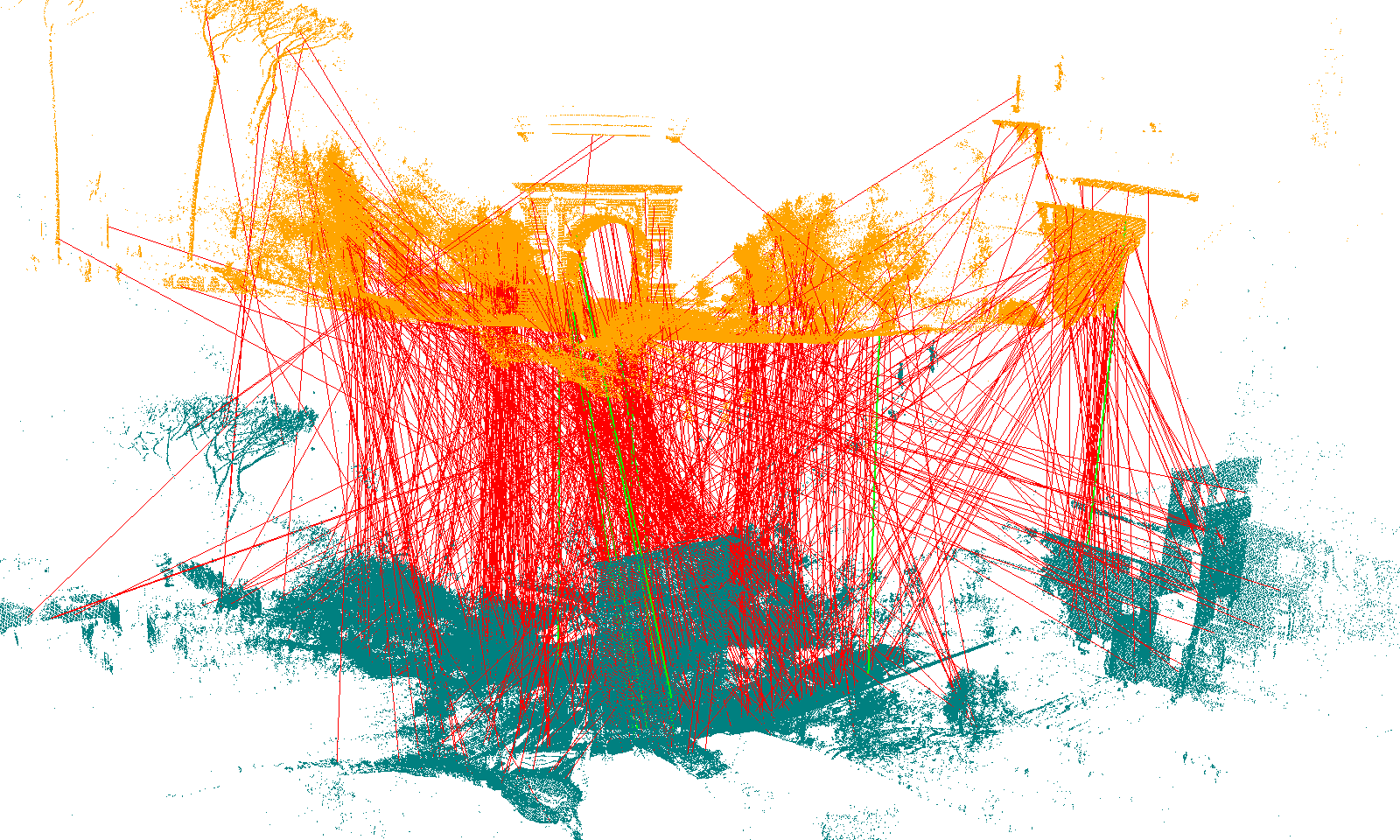}}
    \end{minipage}    }
 &  \multicolumn{3}{c}{
    \begin{minipage}[b]{0.4\columnwidth}
    \centering
    \raisebox{-.1\height}{\includegraphics[width=\linewidth]{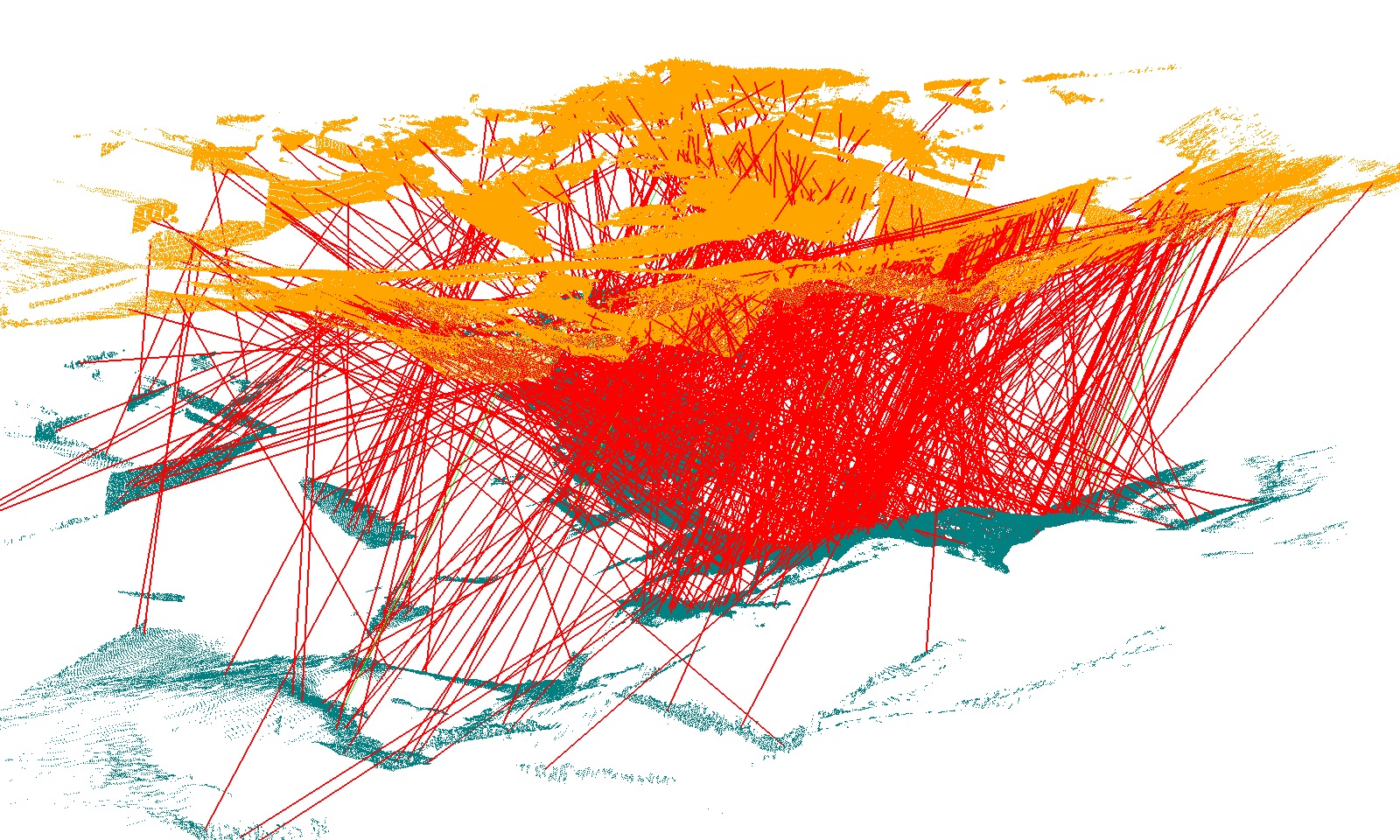}}
    \end{minipage}    }
 \\ \multicolumn{3}{c}{\footnotesize(a) \textit{Arch} s1-s2}
 &  \multicolumn{3}{c}{\footnotesize(b) \textit{Courtyard} s1-s2}
 \\ \multicolumn{3}{c}{\footnotesize$N = 12617$}
 &  \multicolumn{3}{c}{\footnotesize$N = 23572$}
 \\ \multicolumn{3}{c}{\footnotesize$\eta = 98.45\%$}
 &  \multicolumn{3}{c}{\footnotesize$\eta = 93.69\%$}
 \\ \multicolumn{2}{c}{
    \begin{minipage}[b]{0.31\columnwidth}
    \centering
    \raisebox{-.1\height}{\includegraphics[width=\linewidth]{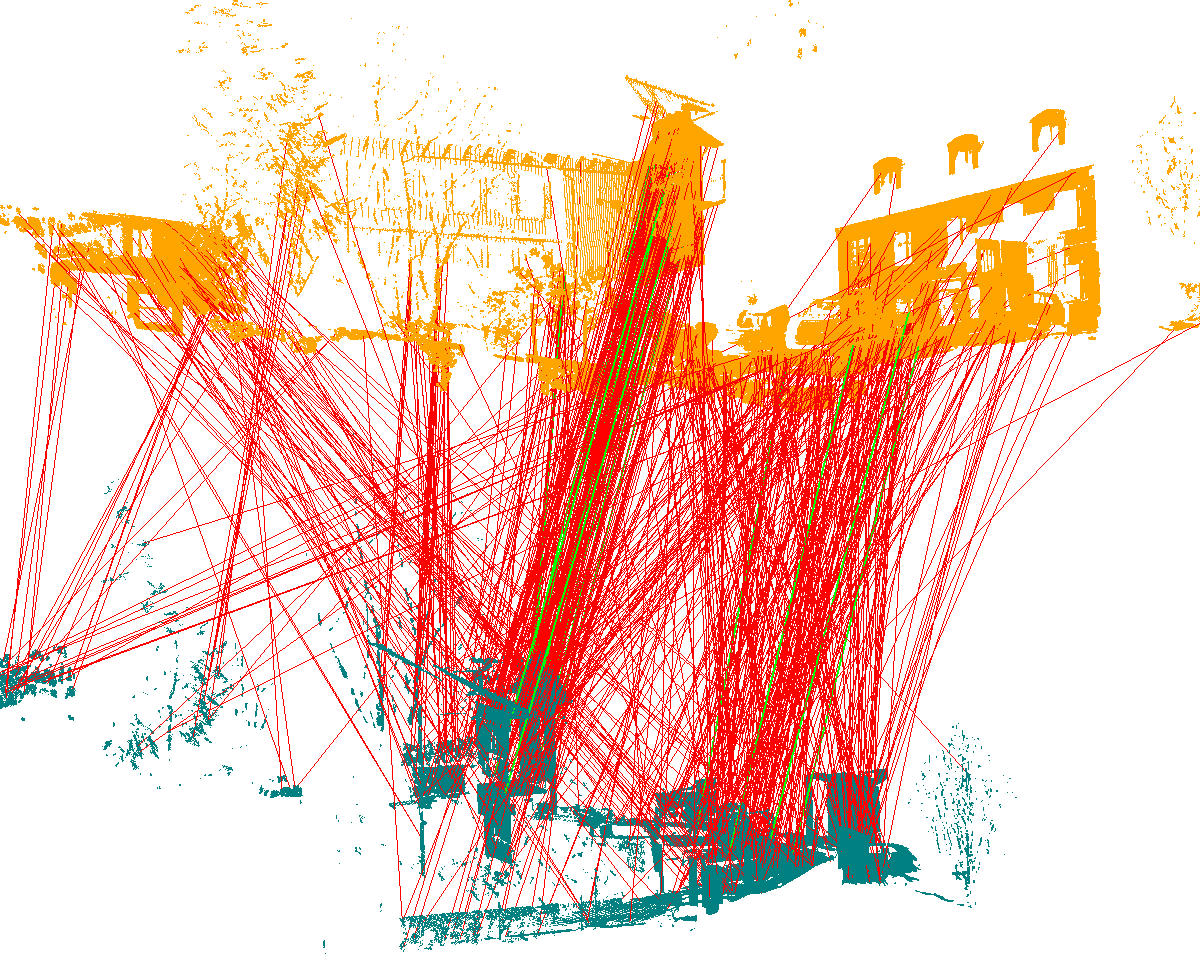}}
    \end{minipage}    }
 &  \multicolumn{2}{c}{
    \begin{minipage}[b]{0.31\columnwidth}
    \centering
    \raisebox{-.1\height}{\includegraphics[width=\linewidth]{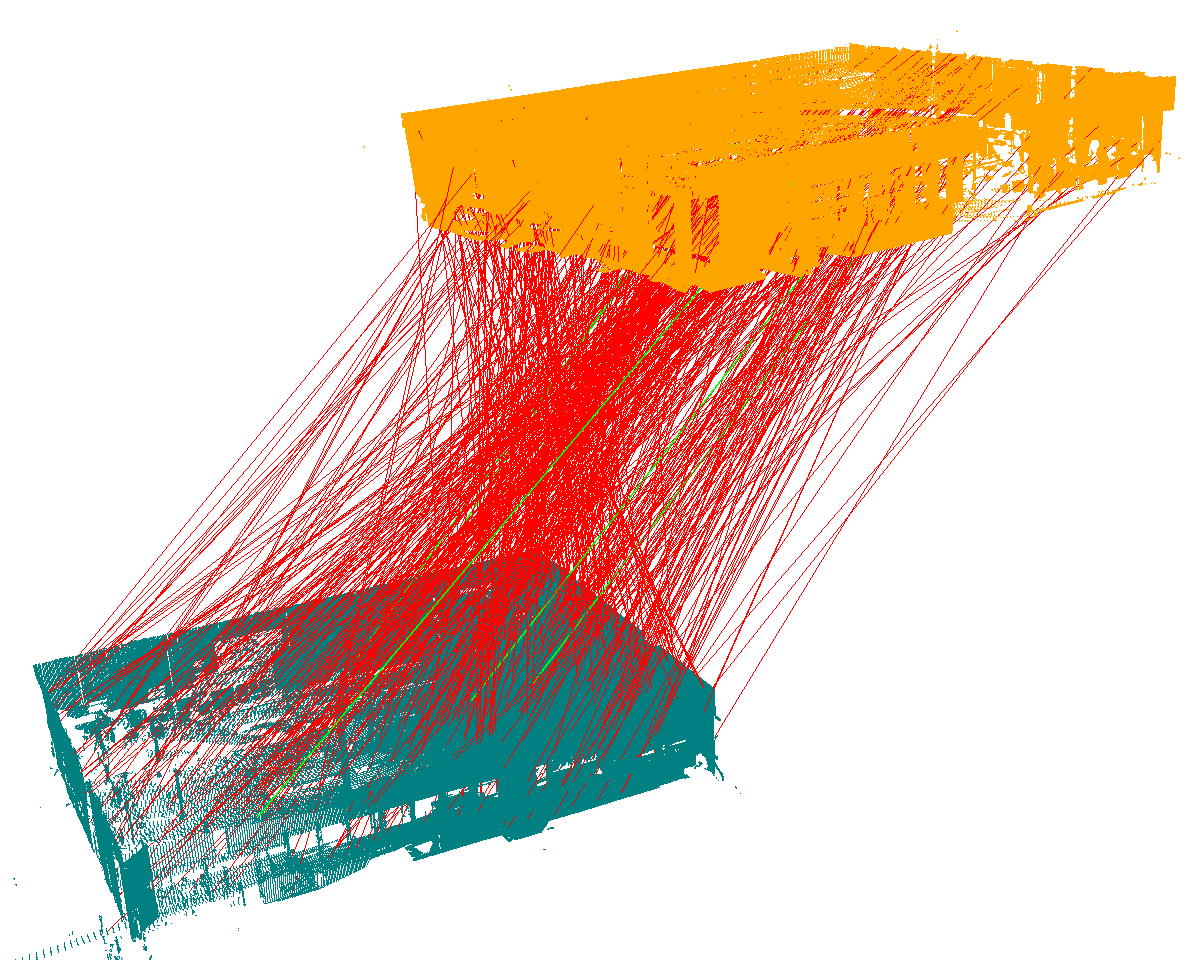}}
    \end{minipage}    }
 &  \multicolumn{2}{c}{
    \begin{minipage}[b]{0.31\columnwidth}
    \centering
    \raisebox{-.1\height}{\includegraphics[width=\linewidth]{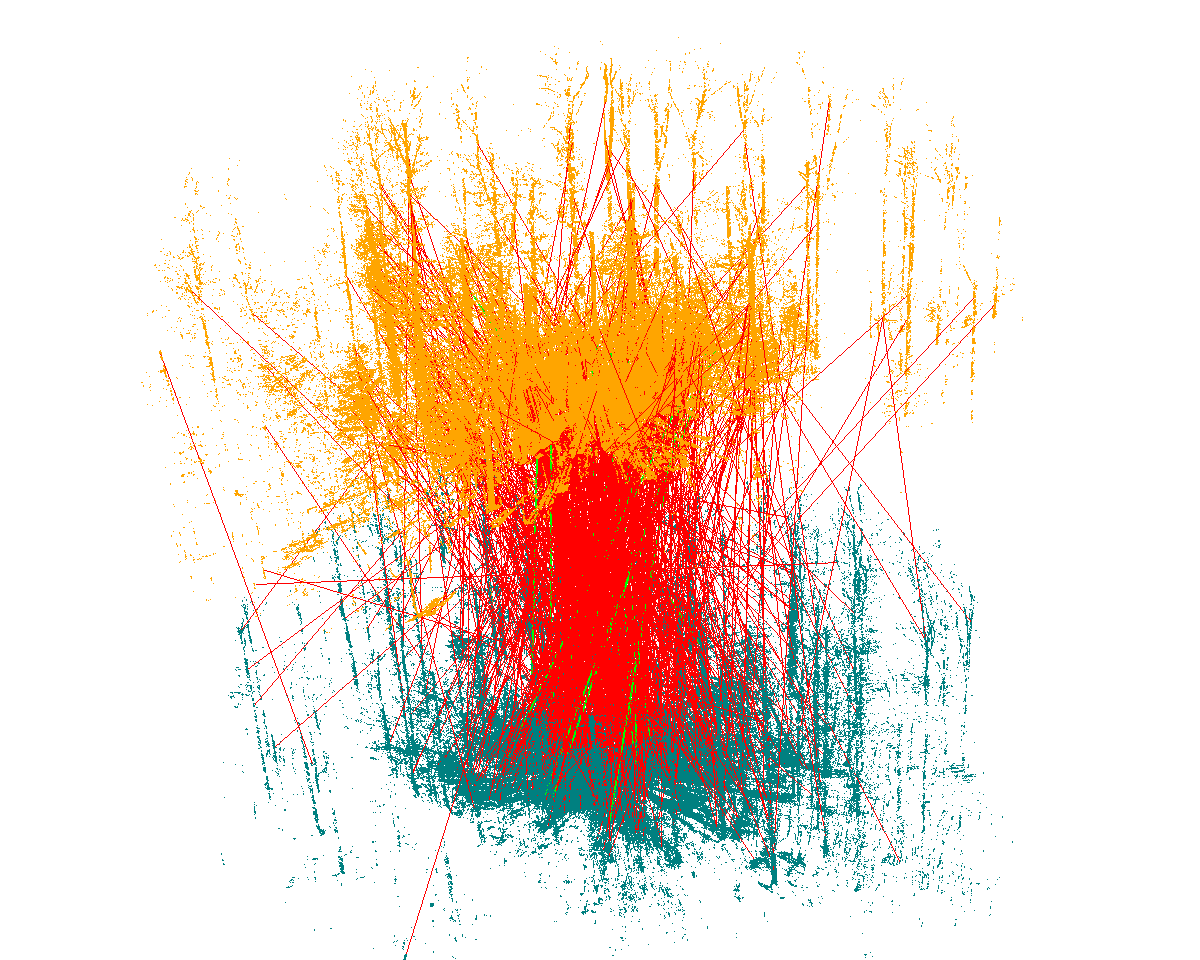}}
    \end{minipage}    }
 \\ \multicolumn{2}{c}{\footnotesize(c) \textit{Facade} s1-s2}
 &  \multicolumn{2}{c}{\footnotesize(d) \textit{Office} s1-s2}
 &  \multicolumn{2}{c}{\footnotesize(e) \textit{Trees} s1-s2}
 \\ \multicolumn{2}{c}{\footnotesize$N = 1901$}
 &  \multicolumn{2}{c}{\footnotesize$N = 1437$}
 &  \multicolumn{2}{c}{\footnotesize$N = 9543$}
 \\ \multicolumn{2}{c}{\footnotesize$\eta = 97.16\%$}
 &  \multicolumn{2}{c}{\footnotesize$\eta = 99.51\%$}
 &  \multicolumn{2}{c}{\footnotesize$\eta = 99.41\%$}
     
 \end{tabular}
 \caption{Examples of scan pairs and initial correspondences for each scene from ETH dataset. The number of correspondences and outlier rate are denoted by $N$ and $\eta$, respectively.}
 \label{ETH_visualization}
\end{figure}

 \begin{figure*}
 \centering
 \begin{tabular}{ b{0.4cm}<{\centering} | c c c }
    \multicolumn{4}{c}{
    \begin{minipage}{0.5\textwidth}
    \centering
    \raisebox{-.1\height}{\includegraphics[width=\linewidth]{Pics/legend.png}}
    \end{minipage} }
 \\ 
    \hline
    \centering
    \raisebox{0.6cm}{\rotatebox{90}{\footnotesize(a) \textit{Arch}}}
  & \begin{minipage}[b]{0.25\textwidth}
    \centering
    \vspace{4pt}\raisebox{-.1\height}{\includegraphics[width=\linewidth]{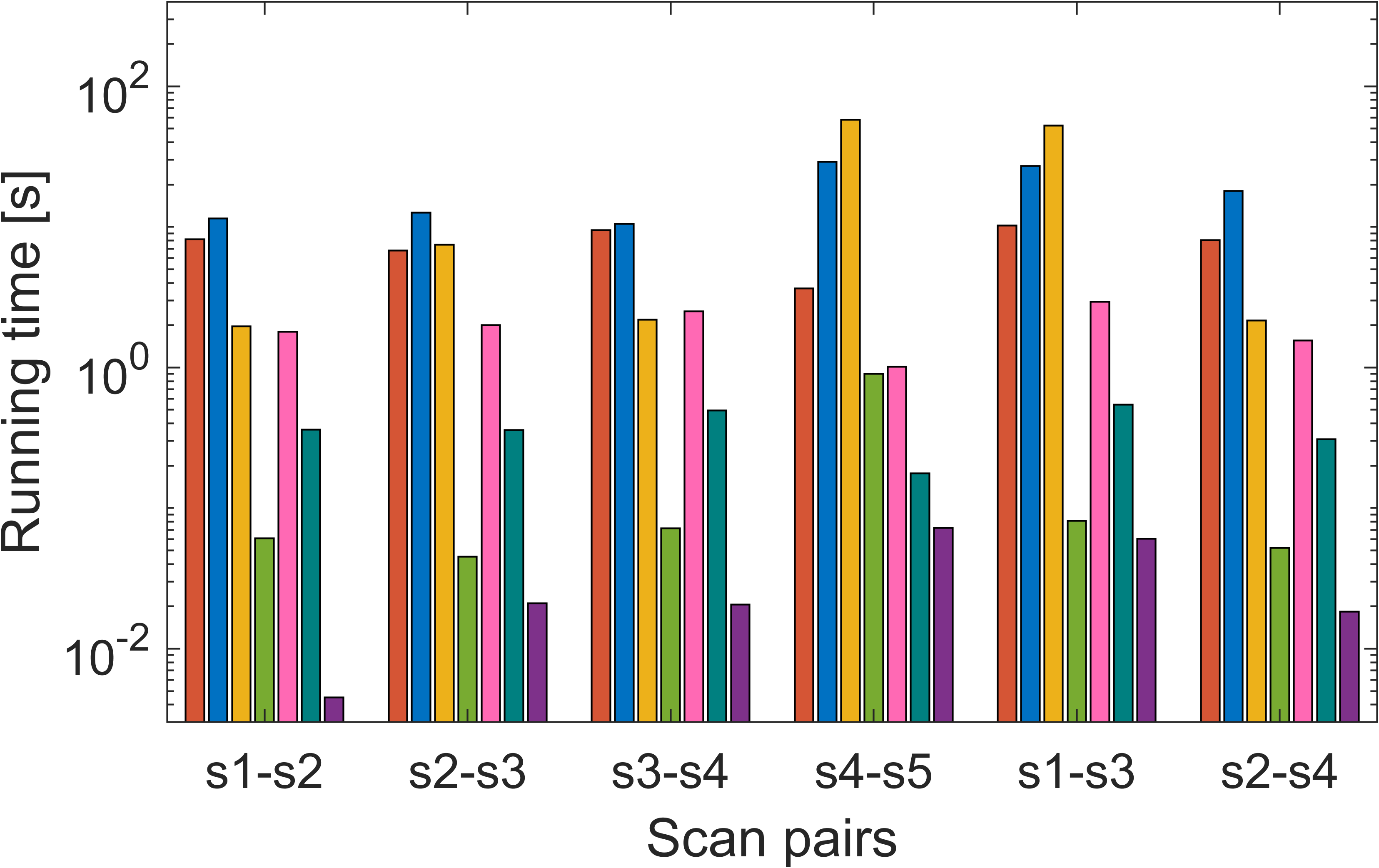}}
    \end{minipage}
  & \begin{minipage}[b]{0.25\textwidth}
    \centering
    \vspace{4pt}\raisebox{-.1\height}{\includegraphics[width=\linewidth]{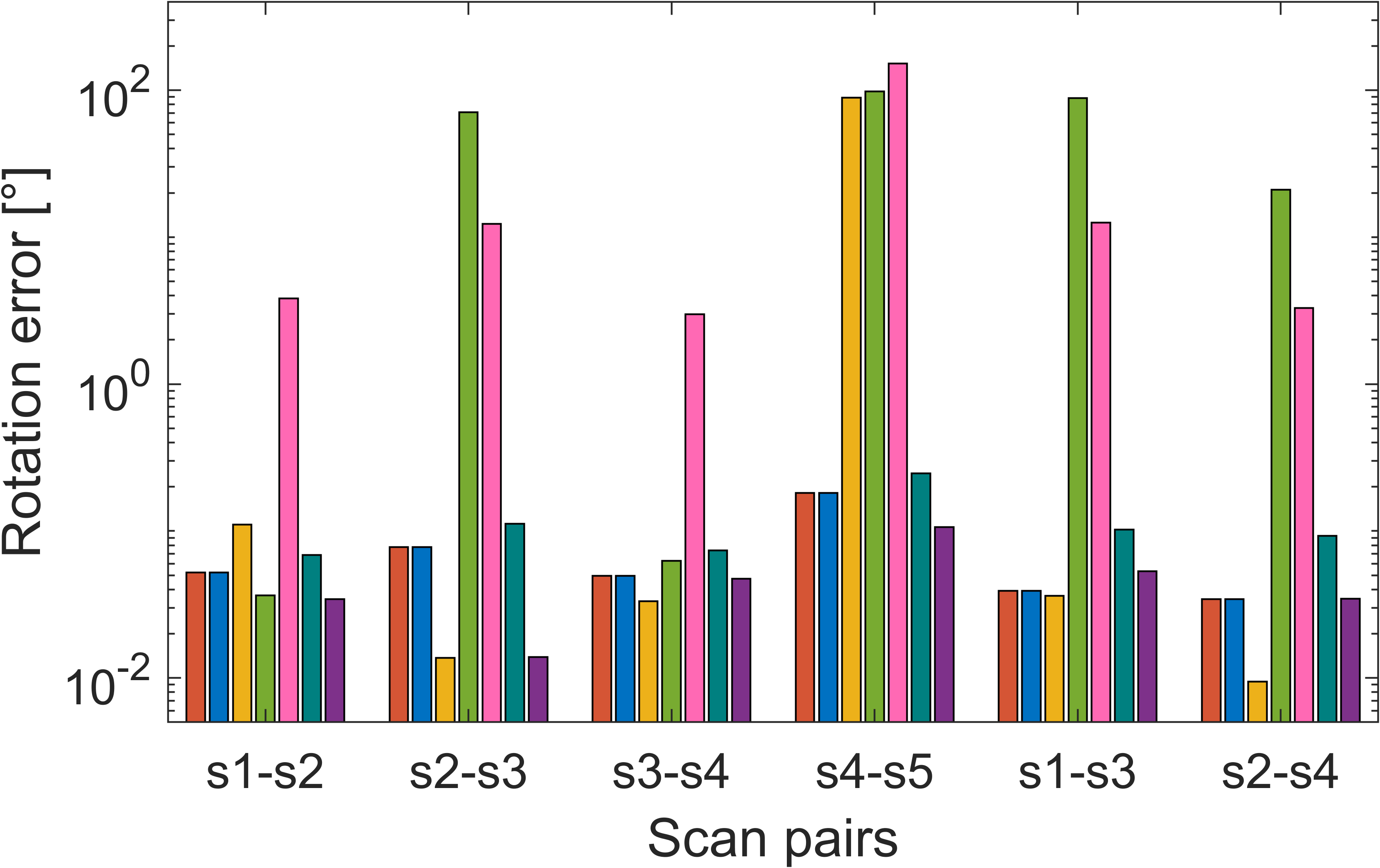}}
    \end{minipage}
  & \begin{minipage}[b]{0.25\textwidth}
    \centering
    \vspace{4pt}\raisebox{-.1\height}{\includegraphics[width=\linewidth]{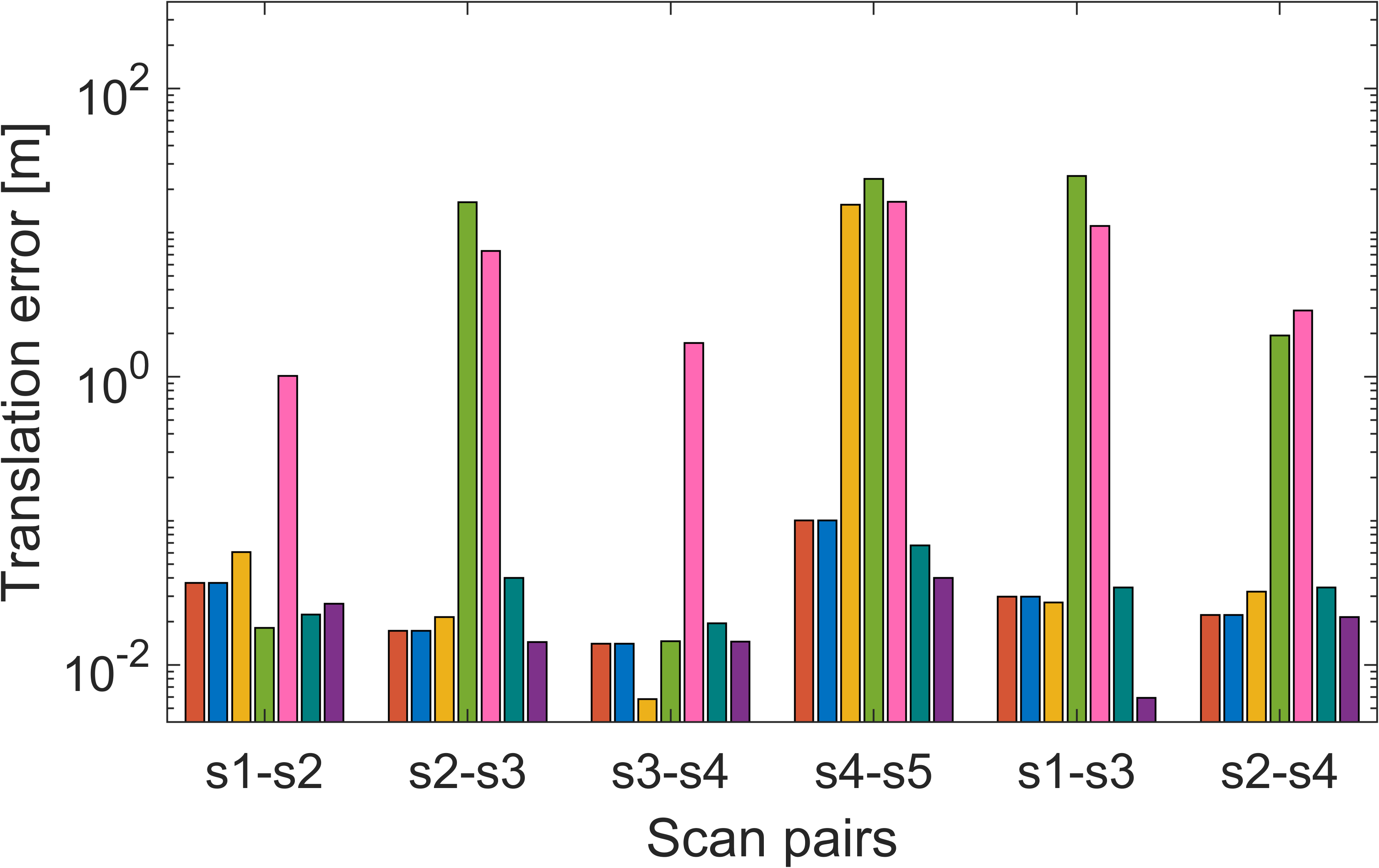}}
    \end{minipage}
 \\ 
    \hline
    \raisebox{0.5cm}{\rotatebox{90}{\footnotesize(b) \textit{Courtyard}}}
  & \begin{minipage}[b]{0.25\textwidth}
    \centering
    \vspace{4pt}\raisebox{-.1\height}{\includegraphics[width=\linewidth]{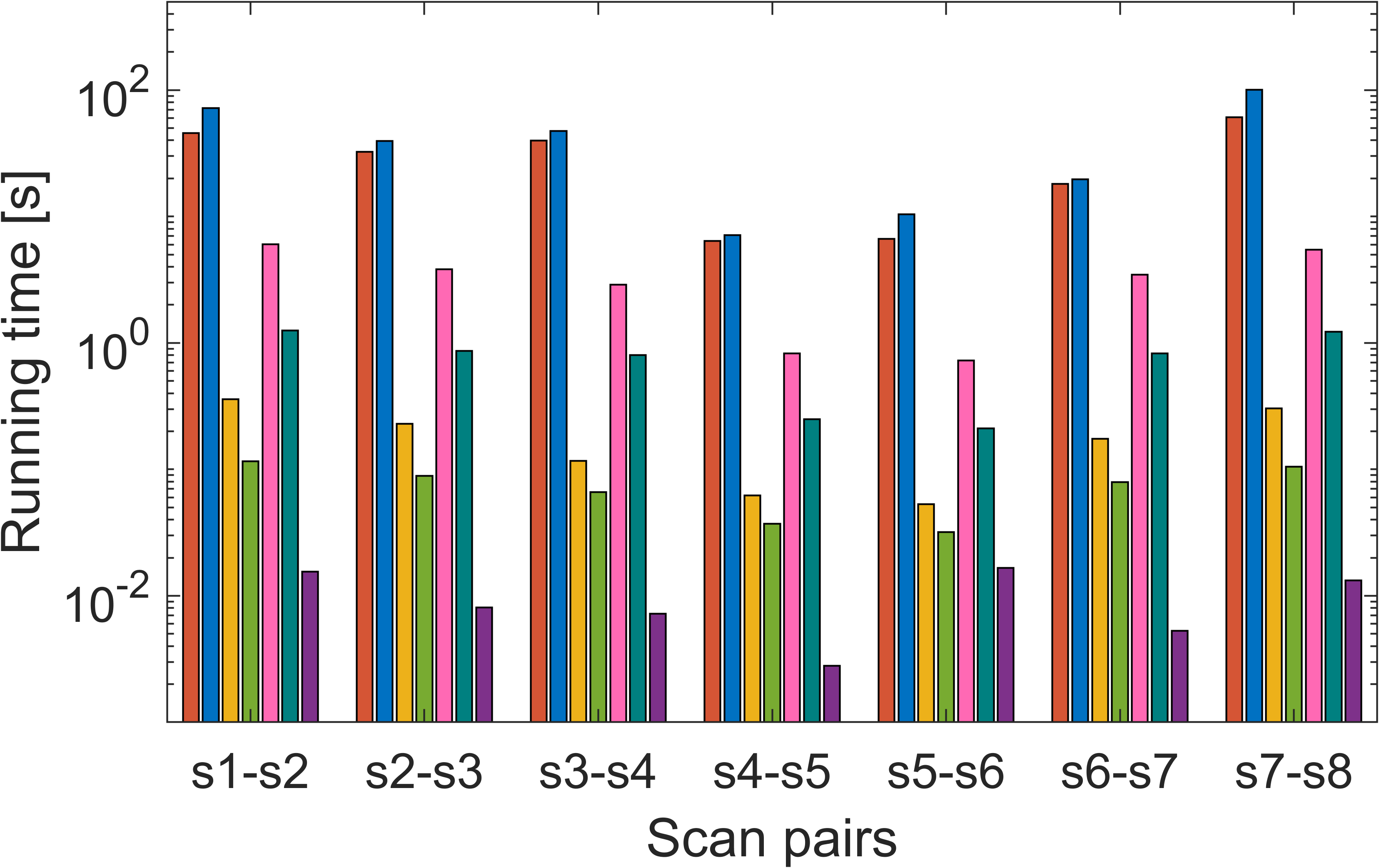}}
    \end{minipage}
  & \begin{minipage}[b]{0.25\textwidth}
    \centering
    \vspace{4pt}\raisebox{-.1\height}{\includegraphics[width=\linewidth]{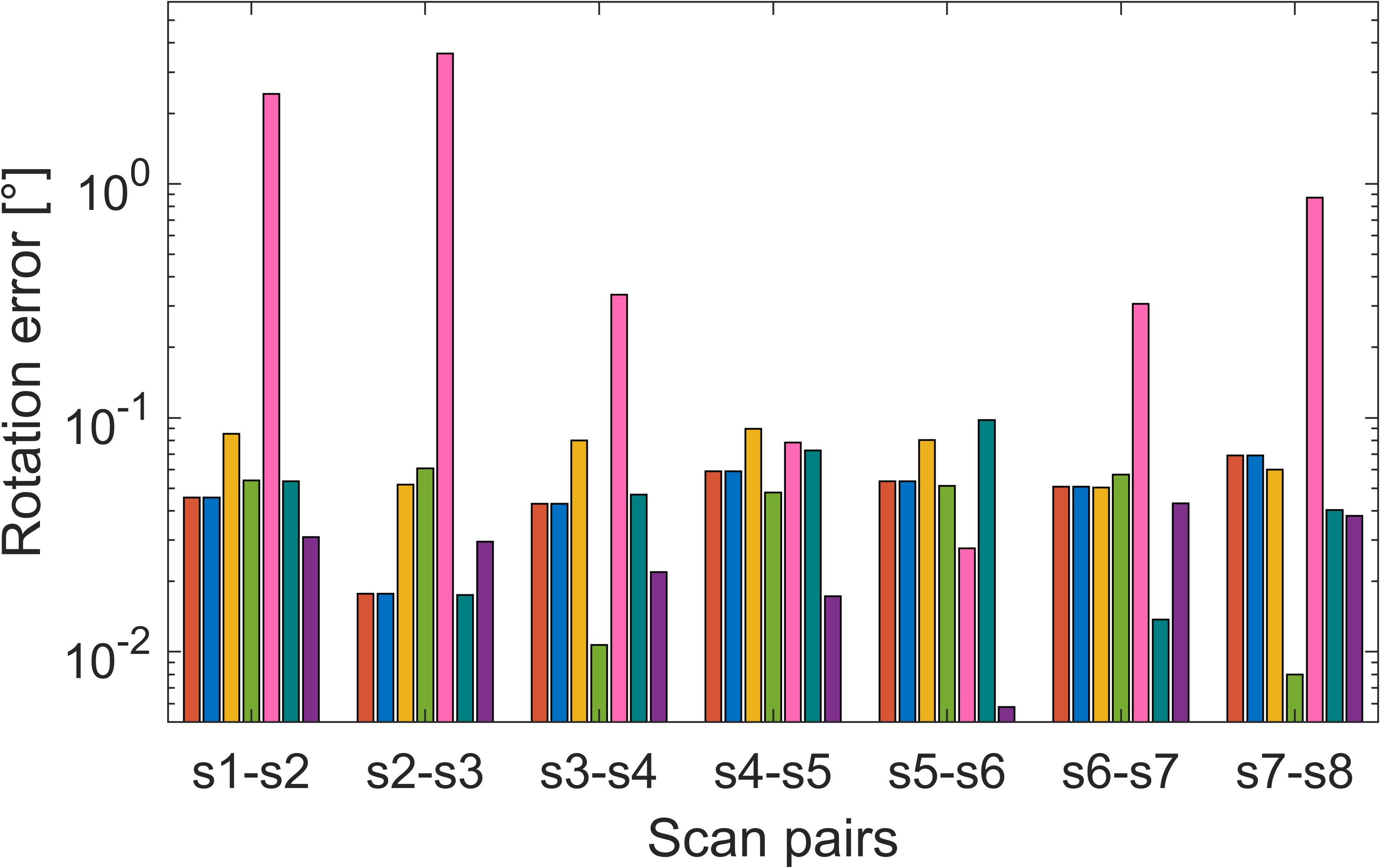}}
    \end{minipage}
  & \begin{minipage}[b]{0.25\textwidth}
    \centering
    \vspace{4pt}\raisebox{-.1\height}{\includegraphics[width=\linewidth]{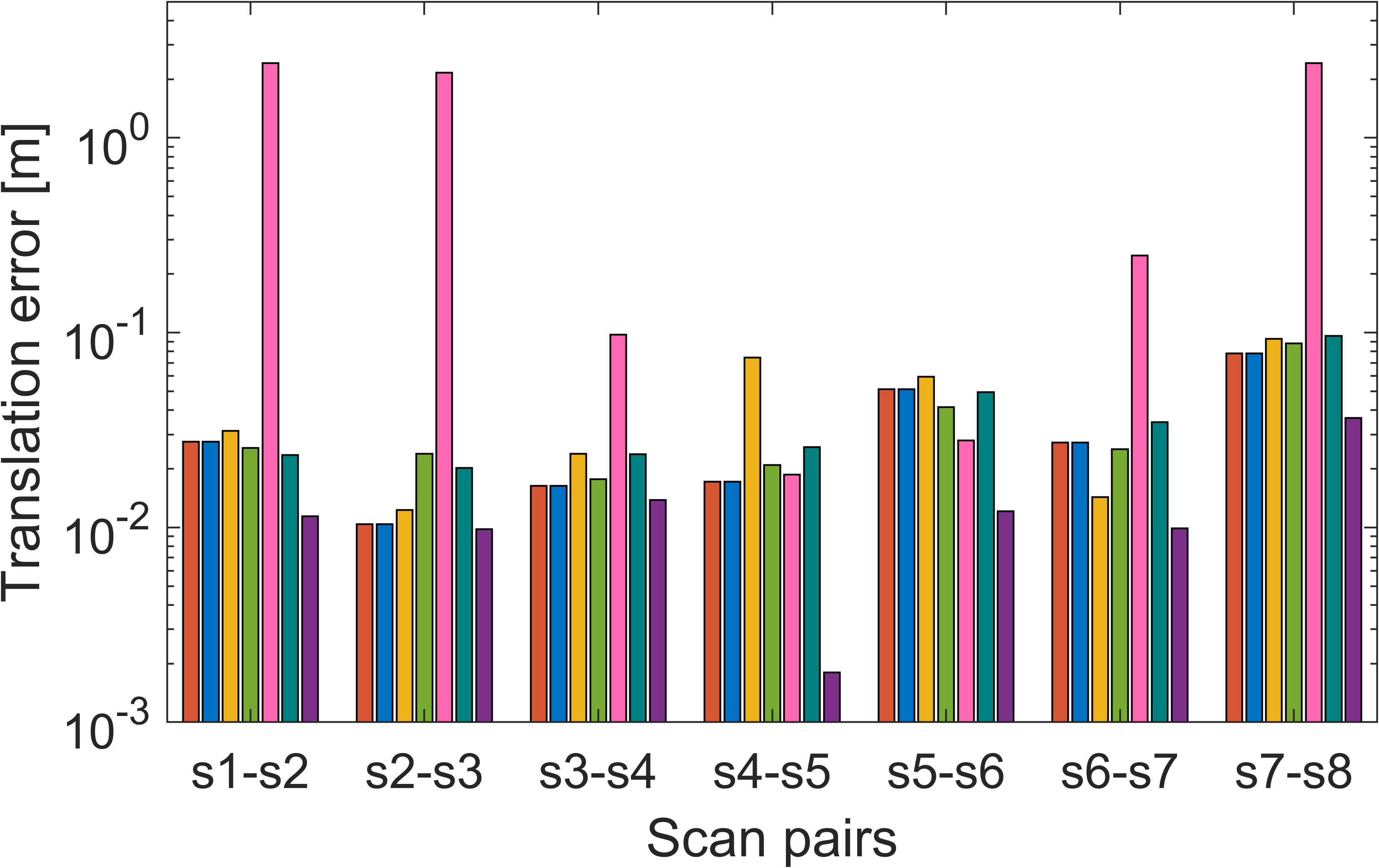}}
    \end{minipage}
 \\ 
    \hline
    \raisebox{0.6cm}{\rotatebox{90}{\footnotesize(c) \textit{Facade}}}
  & \begin{minipage}[b]{0.25\textwidth}
    \centering
    \vspace{4pt}\raisebox{-.1\height}{\includegraphics[width=\linewidth]{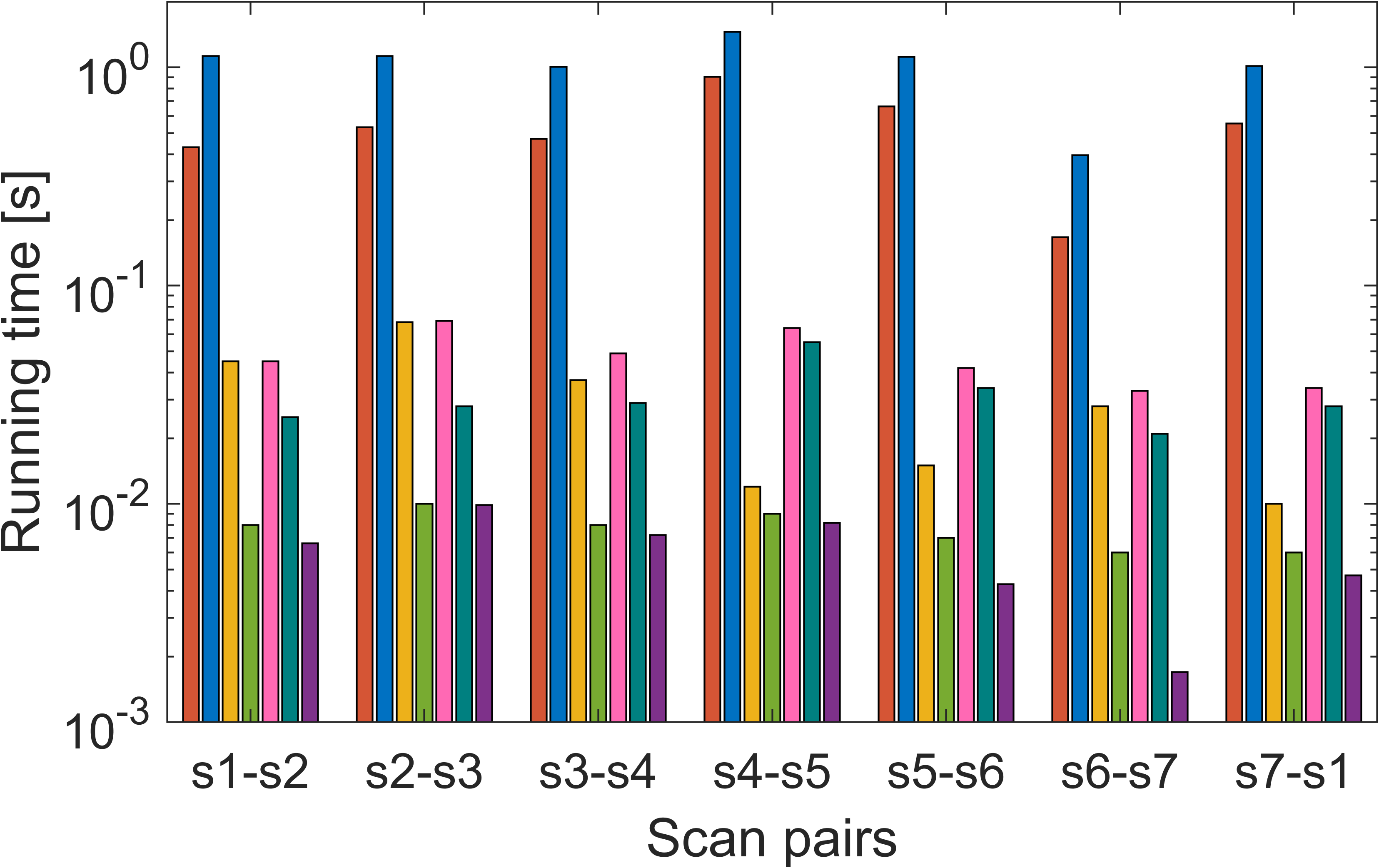}}
    \end{minipage}
  & \begin{minipage}[b]{0.25\textwidth}
    \centering
    \vspace{4pt}\raisebox{-.1\height}{\includegraphics[width=\linewidth]{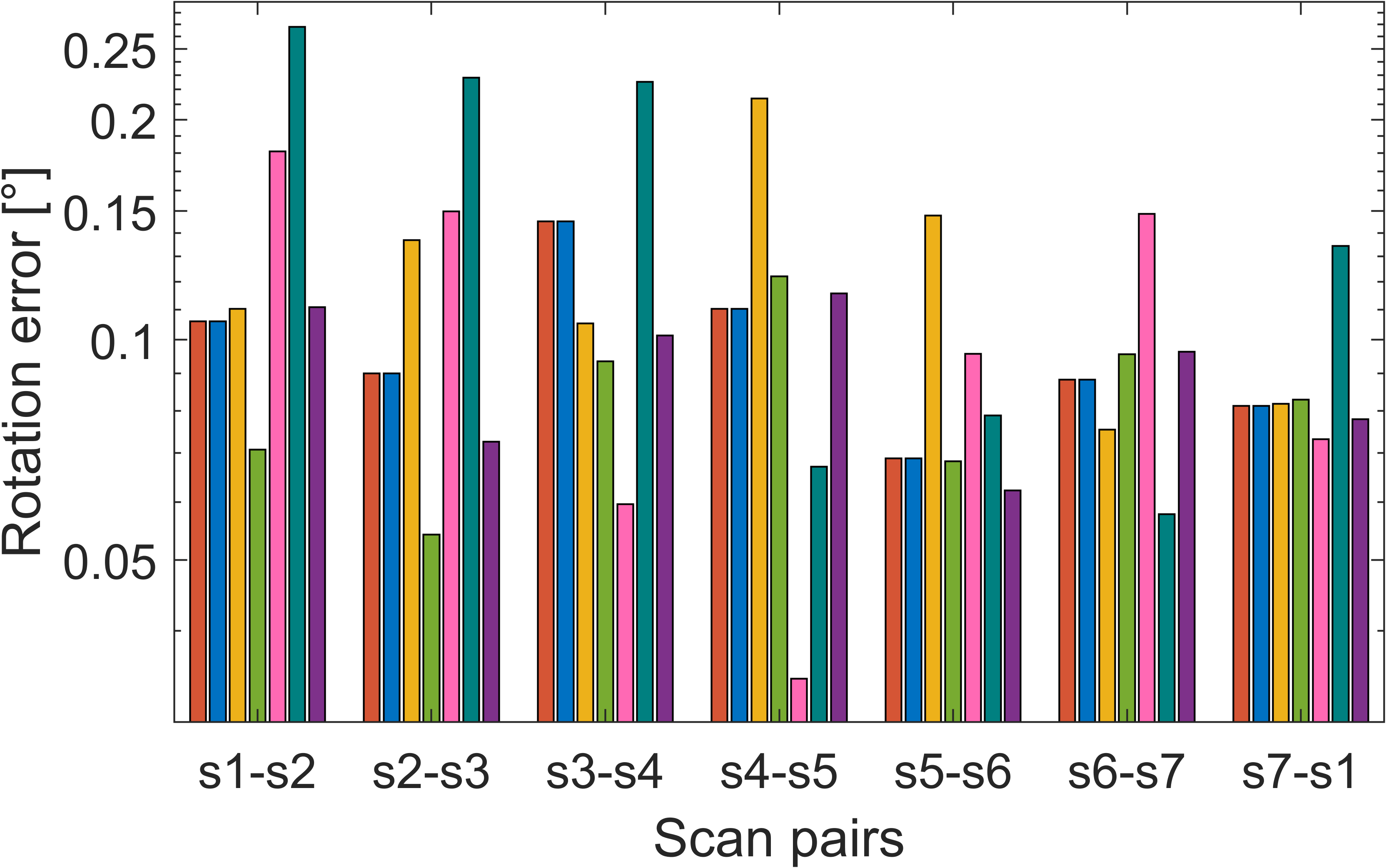}}
    \end{minipage}
  & \begin{minipage}[b]{0.25\textwidth}
    \centering
    \vspace{4pt}\raisebox{-.1\height}{\includegraphics[width=\linewidth]{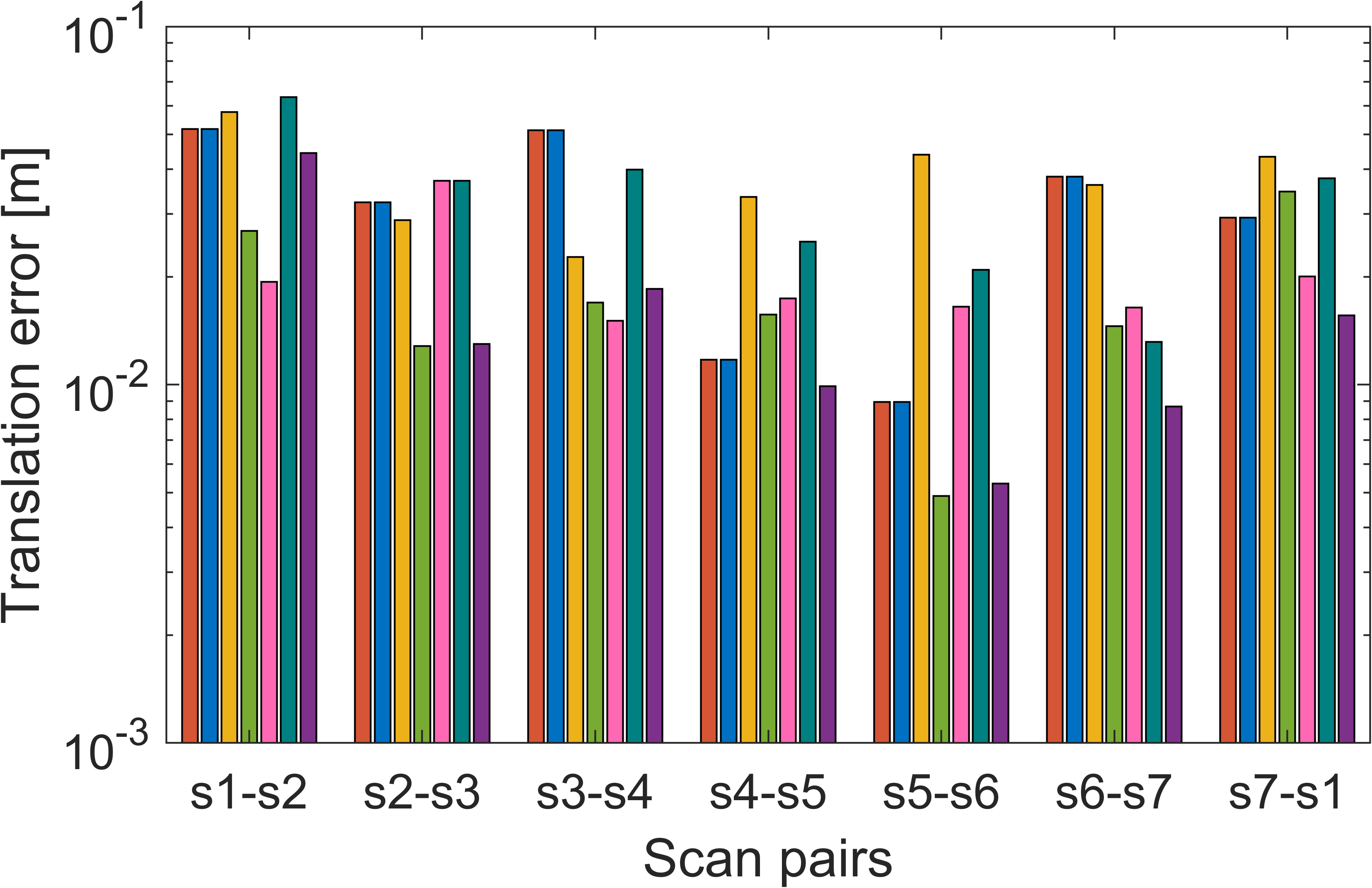}}
    \end{minipage}
 \\ 
    \hline
    \raisebox{0.6cm}{\rotatebox{90}{\footnotesize(d) \textit{Office}}}
  & \begin{minipage}[b]{0.25\textwidth}
    \centering
    \vspace{4pt}\raisebox{-.1\height}{\includegraphics[width=\linewidth]{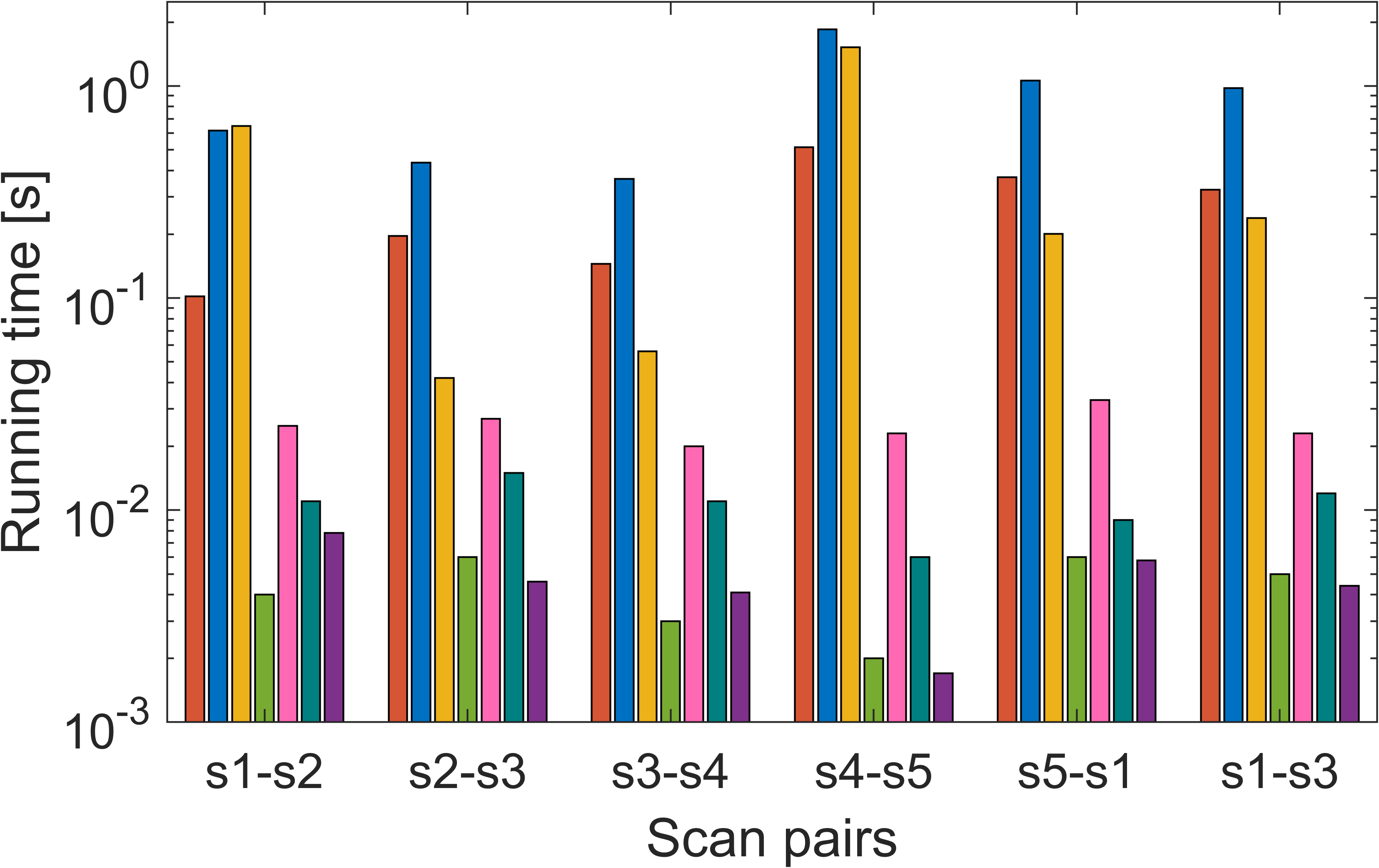}}
    \end{minipage}
  & \begin{minipage}[b]{0.25\textwidth}
    \centering
    \vspace{4pt}\raisebox{-.1\height}{\includegraphics[width=\linewidth]{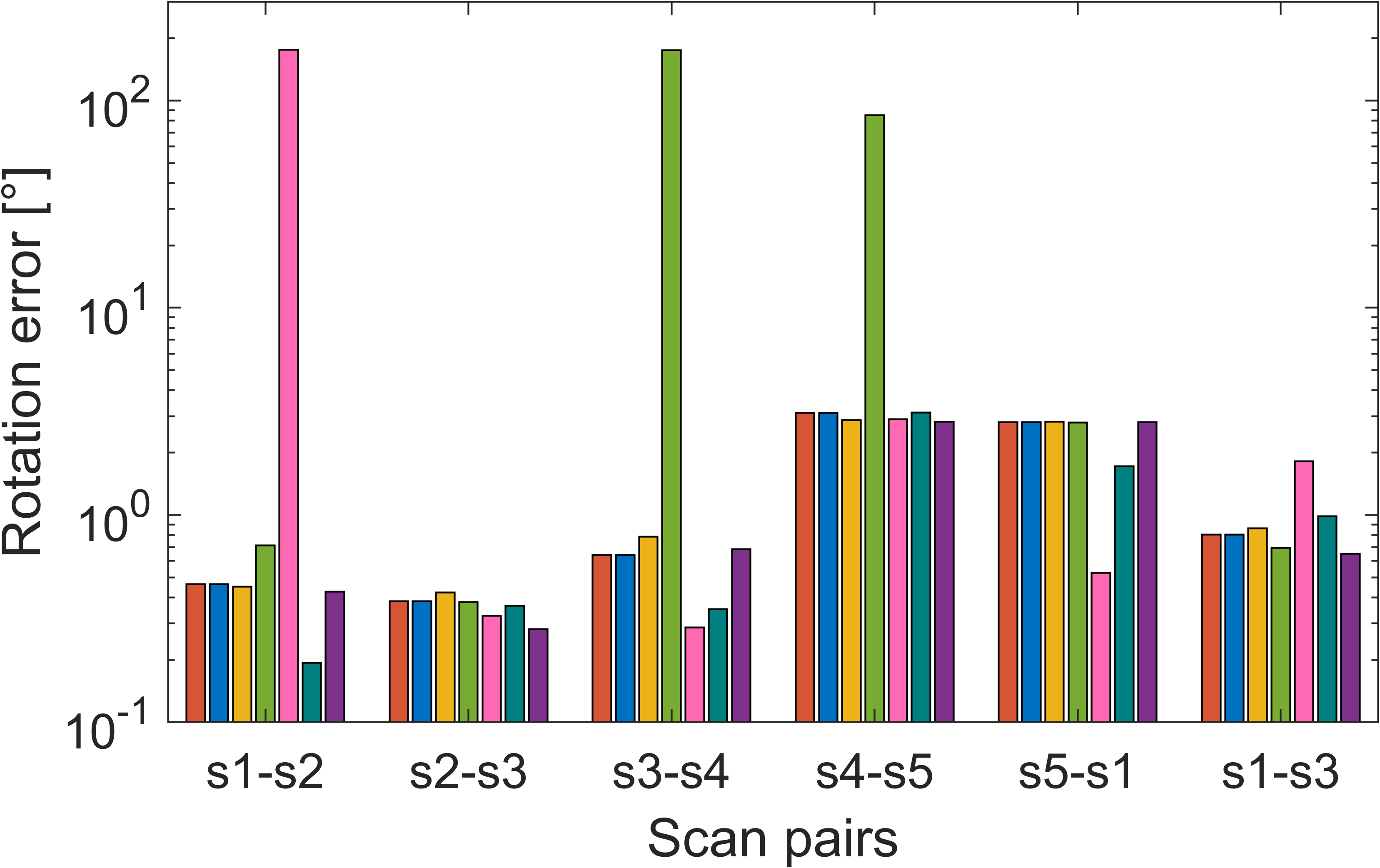}}
    \end{minipage}
  & \begin{minipage}[b]{0.25\textwidth}
    \centering
    \vspace{4pt}\raisebox{-.1\height}{\includegraphics[width=\linewidth]{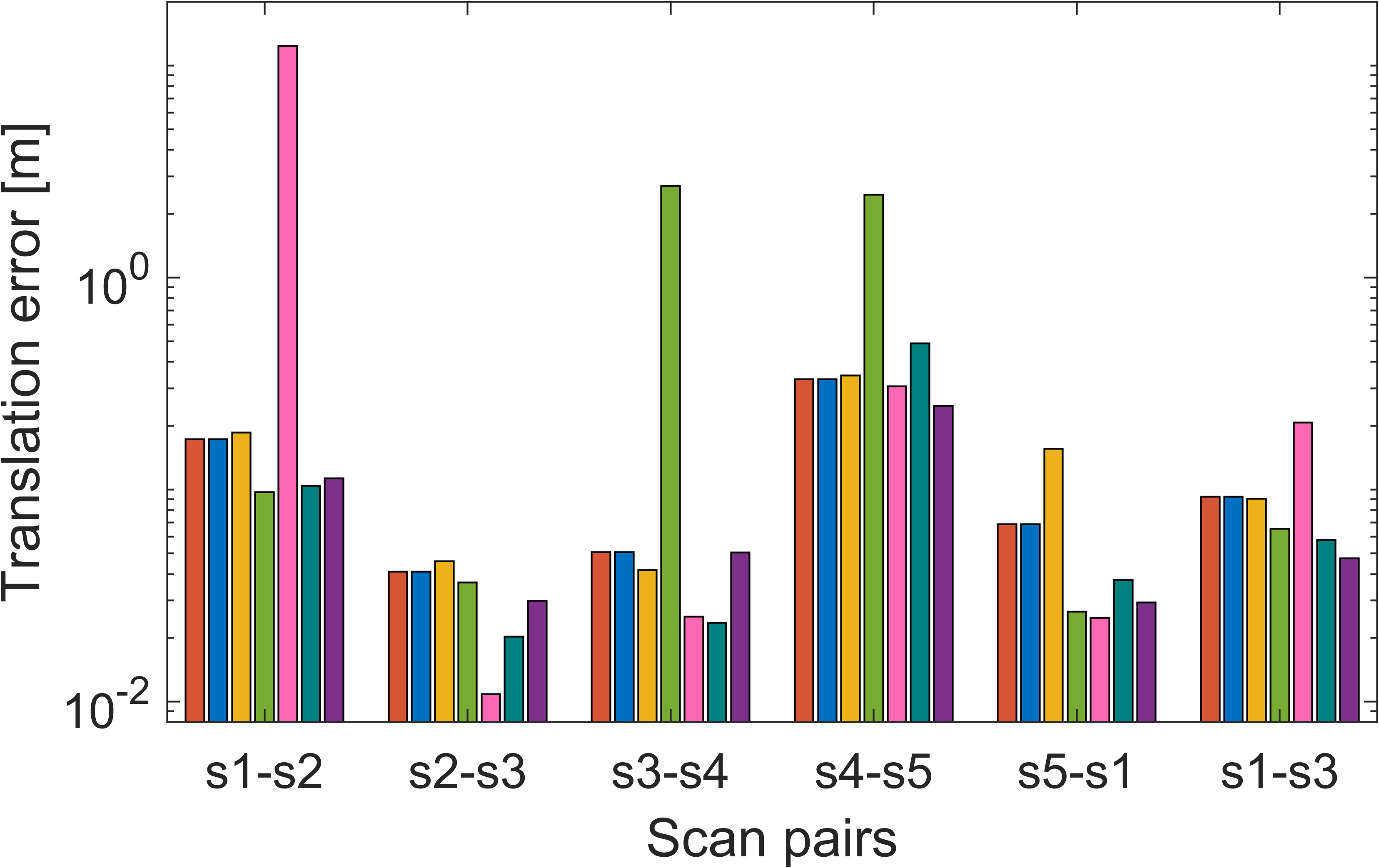}}
    \end{minipage}
 \\ 
    \hline
    \raisebox{0.6cm}{\rotatebox{90}{\footnotesize(e) \textit{Trees}}}
  & \begin{minipage}[b]{0.25\textwidth}
    \centering
    \vspace{4pt}\raisebox{-.1\height}{\includegraphics[width=\linewidth]{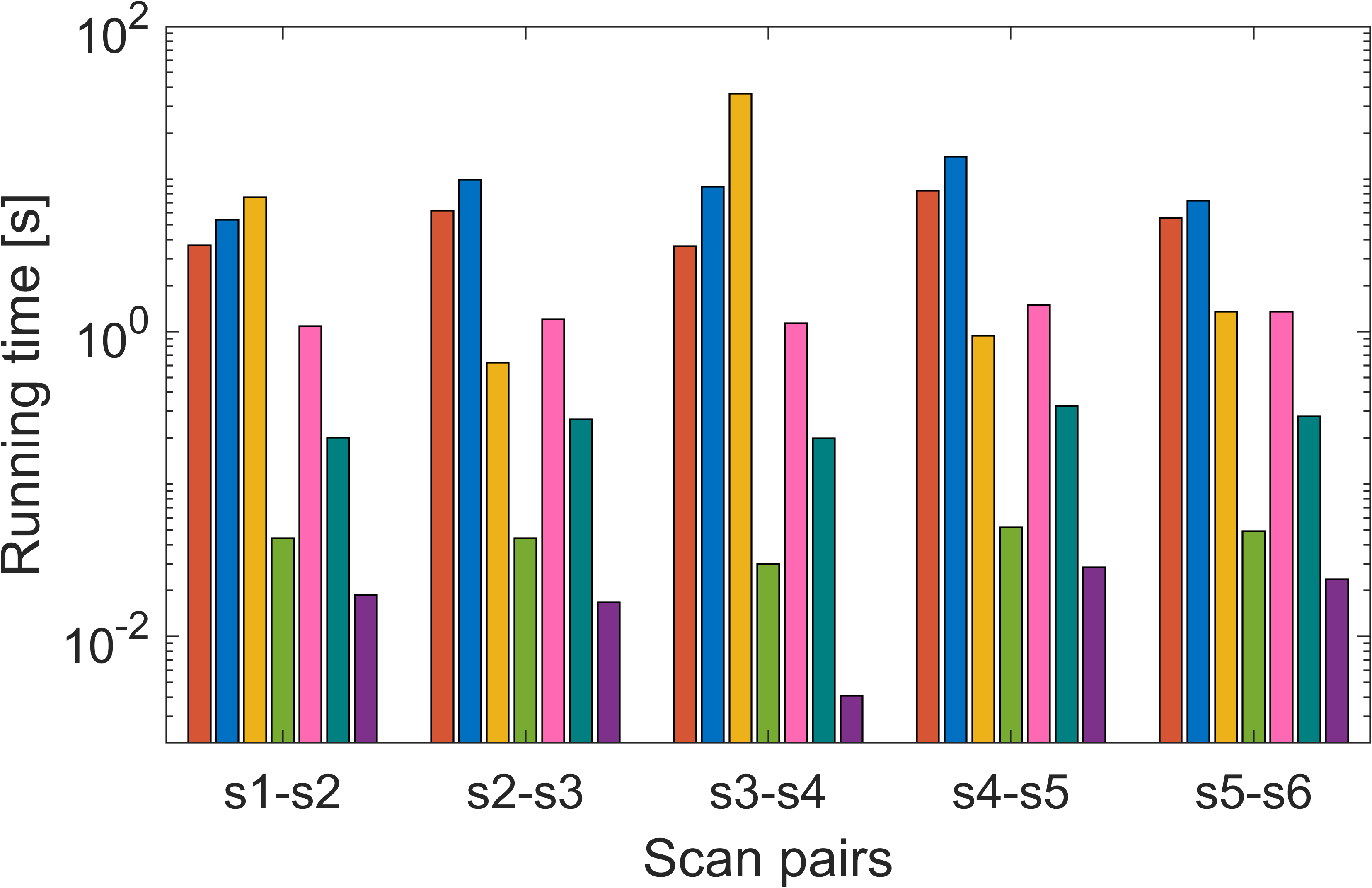}}
    \end{minipage}
  & \begin{minipage}[b]{0.25\textwidth}
    \centering
    \vspace{4pt}\raisebox{-.1\height}{\includegraphics[width=\linewidth]{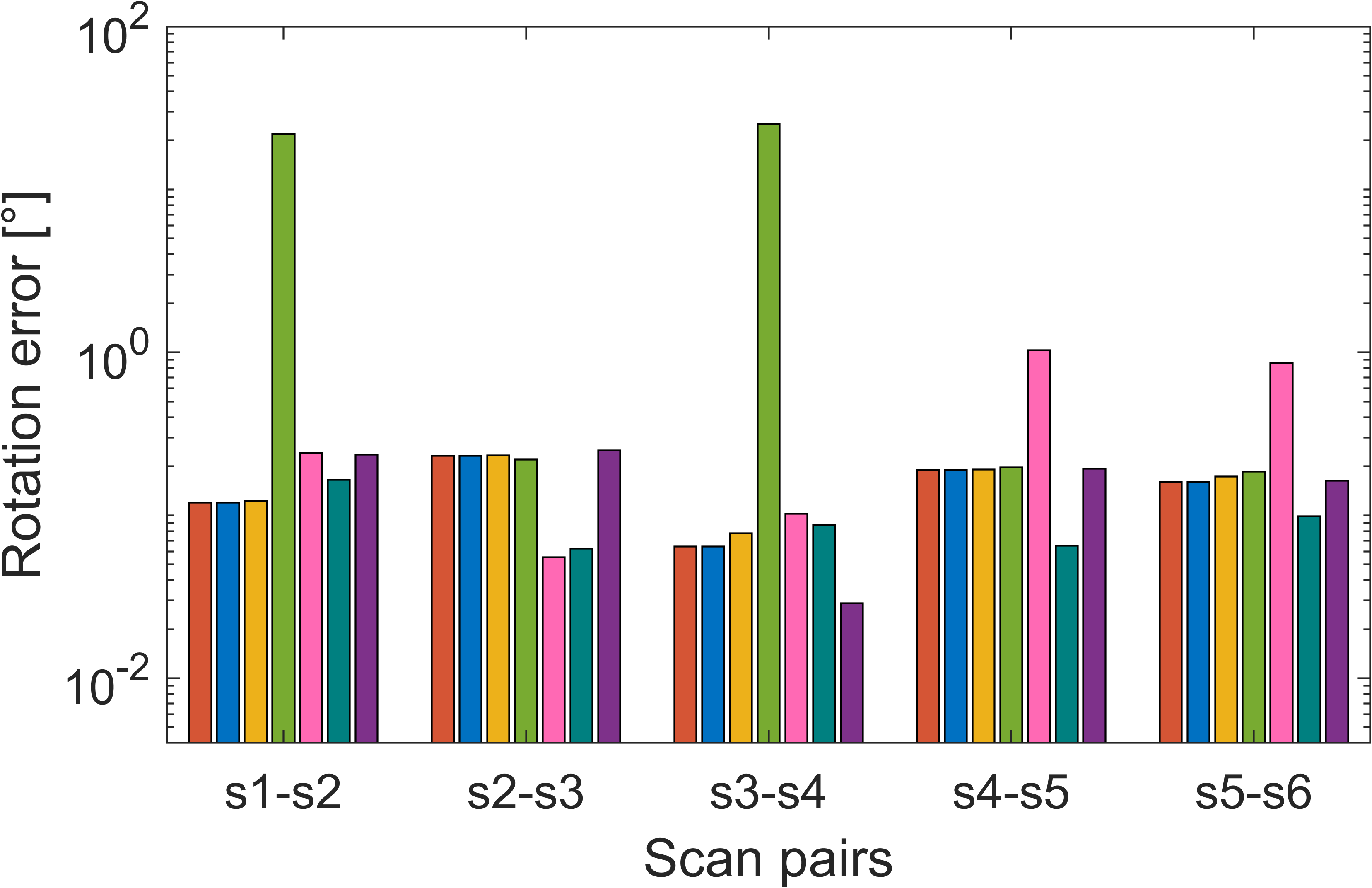}}
    \end{minipage}
  & \begin{minipage}[b]{0.25\textwidth}
    \centering
    \vspace{4pt}\raisebox{-.1\height}{\includegraphics[width=\linewidth]{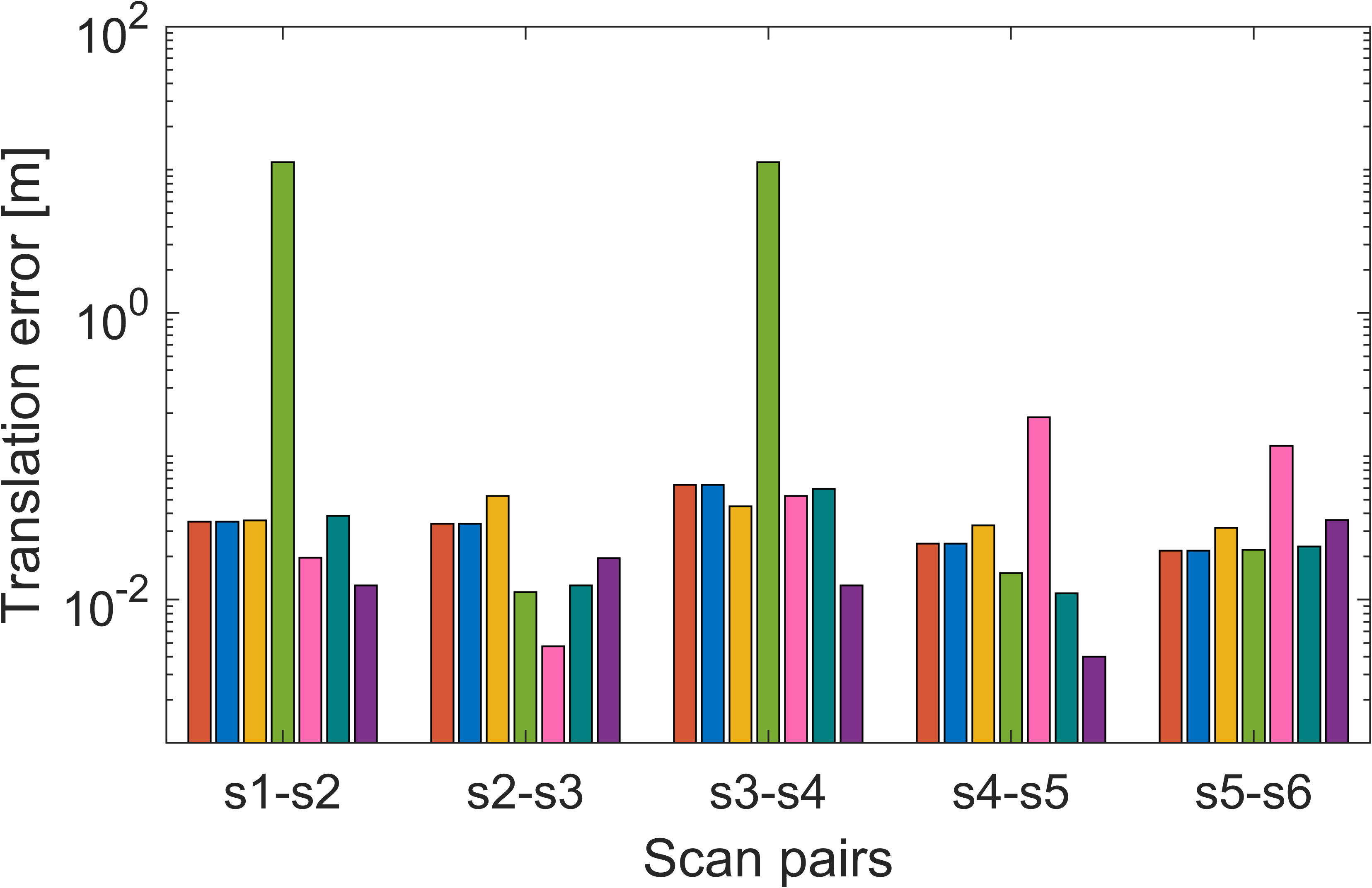}}
    \end{minipage}
 \\ \hline
 \end{tabular}
 \caption{Running times, rotation errors, and translation errors of all registration methods on the ETH dataset. (a)-(e) show the registration results for \textit{Arch}, \textit{Courtyard}, \textit{Facade}, \textit{Office}, \textit{Trees}.}
 \label{ETH_dataset}
 \end{figure*}

\subsubsection{Correspondence-free experiments}

In this section, we perform challenging correspondence-free experiments to compare our extended method with global Go-ICP, Go-ICPT, and local ICP, CPD, and GMMReg, using the \textit{Bunny} dataset from the Stanford 3-D Scanning Repository\cite{curless1996volumetric}. Due to the limitations of GO-ICP\cite{7368945}, the point cloud is pre-normalized to fit within the cube $[-1,1]^3$. The \textit{Bunny} dataset is initially down-sampled to $M=234$ points, serving as the source point cloud. Random rotations and translations are then applied to generate the target point clouds. Additionally, a specific proportion of points is randomly removed from the target point cloud to simulate partial overlap between the source and target point clouds, with the overlap rate denoted by $\rho$ ranging from $90\%$ to $40\%$. Zero-mean Gaussian noise with $\sigma=0.001$ is added to the target point cloud. The experiments are repeated $30$ times at each overlap rate, and the results are presented in Fig.~\ref{synthetic_corr_free}. Notably, the time costs for constructing distance transform (DT)\cite{7368945} in Go-ICP and Go-ICPT are not recorded, but averaged about $12$s.

It can be observed from the rotation and translation errors that local ICP tends to converge to local minima even at $\rho=90\%$. Due to the adoption of GMM, the local CPD and Gmmreg are more robust to partial overlap than ICP, as evidenced by their better error distributions than those of ICP. Notably, global Go-ICP and Go-ICPT are much more robust than these local methods. Among them, GO-ICPT ($50\%$) demonstrates the best performance, indicating that Go-ICP is strongly sensitive to the pre-set trimming percentage. However, as the overlap rate decreases, the time costs of GO-ICP and GO-ICPT increase significantly compared to local methods. In contrast, the proposed global method attains commendable robustness and accuracy even at $\rho=40\%$. On the other hand, it is faster than the existing global method (i.e., GO-ICP) and even has comparable efficiency to local methods. This shows the feasibility of our proposed method in addressing correspondence-free registration problems. Meanwhile, the correspondence-free experiment on real-world datasets is also provided in Appendix B.
\begin{table*}
\footnotesize
 \setlength{\tabcolsep}{2pt}
 \renewcommand{\arraystretch}{1.2}
 \caption{Quantitative registration results on three real-world datasets. The results include average running time (s) $|$ average rotation error ($\degree$) $|$ average translation error ($\SI{}{m}$). Bolded fonts indicate the best results.}
 \label{average}
 \centering
 \begin{tabular}{c r@{\hspace{0pt}} c@{\hspace{0pt}} l r@{\hspace{0pt}} c@{\hspace{0pt}} l r@{\hspace{0pt}} c@{\hspace{0pt}} l r@{\hspace{0pt}} c@{\hspace{0pt}} l r@{\hspace{0pt}} c@{\hspace{0pt}} l r@{\hspace{0pt}} c@{\hspace{0pt}} l r@{\hspace{0pt}} c@{\hspace{0pt}} l}
    \hline
    \multirow{2}*{Method} 
    & \multicolumn{21}{c}{Dataset}\\
    \cline{2-22}
    {} & \multicolumn{3}{c}{ETH-\textit{Arch}} & \multicolumn{3}{c}{ETH-\textit{Courtyard}} & \multicolumn{3}{c}{ETH-\textit{Facade}} & \multicolumn{3}{c}{ETH-\textit{Office}} & \multicolumn{3}{c}{ETH-\textit{Trees}} & \multicolumn{3}{c}{WHU-TLS} & \multicolumn{3}{c}{A9}\\
    \hline 
    FMP+BnB &$7.722|$ &$0.072$ &$|0.037$ &$30.02|$ &$0.048$ &$|0.033$ &$0.532|$ &$0.099$ &$|0.032$ &$0.276|$ &$1.368$ &$|0.126$ &$5.485|$ &$0.153$ &$|0.036$ &$10.45|$ &$0.245$ &$|12.51$ &$0.583|$ &$0.624$ &$|0.363$ \\
    BnB &$18.12|$ &$0.072$ &$|0.037$ &$42.33|$ &$0.048$ &$|0.033$ &$1.035|$ &$0.099$ &$|0.032$ &$0.885|$ &$1.368$ &$|0.126$ &$9.099|$ &$0.153$ &$|0.036$ &$22.02|$ &$0.245$ &$|12.51$ &$11.70|$ &$0.624$ &$|0.363$ \\
    RANSAC &$20.71|$ &$14.87$ &$|3.139$ &$0.185|$ &$0.071$ &$|0.044$ &$0.031|$ &$0.124$ &$|0.038$ &$0.451|$ &$1.369$ &$|0.144$ &$9.327|$ &$0.159$ &$|0.040$ &$5.121|$ &$0.214$ &$|12.52$ &$13.55|$ &$7.359$ &$|2.354$ \\
    FGR &$0.202|$ &$46.31$ &$|11.07$ &$0.075|$ &$0.041$ &$|0.035$ &$0.008|$ &$\textbf{0.084}$ &$|0.018$ &$\textbf{0.004}|$ &$44.16$ &$|0.899$ &$0.044|$ &$9.533$ &$|4.532$ &$\textbf{0.041}|$ &$80.66$ &$|37.96$ &$\textbf{0.008}|$ &$17.75$ &$|8.012$ \\
    GTA &$1.966|$ &$31.12$ &$|6.747$ &$3.320|$ &$1.093$ &$|1.056$ &$0.048|$ &$0.106$ &$|0.020$ &$0.025|$ &$30.22$ &$|2.155$ &$1.253|$ &$0.458$ &$|0.077$ &$1.499|$ &$3.161$ &$|15.09$ &$0.078|$ &$68.28$ &$|10.47$ \\
    GROR &$0.374|$ &$0.116$ &$|0.036$ &$0.777|$ &$0.049$ &$|0.039$ &$0.031|$ &$0.151$ &$|0.034$ &$0.011|$ &$\textbf{1.122}$ &$|0.122$ &$0.253|$ &$\textbf{0.096}$ &$|0.029$ &$0.318|$ &$0.795$ &$|12.72$ &$0.040|$ &$1.652$ &$|0.525$ \\
    Ours &$\textbf{0.033}|$ &$\textbf{0.048}$ &$|\textbf{0.020}$ &$\textbf{0.010}|$ &$\textbf{0.027}$ &$|\textbf{0.014}$ &$\textbf{0.006}|$ &$0.091$ &$|\textbf{0.016}$ &$0.005|$ &$1.278$ &$|\textbf{0.086}$ &$\textbf{0.018}|$ &$0.174$ &$|\textbf{0.017}$ &$0.053|$ &$\textbf{0.112}$ &$|\textbf{0.165}$ &$0.055|$ &$\textbf{0.394}$ &$|\textbf{0.361}$ \\
    \hline 
 \end{tabular}
\end{table*}

\subsection{Real-World Data Experiments}

\begin{figure}
 \centering
 \begin{tabular}{c}
    \begin{minipage}{0.9\columnwidth}
    \centering
    \raisebox{-.1\height}{\includegraphics[width=\columnwidth]{Pics/legend.png}}
    \end{minipage}
 \\ \begin{minipage}[b]{0.98\columnwidth}
    \centering
    \raisebox{-.1\height}{\includegraphics[width=\columnwidth]{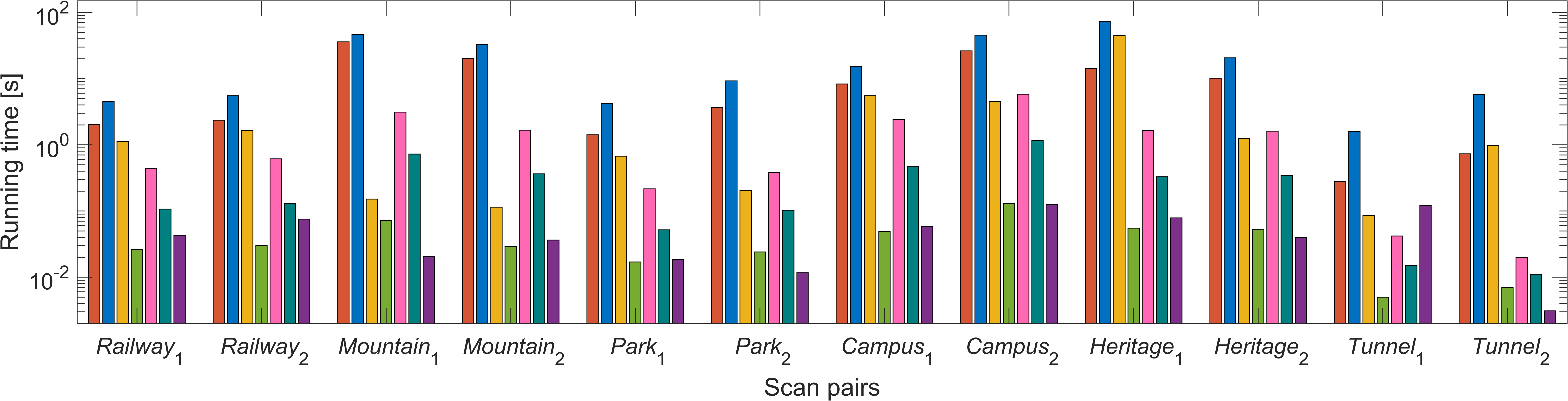}}
    \end{minipage}
 \\ \footnotesize(a) Running time
 \\ 
    \begin{minipage}[b]{0.98\columnwidth}
    \centering
    \raisebox{-.1\height}{\includegraphics[width=\columnwidth]{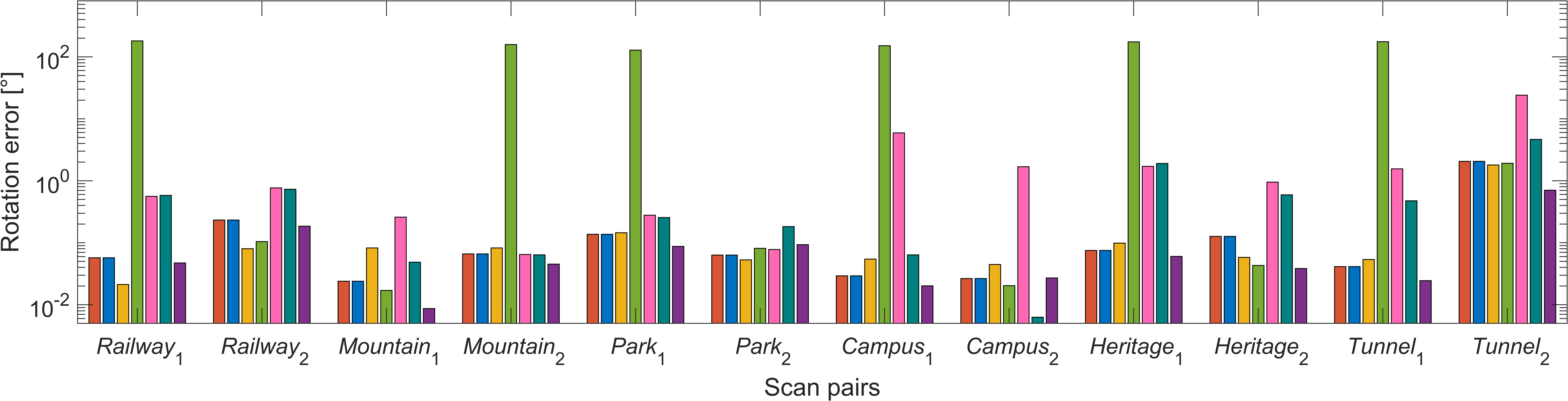}}
    \end{minipage}
 \\ \footnotesize(b) Rotation error
 \\ 
    \begin{minipage}[b]{0.98\columnwidth}
    \centering
    \raisebox{-.1\height}{\includegraphics[width=\columnwidth]{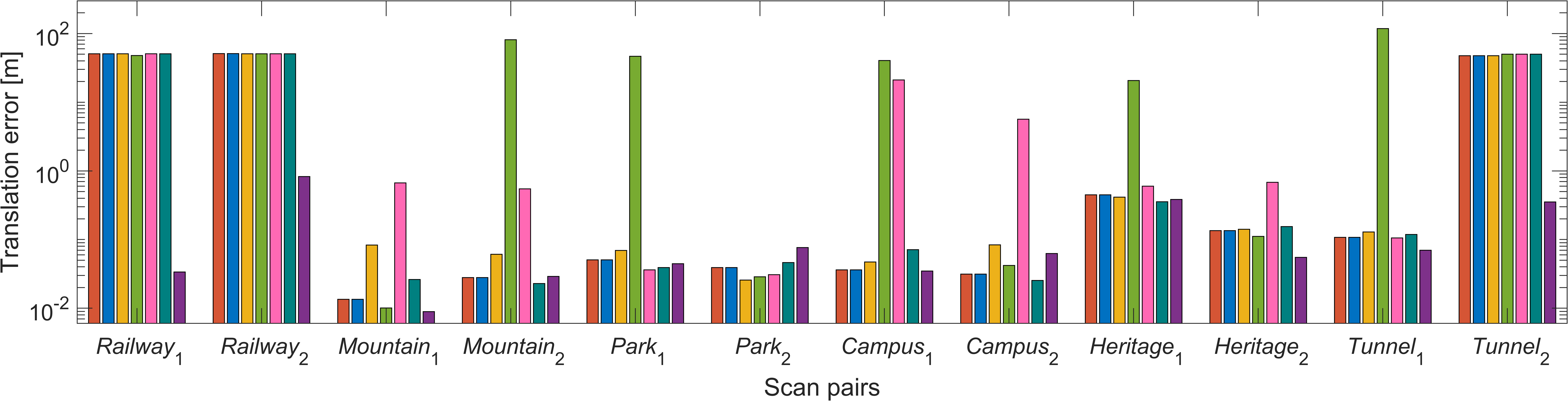}}
    \end{minipage}
 \\ \footnotesize(c) Translation error
 \end{tabular}
 \caption{(a) Running time, (b) Rotation error, and (c) Translation error of all registration methods on the WHU-TLS dataset.}
 \label{WHU_dataset}
\end{figure}

\begin{figure*}\footnotesize
 \centering
 \begin{tabular}{c c c c}
    \begin{minipage}[b]{0.23\textwidth}
    \centering
    \raisebox{-.1\height}{\includegraphics[width=\linewidth]{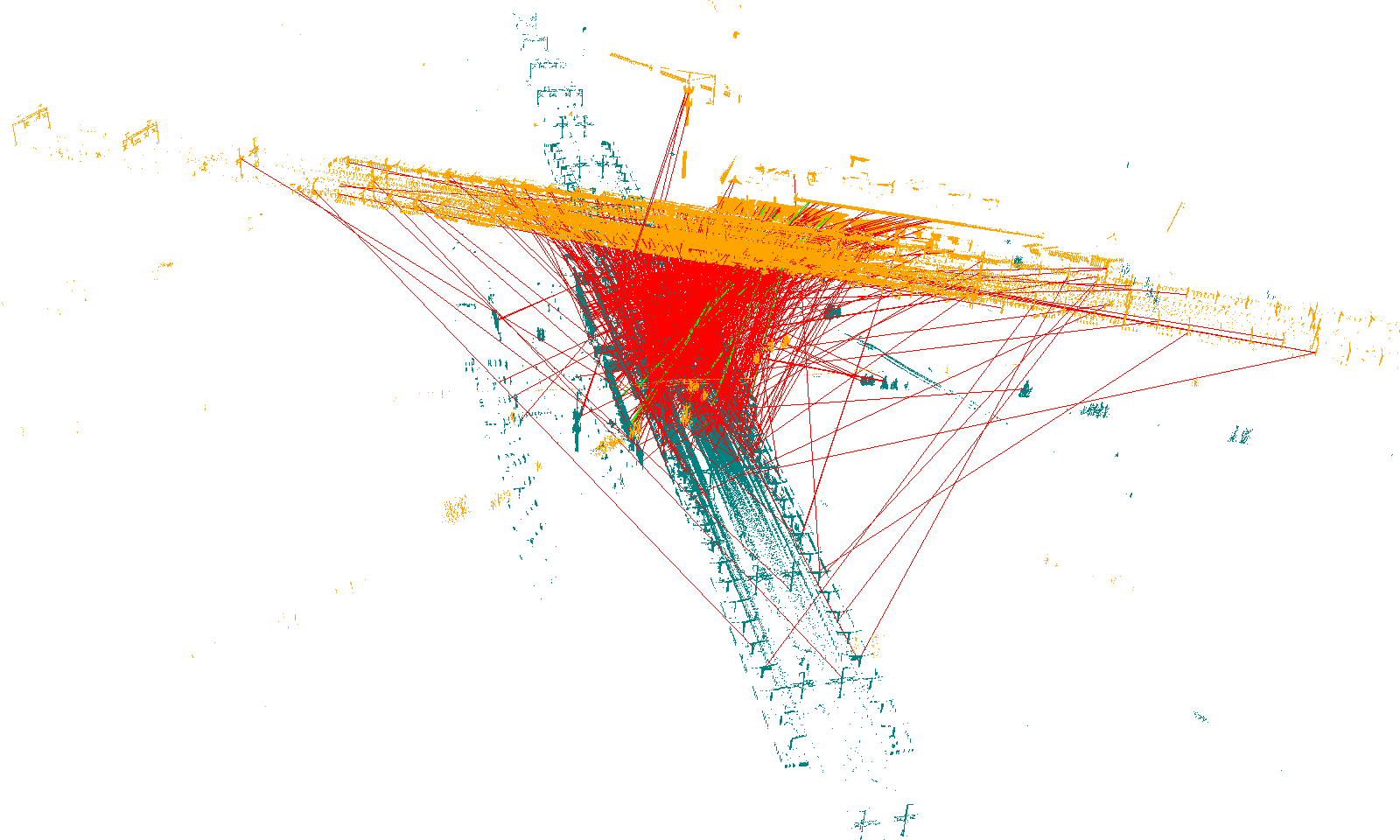}}
    \end{minipage}   
 &  \begin{minipage}[b]{0.23\textwidth}
    \centering
    \raisebox{-.1\height}{\includegraphics[width=\linewidth]{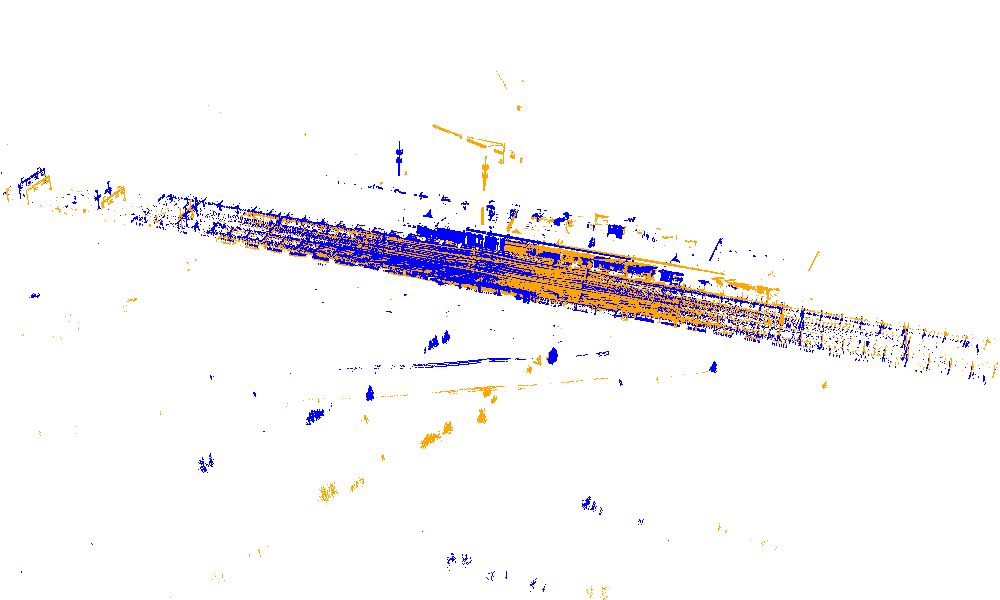}}
    \end{minipage}
 &  \begin{minipage}[b]{0.23\textwidth}
    \centering
    \raisebox{-.1\height}{\includegraphics[width=\linewidth]{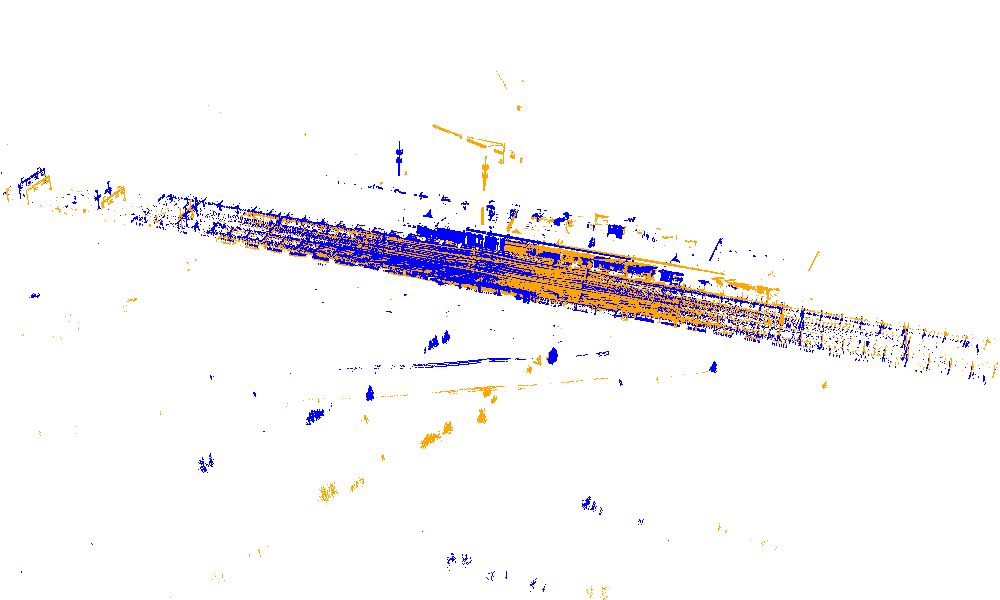}}
    \end{minipage}
 &  \begin{minipage}[b]{0.23\textwidth}
    \centering
    \raisebox{-.1\height}{\includegraphics[width=\linewidth]{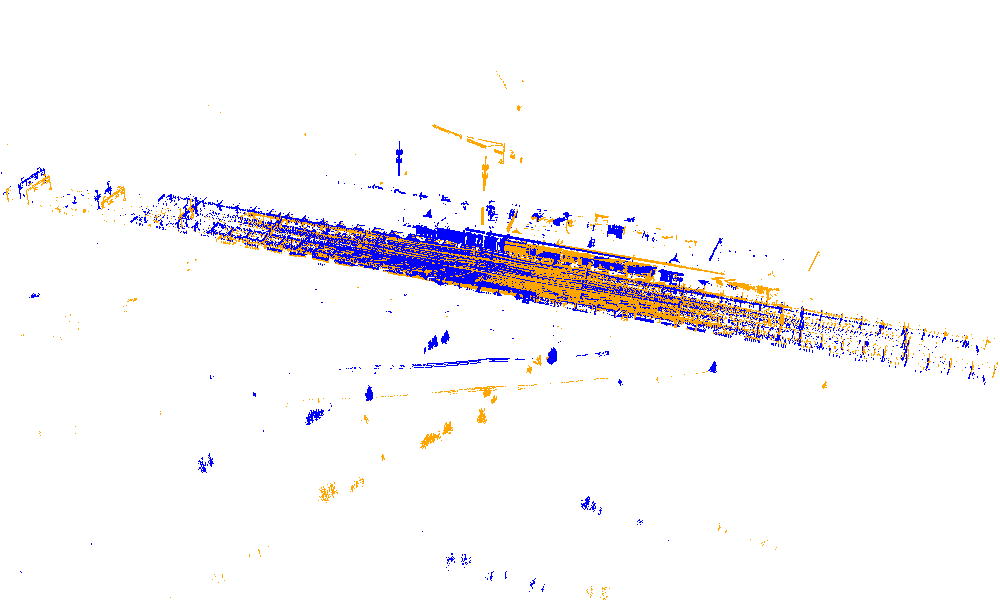}}
    \end{minipage}
 \\ \footnotesize(a) Initial
 &  \footnotesize(b) FMP+BnB
 &  \footnotesize(c) BnB
 &  \footnotesize(d) RANSAC
 \\ $N = 6013$
 & $T=\SI{2.040}{s}$
 & $T=\SI{4.553}{s}$
 & $T=\SI{1.131}{s}$
 \\ $\eta = 99.60\%$
 &  $RE=0.058\degree$
 &  $RE=0.058\degree$
 &  $RE=0.021\degree$
 \\ ~
 &  $TE=\SI{50.809}{m}$
 &  $TE=\SI{50.809}{m}$
 &  $TE=\SI{50.795}{m}$

 \\ 
    \begin{minipage}[b]{0.23\textwidth}
    \centering
    \raisebox{-.1\height}{\includegraphics[width=\linewidth]{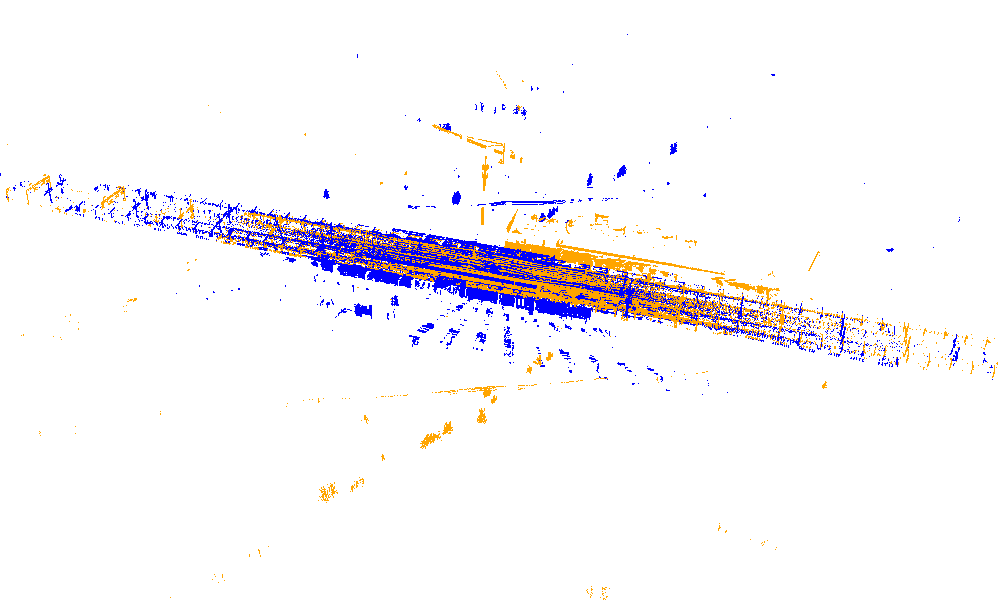}}
    \end{minipage}
 &  \begin{minipage}[b]{0.23\textwidth}
    \centering
    \raisebox{-.1\height}{\includegraphics[width=\linewidth]{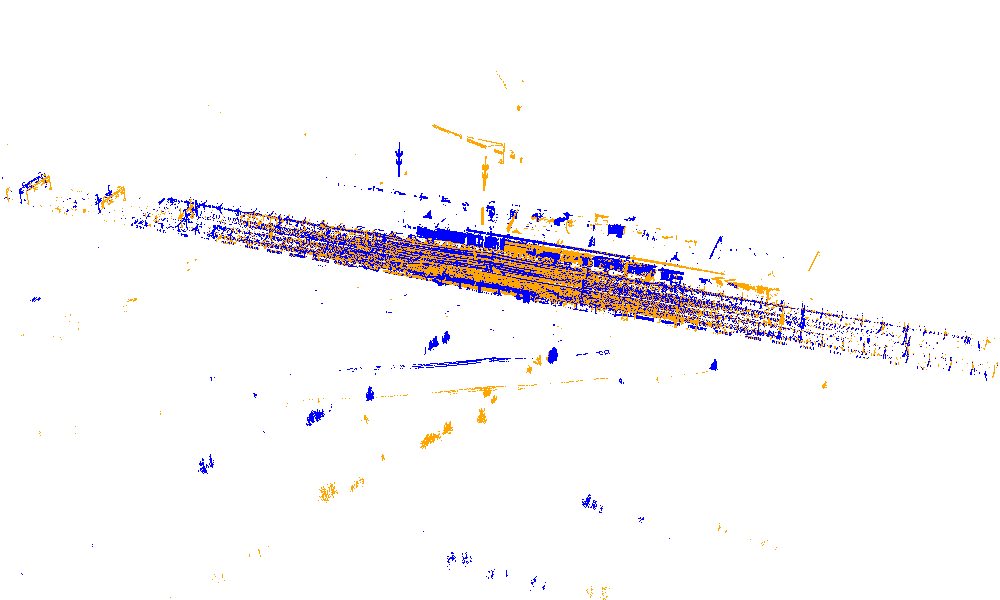}}
    \end{minipage}
 &  \begin{minipage}[b]{0.23\textwidth}
    \centering
    \raisebox{-.1\height}{\includegraphics[width=\linewidth]{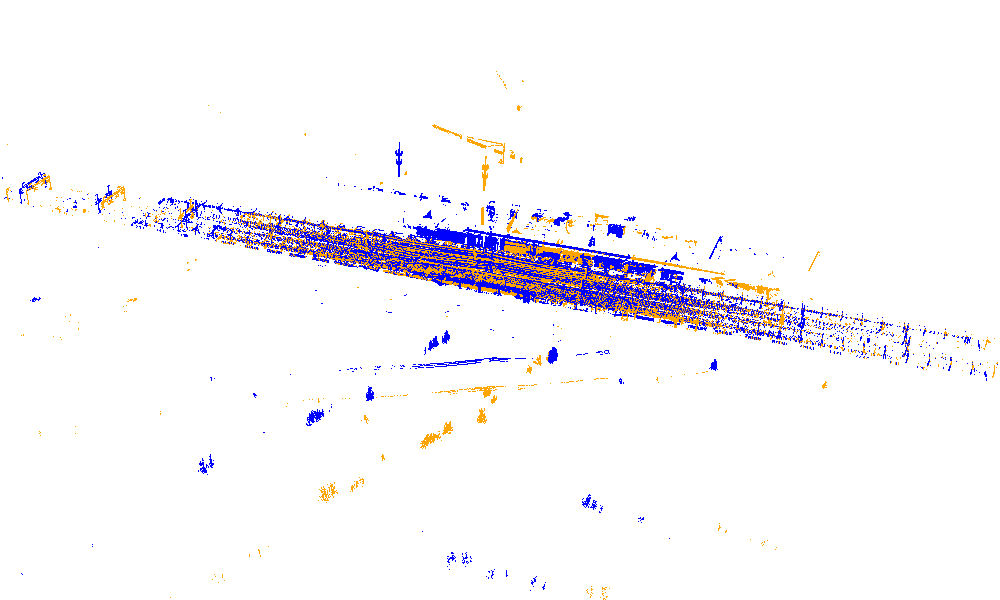}}
    \end{minipage}
 &  \begin{minipage}[b]{0.23\textwidth}
    \centering
    \raisebox{-.1\height}{\includegraphics[width=\linewidth]{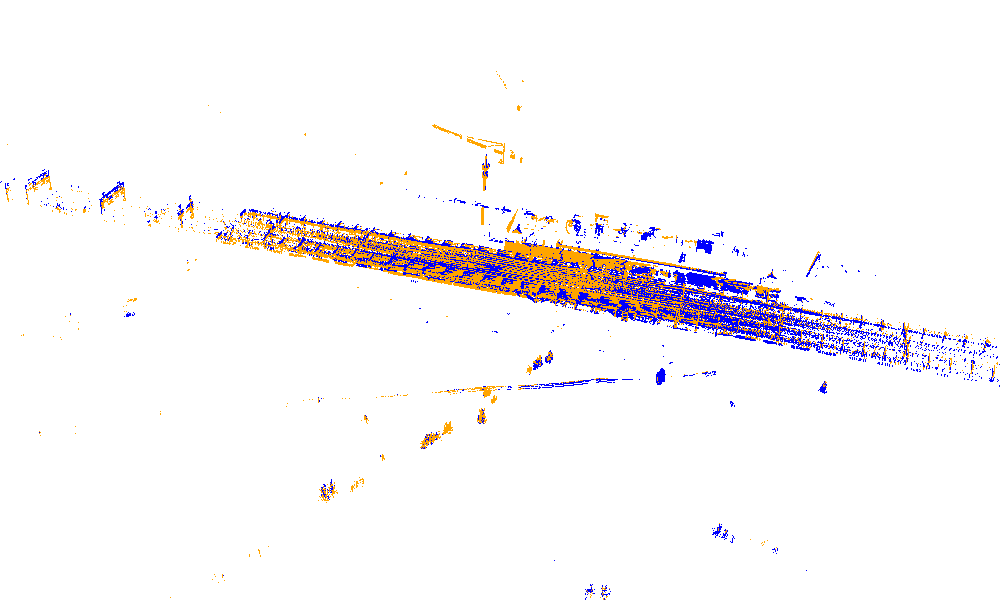}}
    \end{minipage}
 \\ \footnotesize(e) FGR
 &  \footnotesize(f) GTA
 &  \footnotesize(g) GROR
 &  \footnotesize(h) Ours
 \\ $T=\SI{0.026}{s}$
 &  $T=\SI{0.443}{s}$
 &  $T=\SI{0.107}{s}$
 &  $T=\SI{0.043}{s}$
 \\ $RE=179.943\degree$
 &  $RE=0.561\degree$
 &  $RE=0.582\degree$
 &  $RE=0.047\degree$
 \\ $TE=\SI{47.872}{m}$
 &  $TE=\SI{50.759}{m}$
 &  $TE=\SI{50.785}{m}$
 &  $TE=\SI{0.034}{m}$

 \end{tabular}
 \caption{Qualitative and quantitative registration results for scan pair \textit{Railway}$_1$ on the WHU-TLS dataset. (a) Initial, (b) FMP+BnB, (c) BnB, (d) RANSAC, (e) FGR, (f) GTA, (g) GROR, and (h) Ours. The source, target, and aligned point clouds are green, orange, and blue, respectively.}
 \label{WHU_visualization}
\end{figure*}

\begin{figure*}
 \centering
 \begin{tabular}{c c c}
    \multicolumn{3}{c}{
    \begin{minipage}{0.5\textwidth}
    \centering
    \raisebox{-.1\height}{\includegraphics[width=\linewidth]{Pics/legend.png}}
    \end{minipage} } \\
    \begin{minipage}[b]{0.3\textwidth}
    \centering
    \raisebox{-.1\height}{\includegraphics[width=\linewidth]{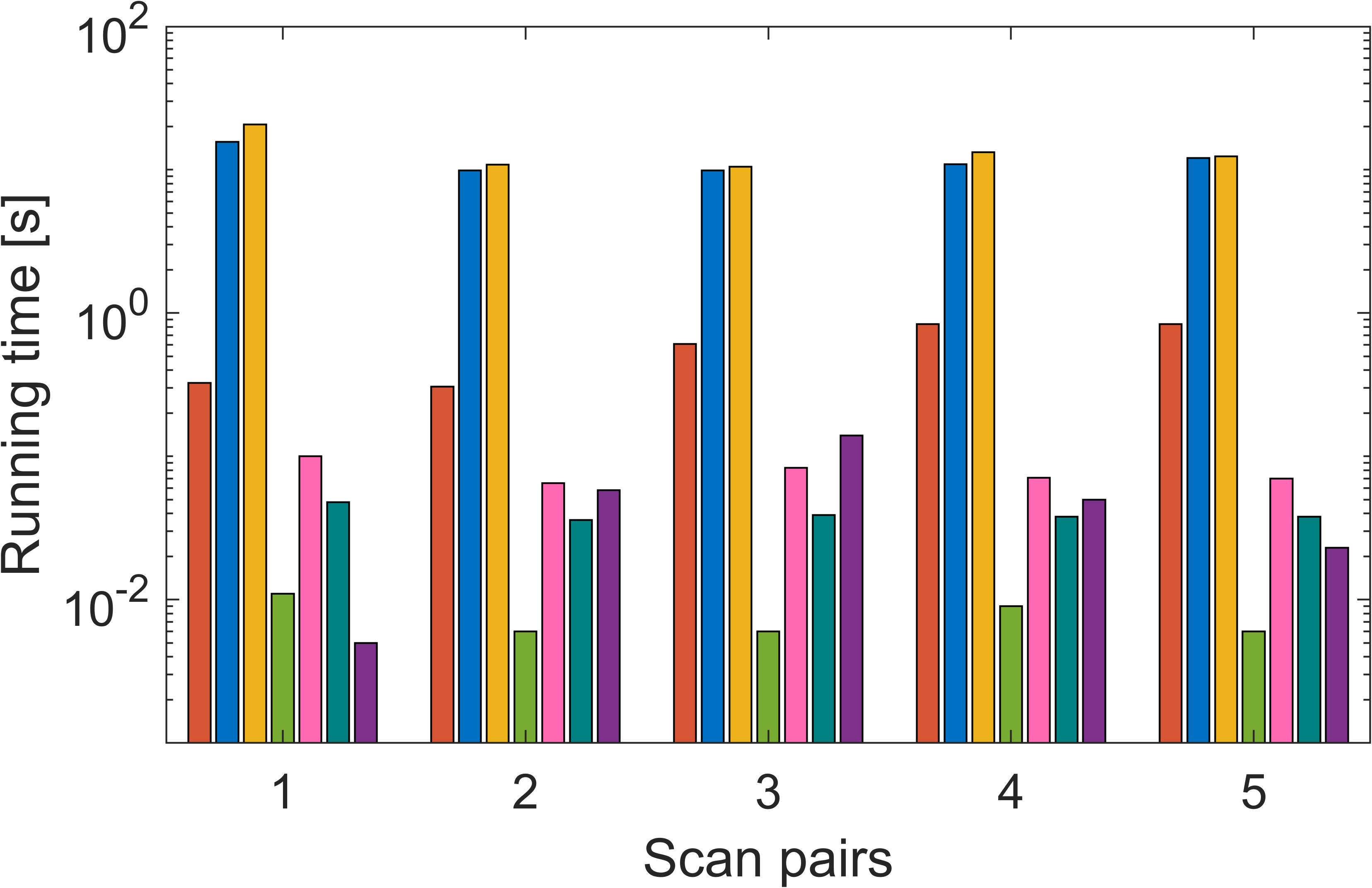}}
    \end{minipage}
  & \begin{minipage}[b]{0.3\textwidth}
    \centering
    \raisebox{-.1\height}{\includegraphics[width=\linewidth]{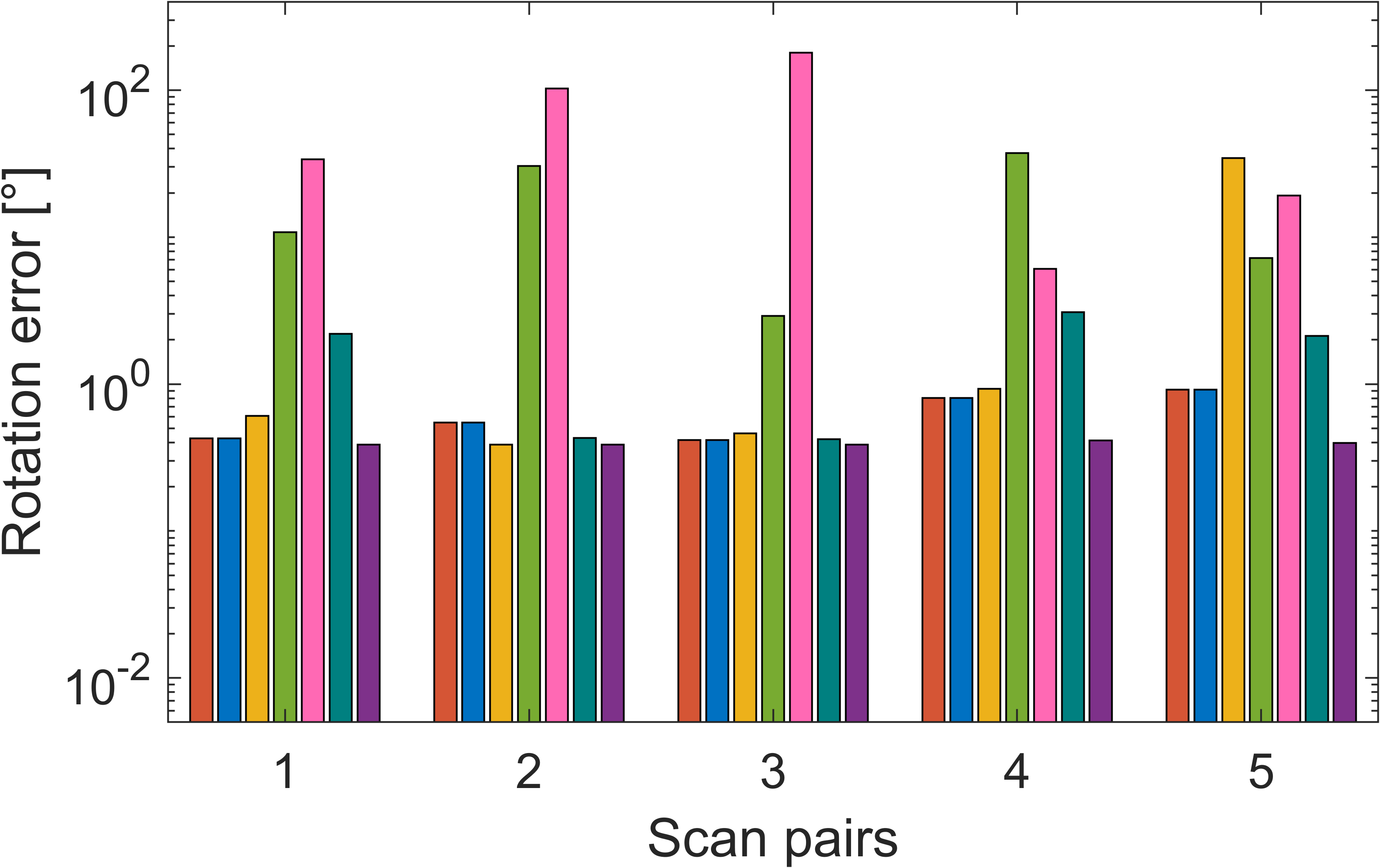}}
    \end{minipage}
  & \begin{minipage}[b]{0.3\textwidth}
    \centering
    \raisebox{-.1\height}{\includegraphics[width=\linewidth]{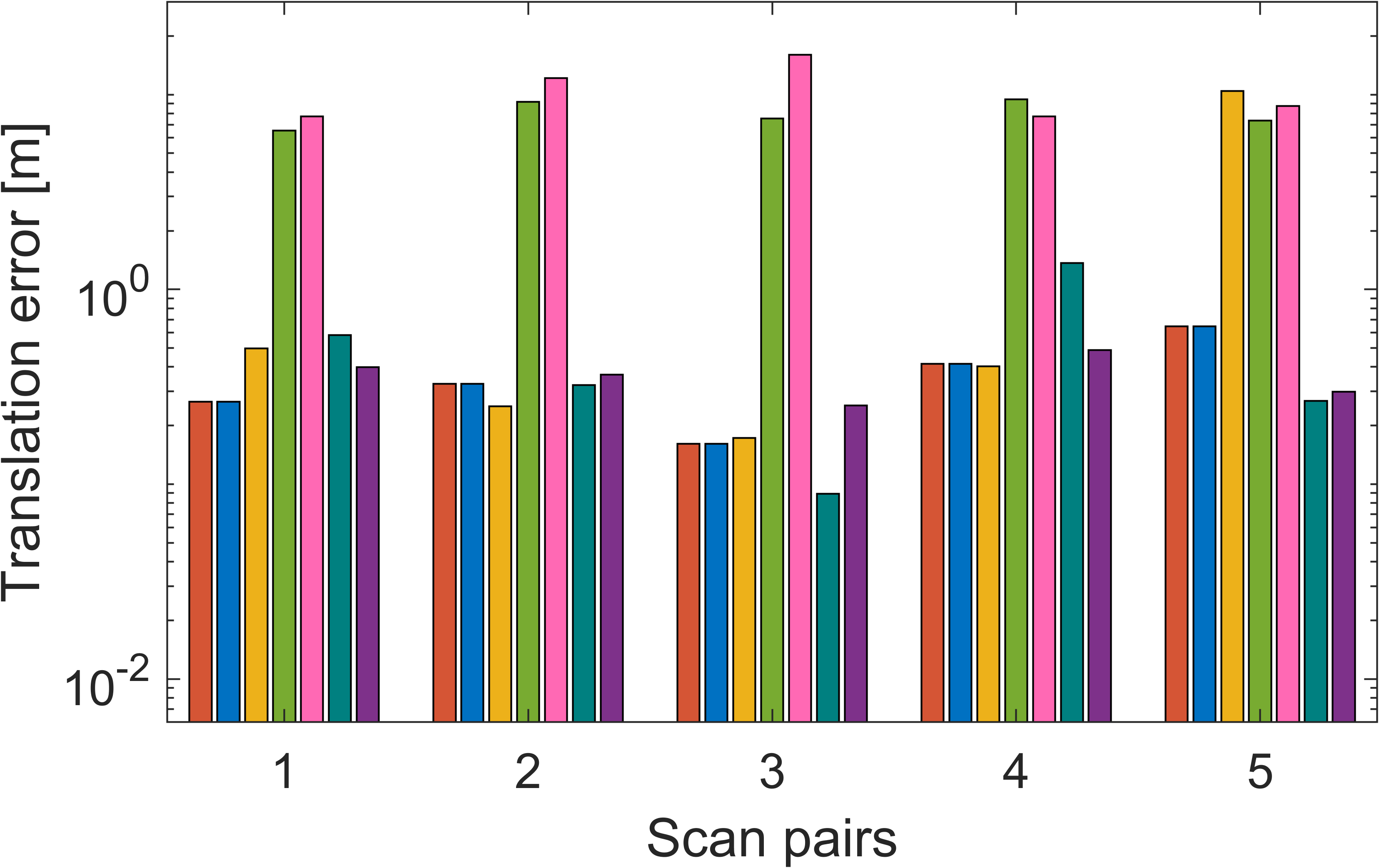}}
    \end{minipage}
 \\ \footnotesize(a) Running time
  & \footnotesize(b) Rotation error
  & \footnotesize(c) Translation error
 \end{tabular}
 \caption{(a) Running time, (b) Rotation error, and (c) Translation error of all registration methods on the A9 dataset.}
 \label{A9_dataset}
\end{figure*}

\begin{figure}\footnotesize
 \centering
 \begin{tabular}{c c}
    \begin{minipage}[b]{0.45\columnwidth}
    \centering
    \raisebox{-.1\height}{\includegraphics[width=\linewidth]{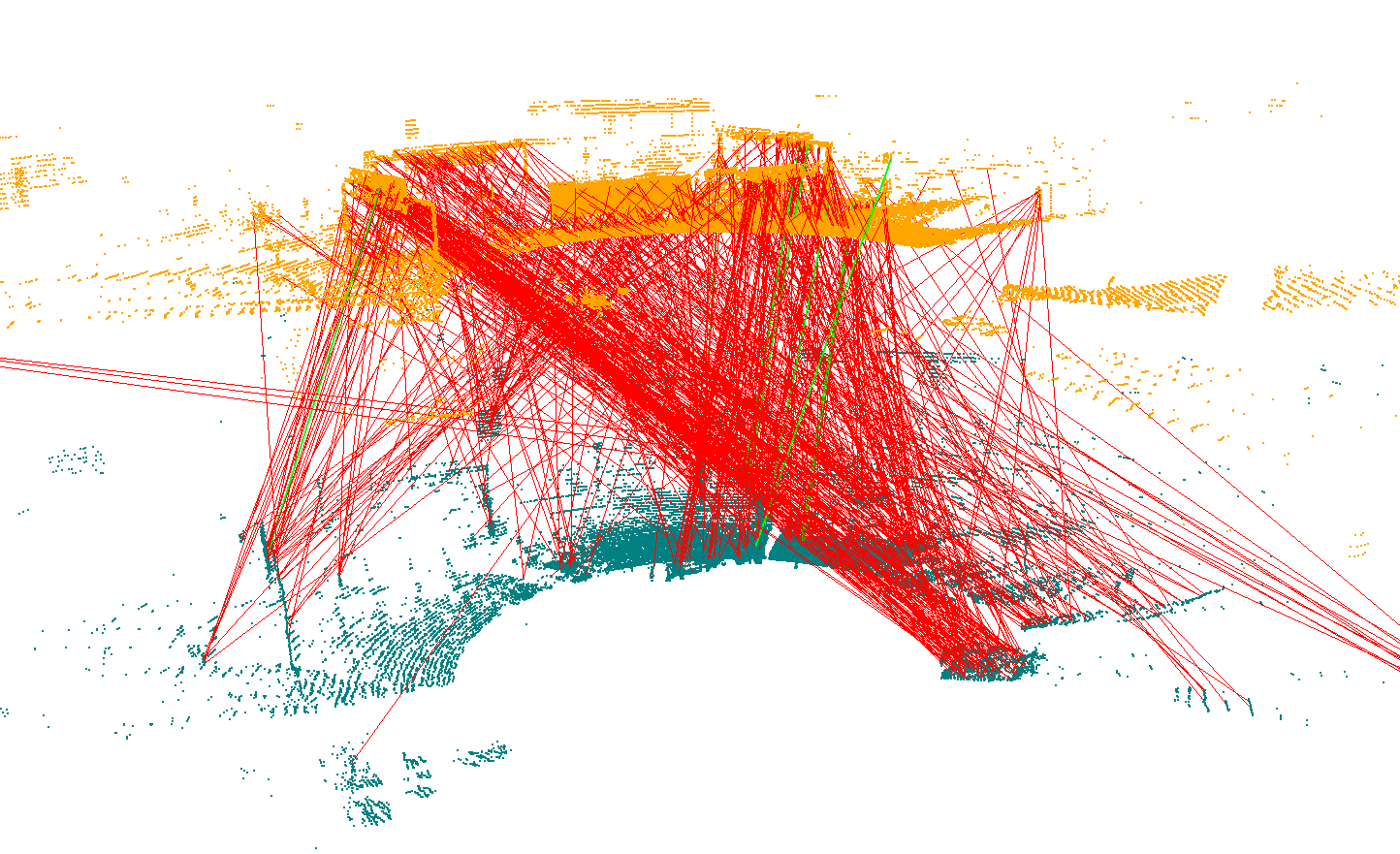}}
    \end{minipage}   
 &  \begin{minipage}[b]{0.45\columnwidth}
    \centering
    \raisebox{-.1\height}{\includegraphics[width=\linewidth]{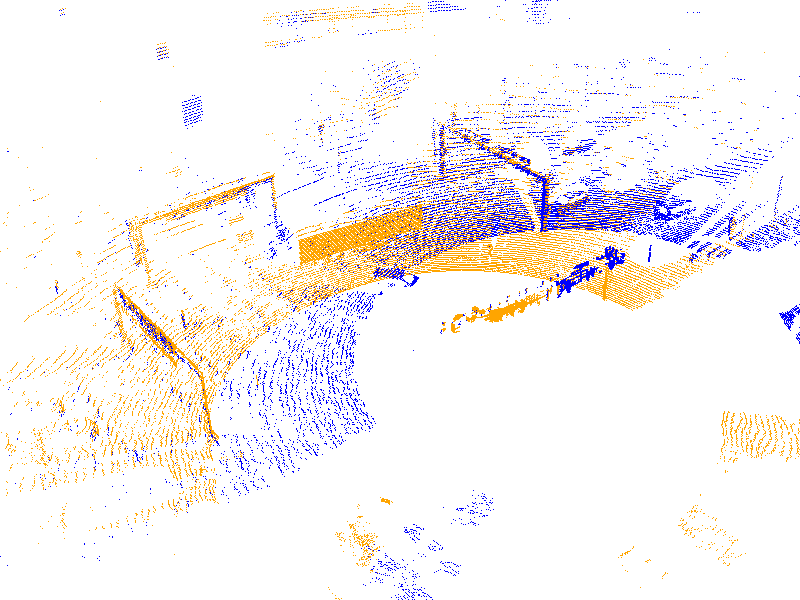}}
    \end{minipage}
 \\ \footnotesize(a) Initial
  &  \footnotesize(b) After registration
 \end{tabular}
 \caption{Qualitative registration results of scan pair $3$ from A9 dataset. (a) Initial, (b) After registration.}
 \label{A9_visualization}
\end{figure}

To evaluate the registration performance on the real-world data, we use three challenging large-scale datasets, including ETH dataset~\cite{theiler2014keypoint}, WHU-TLS dataset~\cite{dong2020registration}, and A9 dataset~\cite{cress2022a9}. Following the preparation strategy in \cite{9373914,10091912}, the first step is to downsample the original point cloud using the voxel grid algorithm\cite{rusu20113d}. The second step is to extract ISS~\cite{zhong2009intrinsic} keypoints and calculate FPFH~\cite{rusu2009fast} descriptors for each keypoint. The third step is to establish the set of putative correspondences $\mathcal{C}$ through the K-nearest neighbor search\cite{lowe2004distinctive}. 

\subsubsection{ETH dataset experiments}
The ETH dataset is a large-scale terrestrial LiDAR dataset, which contains $5$ scenes: \textit{arch}, \textit{courtyard}, \textit{facade}, \textit{office}, and \textit{trees}. In each challenging scene, several scans are captured from different positions thus suffering from low overlap, self-similar structures, and occlusion. As visualized in Fig.~\ref{ETH_visualization}, the outlier rates for various scan pairs in the ETH dataset are notably high, ranging approximately from $86\%$ to $99\%$. We employ adjacent scans as the input of pair-wise registration in each scene. Following~\cite{yan2022new}, the downsample resolution is set to $\SI{0.1}{m}$, as well as the inlier threshold. The detailed information for each scan pair is given in Appendix C, including the number of points, number of keypoints, number of correspondences, and outlier rate. Notably, the gravity direction employed for the proposed method is set to $[0,0,-1]^T$. Fig.~\ref{ETH_dataset} shows the alignment results for a total of $32$ scan pairs and Table~\ref{average} presents the average running times and average errors.

As can be seen from the results, only FMP+BnB, BnB, GROR, and Ours successfully register all scan pairs with relatively low errors. Notably, with the exception of \textit{Office}, our method demonstrates the lowest average time cost in most scenes. Besides, Ours achieves the lowest average rotation and translation errors in \textit{Arch} and \textit{courtyard}, as well as the lowest average translation errors in \textit{facade}, \textit{office}, and \textit{trees}. Although FMP+BnB and BnB perform admirably across all scenes, they are computationally expensive compared to other methods. RANSAC, owing to its non-deterministic nature, occasionally produces inaccurate results, as seen in the case of s4-s5 in \textit{Arch}.  Additionally, RANSAC becomes considerably time-consuming when dealing with outlier rates surpassing $99\%$, as evident in the registration of scan pairs s4-s5 ($\eta=99.77\%$) and s1-s3 ($\eta=99.71\%$) in \textit{Arch}. While FGR is highly efficient, it is susceptible to failure, particularly in scenes with high outlier rates. A total of $8$ scan pairs exhibit significant registration errors in the results obtained by FGR. Similar to RANSAC, GTA is efficient but also occasionally generates erroneous results due to its non-deterministic nature. This is particularly evident in \textit{Arch}, where GTA exhibits high registration errors, including a $31.12\degree$ average rotation error and a $6.747\SI{}{m}$ average translation error. GROR is the second-best approach, which has high accuracy and high efficiency. Overall, benefiting from the proposed transformation decoupling strategy, our method shows superior registration efficiency and accuracy compared to SOTA methods.   

\subsubsection{WHU-TLS dataset experiments}

The WHU-TLS dataset is another large-scale terrestrial LiDAR dataset. To ensure the generality of the registration algorithm, we randomly select two scan pairs each from the \textit{Railway}, \textit{Mountain}, \textit{Park}, \textit{Campus}, \textit{Heritage} and \textit{Tunnel} scenes for the registration experiment. Following~\cite{yan2022new}, the downsample resolution is set to $\SI{0.2}{m}$, as well as the inlier threshold. Detailed information about the selected scan pairs is given in Appendix C. The outlier rates for these chosen pairs range from approximately $89\%$ to $99\%$, with the correspondence numbers spanning from around $1k$ to $20k$. Besides, the gravity direction remains fixed at $[0,0,-1]^T$. The registration results for a total of $12$ scan pairs, including running time, rotation error, and translation error, are shown in Fig.~\ref{WHU_dataset}. In addition, the quantitative average results are presented in Table~\ref{average}.

Observing the average rotation and translation errors, it is evident that only our method successfully registers all scan pairs in the WHU-TLS dataset. Despite their robust performance in the ETH dataset, both FMP+BnB and BnB, as well as GROR, exhibit significant translation errors when registering scan pairs \textit{Railway}$_1$, \textit{Railway}$_2$, and \textit{Tunnel}$_2$. This phenomenon can be attributed to the abundance of self-similar structures that frequently appear in these railway and tunnel scenes. As shown in Fig.~\ref{WHU_visualization}, we provide qualitative and quantitative registration results for scan pair \textit{Railway}$_1$ to further illustrate this phenomenon. All methods, except Ours, converge to similar registration solutions (local minima), as their translation errors are nearly identical. This demonstrates the robustness of our method to the abundant outliers generated by practical self-similar structures. Similar to the findings in the ETH dataset, Ours, along with FMP+BnB, BnB, and GROR, achieve good registration results for scan pairs other than those mentioned above. The performance of FGR and GTA is unstable, as they occasionally yield unsatisfactory results, despite their high efficiency. Besides, RANSAC has high computational costs second only to FMP+BnB and BnB in most scenes. In contrast, the proposed method not only achieves excellent accuracy (with the lowest average rotation and translation errors) but also achieves the second-highest computational efficiency, surpassed only by FGR.

\subsubsection{A9 dataset experiments}

To validate the performance of the proposed method in autonomous driving scenarios, we employ the A9 dataset, which is gathered by multiple roadside sensors mounted on highway gantries. The purpose of this experiment is to align point clouds collected from two distinct LiDARs. We select $5$ scan pairs with a minimum temporal separation of $10$s as input for the experiment. The chosen point cloud pairs contain diverse moving objects (as can be seen in Fig.~\ref{A9_visualization}), including cars, buses, and pedestrians, resulting in the challenging registration task. Following the segmentation of ground points, we leverage non-ground points to establish putative correspondences. Nevertheless, the outlier rates for all scan pairs exceed $99\%$. The downsample resolution also is set to $\SI{0.1}{m}$, as well as the inlier threshold. Appendix C provides detailed information about the selected scan pairs. Since both LiDARs are parallel to the ground, the gravity direction employed remains $[0,0,-1]^T$. The registration results of all methods are shown in Fig.~\ref{A9_dataset} and Table~\ref{average}. Qualitative registration results of the proposed method for scan pair $3$ are provided in Fig.~\ref{A9_visualization}.

The experimental results demonstrate that the proposed method successfully aligns all scan pairs and achieves the best accuracy. FMP+BnB and BnB also deliver accurate results for all scan pairs, but are computationally expensive as well. In this dataset, RANSAC is the most time-consuming due to the extremely high outlier rates. Similar to the former two datasets, the performance of RANSAC, FGR, and GTA is unstable, resulting in high mean errors. GROR is efficient but has a relatively high average rotation error. In contrast, our method has comparable efficiency to GROR and is an order of magnitude faster than FMP+BnB in this dataset. 

\section{Conclusion}\label{Conclusion}
In this paper, we present a novel transformation decoupling strategy based on screw theory for deterministic point cloud registration with gravity prior. By leveraging screw theory to reformulate the registration problem, we successfully decouple the 4-DOF registration problem into three sub-problems with 1-DOF, 2-DOF, and 1-DOF, respectively. This transformation decoupling strategy significantly enhances registration efficiency. Furthermore, we propose an efficient and deterministic three-stage method to tackle these sub-problems, including interval stabbing, BnB, and global voting. In particular, the proposed method can be extended to efficiently solve the SPCR problem. Overall, extensive experiments on both synthetic and real-world data illustrate the SOTA performance of our method in terms of efficiency and robustness.


%




\section*{Acknowledgment}
This research was supported by the Federal Ministry for Digital and Transport, Germany as part of the Providentia++ research project (01MM19008A).

\ifCLASSOPTIONcaptionsoff
  \newpage
\fi



%
\bibliographystyle{IEEEtran}
\bibliography{Literature}

%



\end{document}